\documentclass{article}

\PassOptionsToPackage{numbers, sort&compress}{natbib}

\usepackage[preprint]{neurips_2026}

\usepackage[utf8]{inputenc} % allow utf-8 input
\usepackage[T1]{fontenc}    % use 8-bit T1 fonts
\usepackage{hyperref}       % hyperlinks
\usepackage{url}            % simple URL typesetting
\usepackage{booktabs}       % professional-quality tables
\usepackage{amsfonts}       % blackboard math symbols
\usepackage{nicefrac}       % compact symbols for 1/2, etc.
\usepackage{microtype}      % microtypography
\usepackage{xcolor}         % colors
\usepackage{amsmath}
\usepackage{caption}
\usepackage{adjustbox}
\usepackage{graphicx}
\usepackage{multirow}
\usepackage{makecell}
\usepackage[table]{xcolor}
\usepackage{subcaption}
\usepackage{tikz}
\usepackage{arydshln}
\usepackage{dashrule}
\usepackage{pifont}
\usepackage{wrapfig}
\newtheorem{theorem}{Theorem}[section]

\newtheorem{proposition}[theorem]{Proposition}

\newtheorem{assumption}[theorem]{Assumption}
\definecolor{lightblue}{RGB}{230,242,255}
\definecolor{teaserblue}{RGB}{25,90,180}

\title{PermuQuant: Lowering Per-Group Quantization Error by Reordering Channels for Diffusion Models}

\author{
  Yongsen Cheng$^{1}$\thanks{Equal contribution.},\enspace 
  Kai Liu$^{1}$\footnotemark[1],\enspace 
  Kaiwen Tao$^{1}$,\enspace 
  Junxian Li$^{1}$,\enspace 
  Zhixin Wang$^{2}$,\enspace 
  Zhikai Chen$^{2}$,\enspace \\
  \textbf{Renjing Pei$^{2}$,}\enspace 
  \textbf{Yulun Zhang}$^{1}$\thanks{Corresponding author: Yulun Zhang, yulun100@gmail.com} \\
  \textsuperscript{1}Shanghai Jiao Tong University,\enspace
  \textsuperscript{2}Huawei Noah's Ark Lab
}

\begin{document}

\maketitle

\begin{center}
\vspace{-3mm}

\setlength{\tabcolsep}{2pt}
\renewcommand{\arraystretch}{0.95}

\renewcommand\cellalign{c}
\renewcommand\cellgape{}

\newcommand{\teaseritem}[2]{%
    \begin{tabular}{@{}c@{}}
        {\scriptsize\renewcommand{\arraystretch}{0.75}\makecell{#1}}\\[0.0mm]
        \includegraphics[width=0.24\textwidth]{#2}
    \end{tabular}
}

\newcommand{\teaserprompt}[1]{%
    {\scriptsize
    \begin{minipage}{0.8\textwidth}
        \centering
        \setlength{\baselineskip}{1mm}
        \textit{Prompt:} #1
    \end{minipage}}
}

\begin{tabular}
{@{}c@{\hspace{0mm}}c@{\hspace{0mm}}c@{\hspace{0mm}}c@{}}
\vspace{-2mm}
    \teaseritem{FLUX.1-dev BF16\\(50 Steps)\\DiT Mem.: 22.2 GiB\\E2E Lat.: 81.4 s}{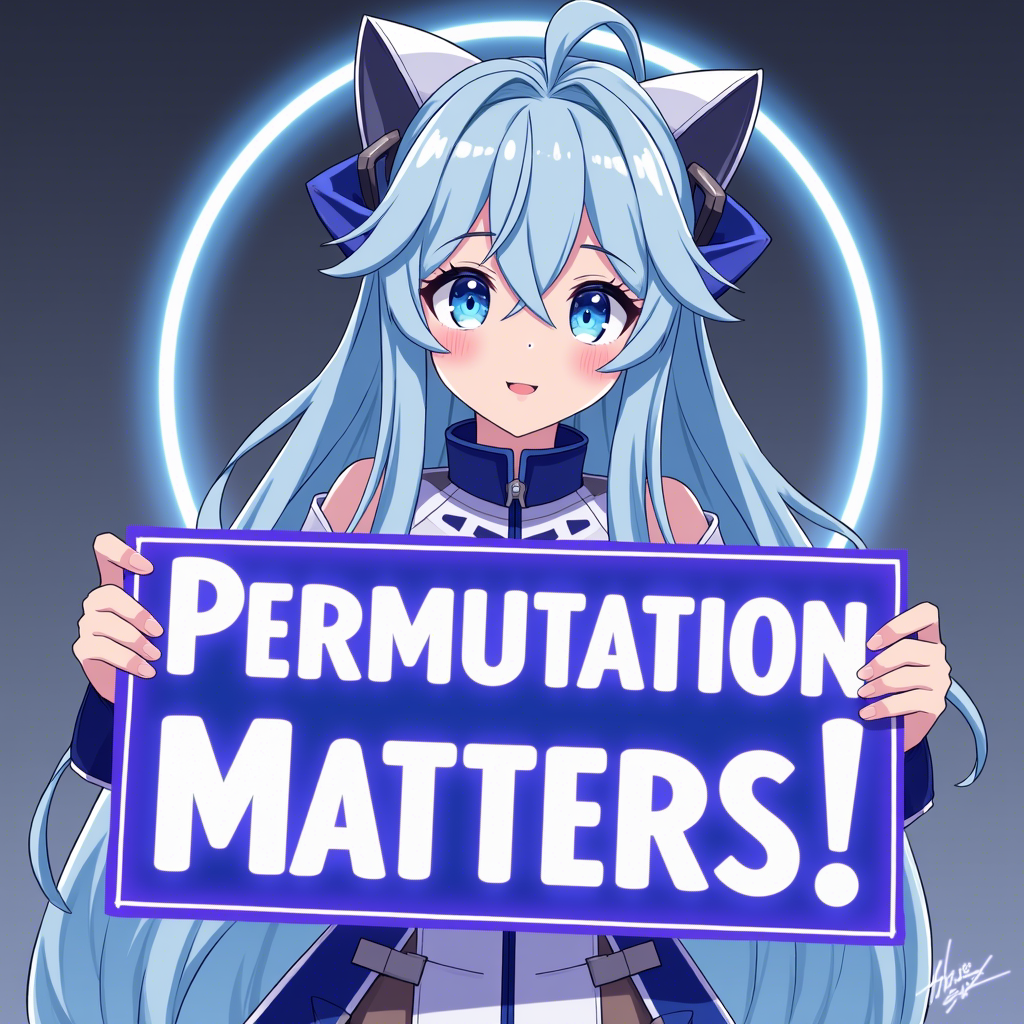} &
    \teaseritem{NF4 (W4A16)\\LPIPS: 0.263\\DiT Mem.: 5.6 GiB (4.0$\times$ Less)\\E2E Lat.: 23.3 s (3.8$\times$ Faster)}{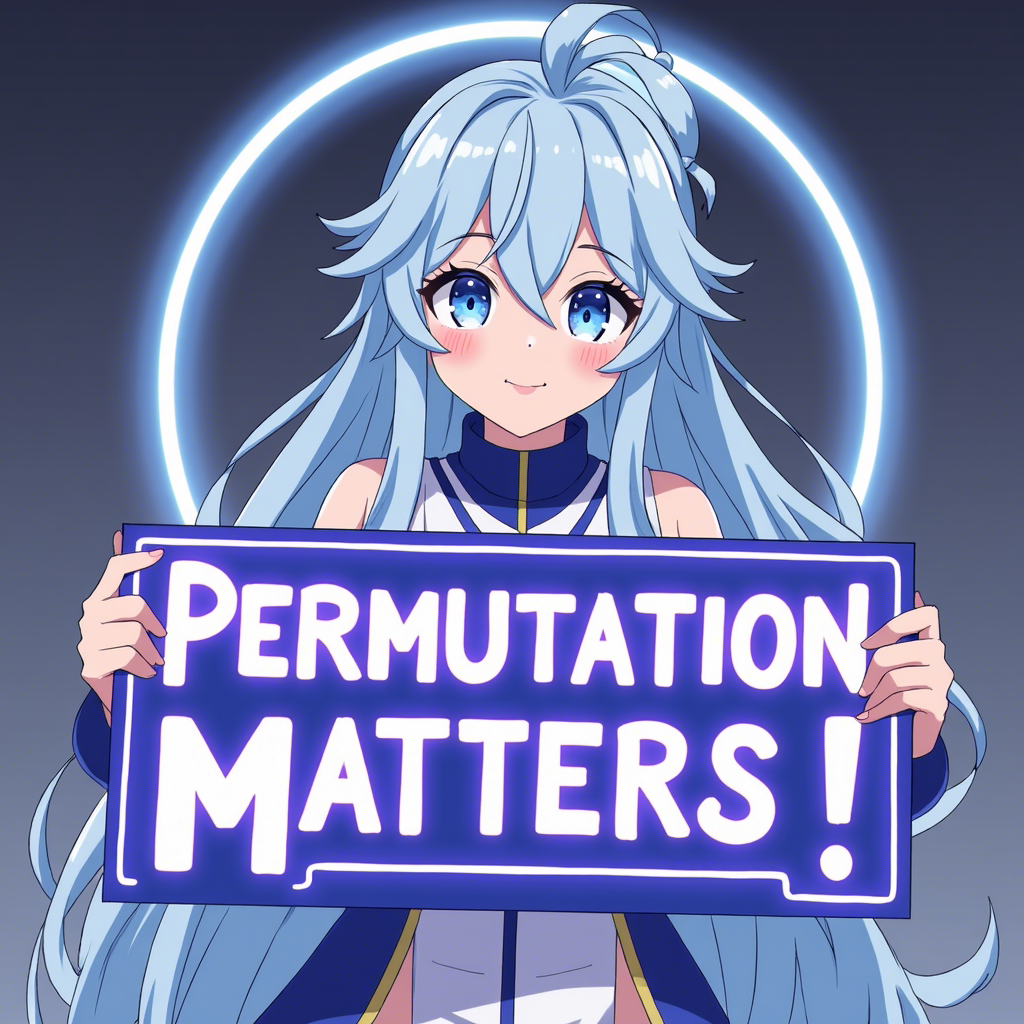} &
    \teaseritem{NVFP4 (W4A4)\\LPIPS: 0.596\\DiT Mem.: 6.3 GiB (3.5$\times$ Less)\\E2E Lat.: 13.1 s (6.2$\times$ Faster)}{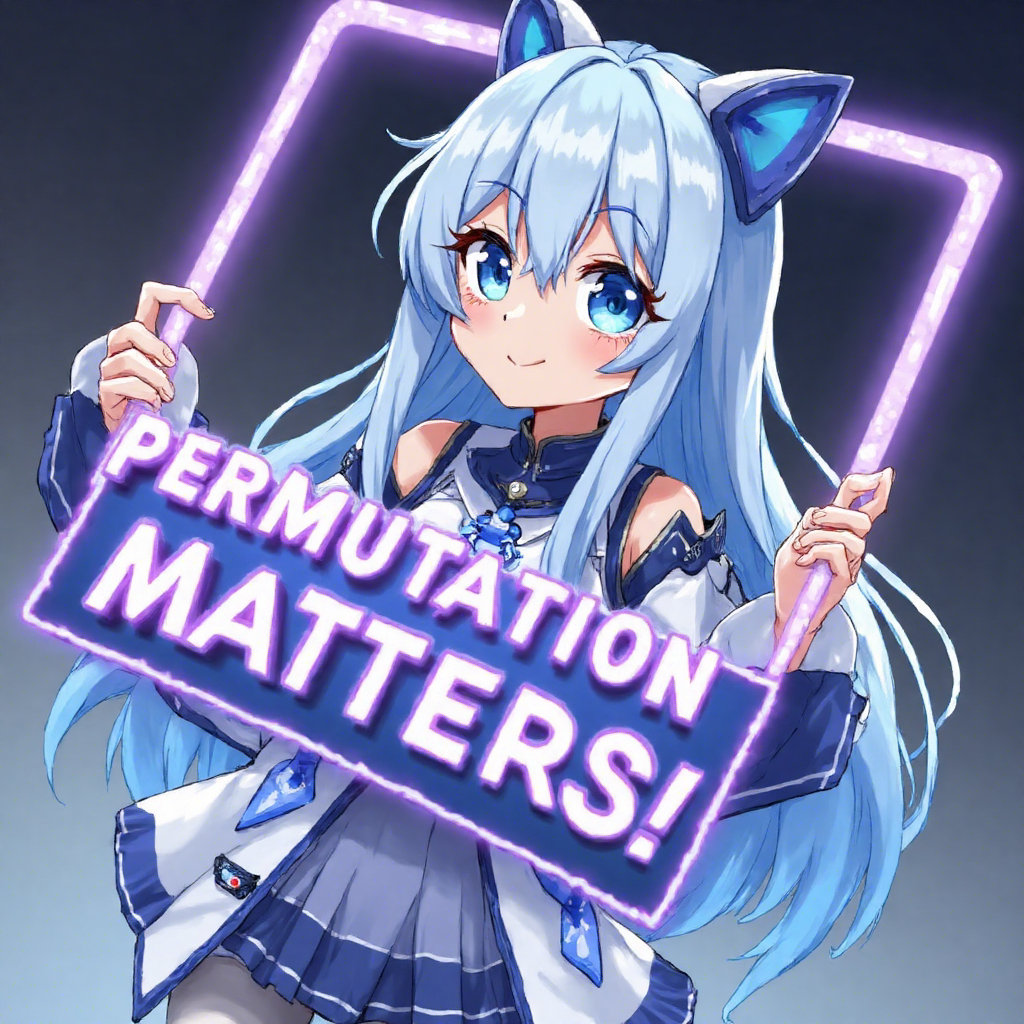} &
    \teaseritem{PermuQuant NVFP4 (W4A4)\\LPIPS: 0.186\\DiT Mem.: \textcolor{teaserblue}{6.3 GiB (3.5$\times$ Less)}\\E2E Lat.: \textcolor{teaserblue}{13.0 s (6.3$\times$ Faster)}}{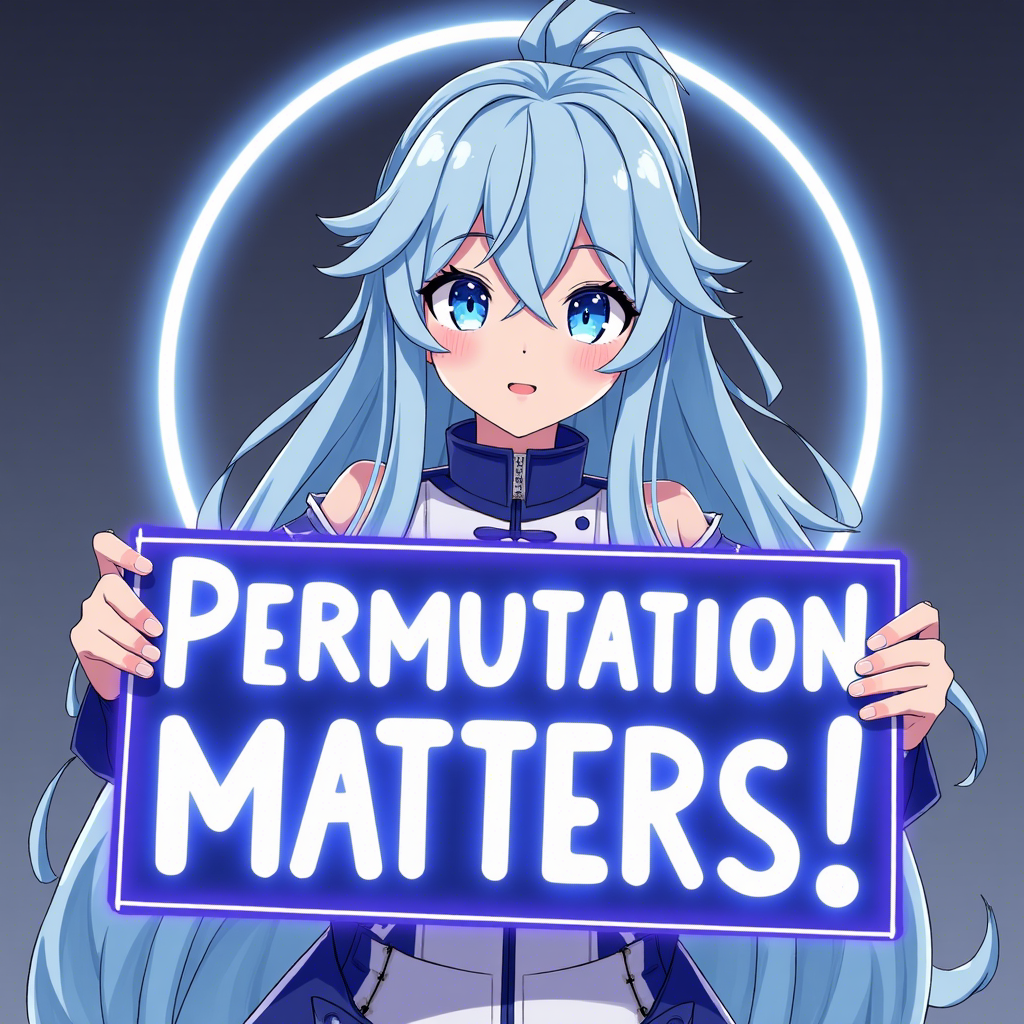}
\end{tabular}

\vspace{-1mm}

\teaserprompt{A beautiful anime-style girl, upper body, with long light-blue hair and blue eyes, wearing a white and blue outfit, holding a huge neon sign that says ``Permutation Matters!''}

\vspace{-3mm}
\noindent\hdashrule{1.00\textwidth}{0.3pt}{1.5pt}
\vspace{-5mm}

\begin{tabular}{@{}c@{\hspace{0mm}}c@{\hspace{0mm}}c@{\hspace{0mm}}c@{}}
    \teaseritem{FLUX.1-dev BF16\\(50 Steps)}{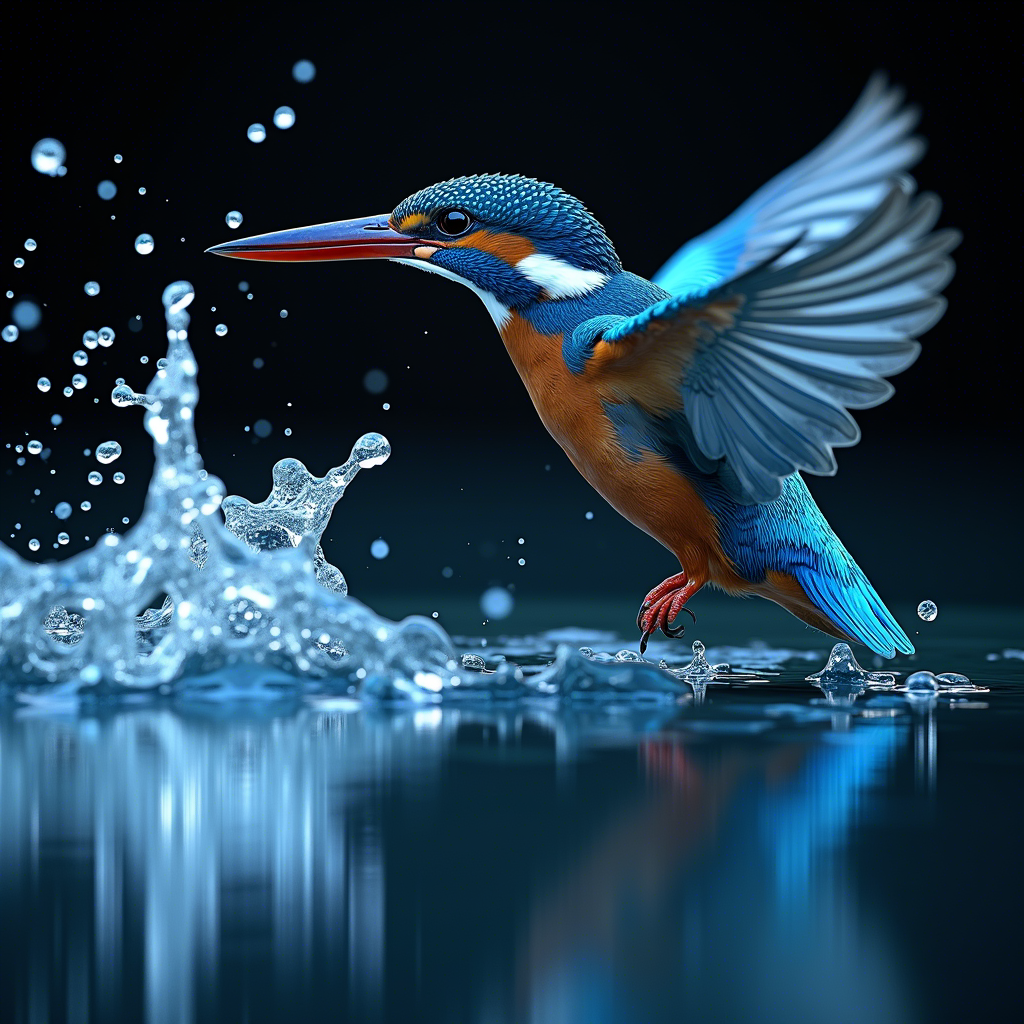} &
    \teaseritem{OmniQuant (W3A3)\\LPIPS: 0.673}{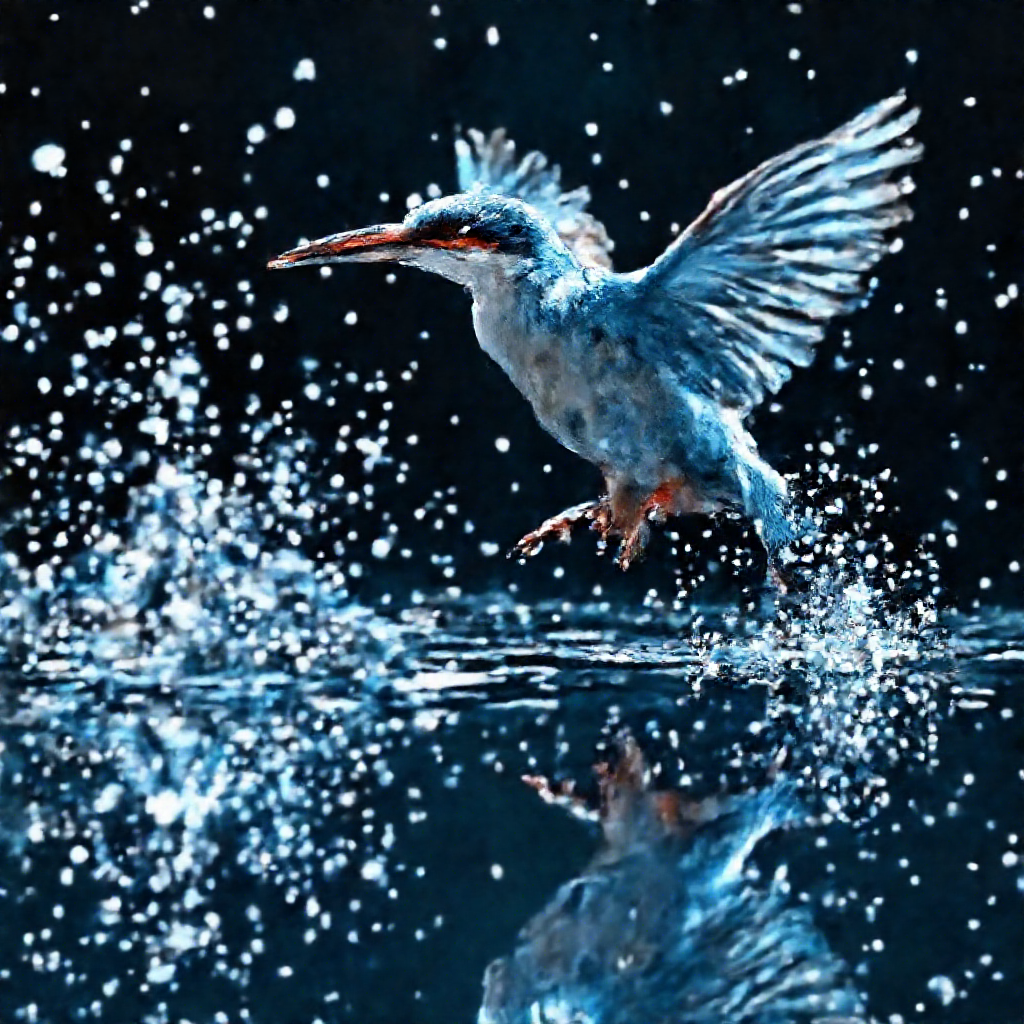} &
    \teaseritem{ConvRot (W3A3)\\LPIPS: 0.428}{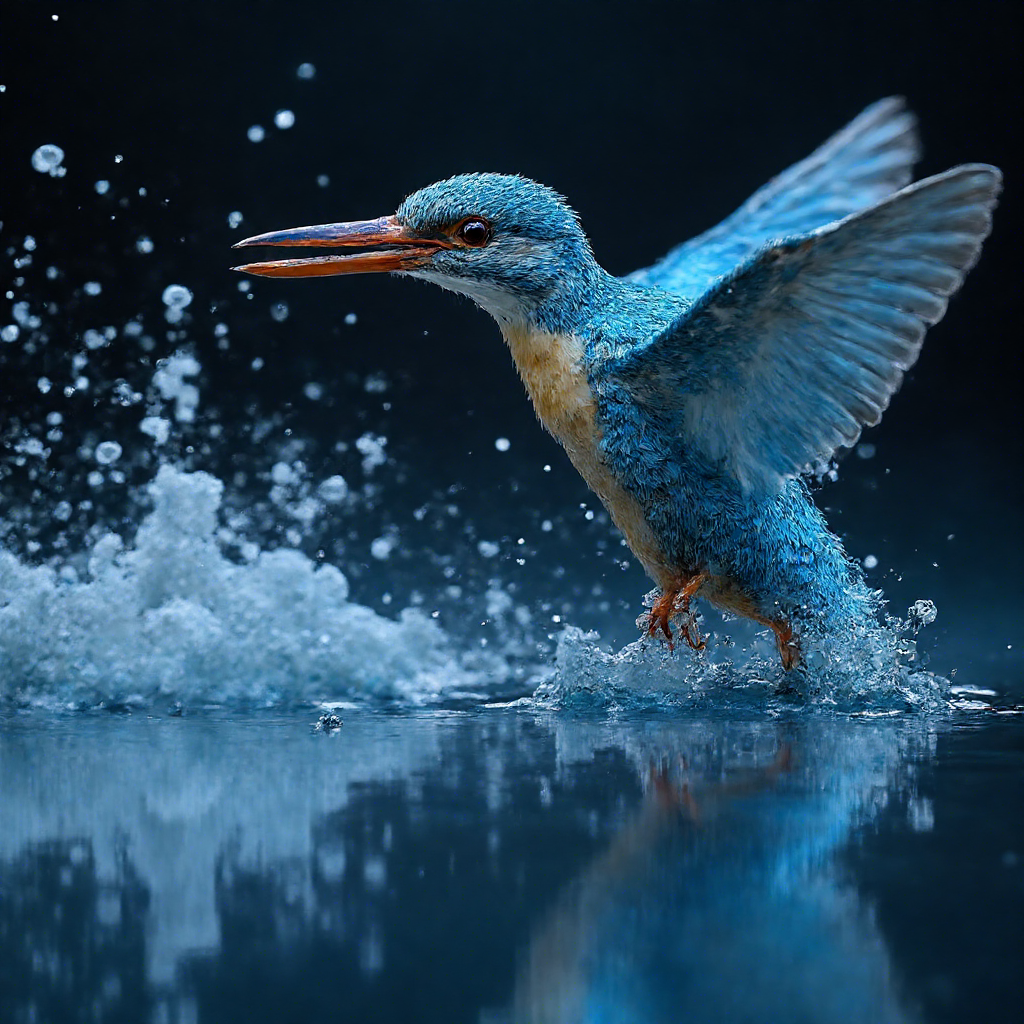} &
    \teaseritem{PermuQuant INT (W3A3)\\LPIPS: 0.327}{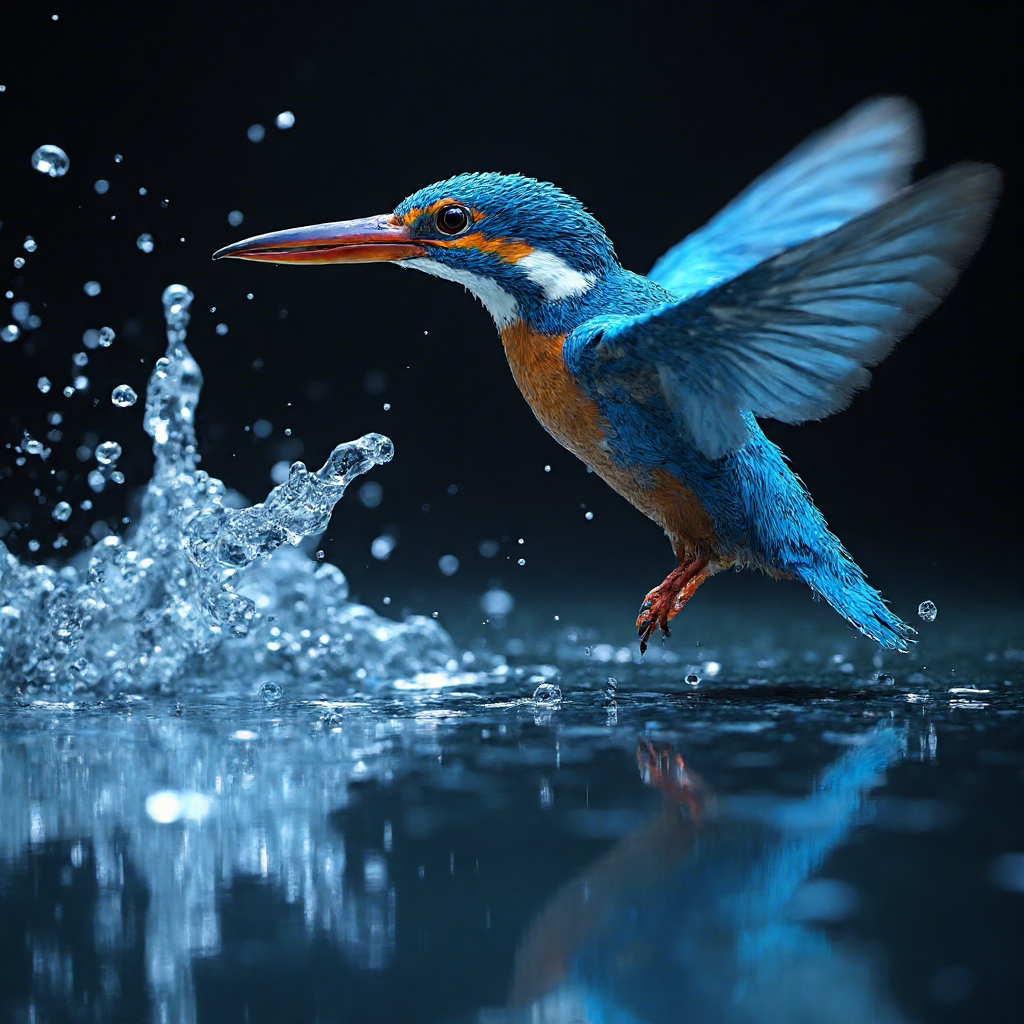}
\end{tabular}

\vspace{-2.8mm}

\teaserprompt{A kingfisher swooping to break digitised binary water, captures information in beak}

% \vspace{-2mm}

\refstepcounter{figure}
\label{fig:teaser}
\begin{minipage}{0.96\textwidth}
\small
Figure~\thefigure:
\textbf{PermuQuant} is a post-training quantization framework for low-bit diffusion models.
In the W4A4 setting, it achieves 3.5$\times$ DiT memory reduction and 6.3$\times$ speedup on a single RTX 5090 32GB GPU by eliminating CPU offloading.
In the challenging W3A3 setting, PermuQuant still produces visually clean results with faithful details, significantly outperforming other baselines.
\end{minipage}

% \vspace{-1mm}
\end{center}

\begin{abstract}
\vspace{-2mm}
  Large-scale visual generative models have achieved remarkable performance. However, their high computational and memory costs make deployment challenging in resource-constrained scenarios, such as interactive applications and personal single-GPU usage. Post-training quantization (PTQ) offers a practical solution by compressing pretrained models without expensive retraining. However, existing PTQ methods still suffer from severe quality degradation under extremely low-bit settings. In this paper, we identify channel ordering as an important but underexplored factor in per-group quantization. In this setting, each contiguous group shares one quantization scale. When channels with very different statistics are placed in the same group, the scale can be dominated by outliers and cause large quantization errors. Based on this observation, we propose \textbf{PermuQuant}, a simple and effective PTQ framework for low-bit diffusion models. PermuQuant sorts channels by a joint second-moment criterion before per-group quantization, placing channels with similar activation and weight statistics into the same group. It further uses a calibration-based acceptance rule to apply reordering only when the selected permutation reduces quantization error on calibration data. The selected permutations are absorbed into adjacent modules or applied to weights offline, avoiding explicit runtime permutation operations. Extensive experiments on multiple large diffusion models show that PermuQuant consistently reduces quantization error and outperforms existing PTQ baselines. On FLUX.1-dev with an RTX 5090, PermuQuant achieves up to a \textbf{1.7}$\times$ single step speedup and reduces the DiT memory footprint by \textbf{3.5}$\times$ under W4A4 NVFP4 quantization. Code will be available at \url{https://github.com/yscheng04/PermuQuant}.
  \vspace{-2mm}
\end{abstract}

\vspace{-2mm}
\section{Introduction}
\label{sec:introduction}

Diffusion models have become a dominant paradigm for visual generation.
By learning to reverse a gradual denoising process, they can generate high-fidelity images with strong diversity~\cite{sohl2015deep,ho2020denoising}.
Recent advances further improve their sampling process, noise schedules, and continuous-time formulations~\cite{song2020denoising,nichol2021improved,song2020score}.
Together with classifier-free guidance and latent diffusion, these methods have enabled powerful text-to-image generation systems~\cite{ho2022classifier,rombach2022high,podell2023sdxl}.
Meanwhile, the backbone of diffusion models has shifted from convolutional UNets to scalable transformer architectures~\cite{ho2020denoising,xie2025sana,cai2025z,flux2024}.
This trend greatly improves generation quality.
However, it also increases the memory footprint and computational cost of inference.
Since diffusion sampling requires repeated network evaluations, the cost becomes especially high for large models and high-resolution generation.

Model quantization~\cite{nagel2021white} has emerged as a widely adopted technique to compress floating-point parameters and activations into lower-precision formats, thereby reducing memory overhead and accelerating inference. The resulting speedup stems from both diminished cache loading costs and faster low-bit arithmetic. Existing quantization approaches are broadly categorized into two paradigms: post-training quantization (PTQ) and quantization-aware training (QAT). While QAT explicitly models quantization effects during training and achieves strong performance at extremely low bit-widths~\cite{qin2020forward, chen2025efficientqat}, it typically requires full or partial model retraining, incurring substantial computational and data costs. In contrast, PTQ operates directly on pretrained checkpoints without additional training, making it considerably more resource-efficient. Recent advances in PTQ~\cite{xiao2023smoothquant, ashkboos2024quarot, li2024svdquant, liu2025clq} have demonstrated that competitive accuracy can be preserved even at low bit-widths. These methods offer a highly practical pathway for deploying large-scale visual models in resource-constrained environments. For example, interactive applications require low-latency inference.

\begin{wrapfigure}{r}{0.40\linewidth}
    \vspace{-3.3mm}
    \centering
    \includegraphics[width=1.0\linewidth]{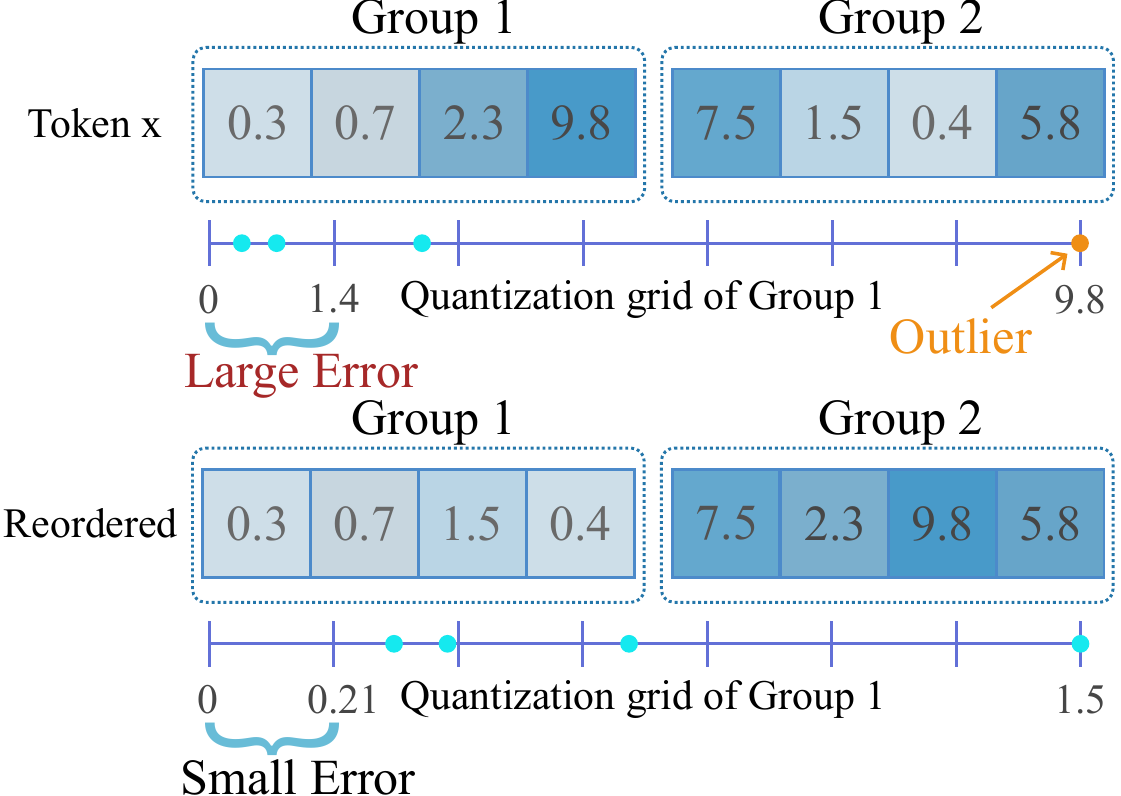}
    \vspace{-3.3mm}
    \caption{
    Example of per-group quantization. Quantization error is greatly affected by channel orders.
    }
    \label{fig:intro_vis}
    \vspace{-4mm}
\end{wrapfigure}

Despite this progress, extremely low-bit weight-activation quantization remains challenging (see Fig.~\ref{fig:teaser}).
In particular, 3-bit quantization can cause severe degradation in image quality and text-image alignment.
A common remedy is per-group quantization, which partitions each token or channel into small contiguous groups of equal size and assigns each group its own scale.
This strategy gives finer-grained scaling than per-token or per-channel quantization.

The \textbf{key insight} of this work is simple: 
\emph{per-group quantization makes channel order important}.
As illustrated in Fig.~\ref{fig:intro_vis}, since groups are formed over contiguous channels, the order decides which channels share a scale.
This matters because each group scale is controlled by the largest-magnitude values in the group.
If statistically mismatched channels are grouped together, outlier channels dominate the shared scale.
The remaining channels are then quantized on an overly coarse grid, which increases quantization error.
By contrast, placing channels with similar statistics into the same group produces more balanced and better matched scales, thereby reducing quantization error.

In this paper, we propose \textbf{PermuQuant}, a simple and effective PTQ framework for low-bit diffusion models. 
PermuQuant exploits the group structure of per-group quantization by reordering channels before quantization. 
It uses channel second moments estimated from calibration data to construct more favorable contiguous groups. 
To formalize this strategy, we show that second-moment based sorting minimizes an upper bound of the expected quantization error. 
We further derive an approximation guarantee for the expected error objective under the uniform noise approximation~\cite{marco2005validity}. 
These results provide a theoretical justification for using channel second moments to guide reordering.

To make reordering practical for linear layers, \textbf{firstly}, PermuQuant jointly considers activation and weight statistics. This is necessary because preserving the function of a linear layer requires applying the same permutation to both sides. We therefore use a joint second-moment criterion to select the channel order.
\textbf{Secondly}, not all layers benefit equally from reordering. To address this issue, we introduce a calibration-based acceptance rule. It applies reordering only when the selected permutation sufficiently reduces the quantization error over calibration data.
\textbf{Thirdly}, another practical challenge is that explicit channel permutation may introduce extra runtime operations. To avoid this cost, we apply the permutation to weight matrix offline and absorb the activation-side permutation into adjacent modules. This design enables PermuQuant to improve low-bit quantization without explicit runtime permutation operations.
Extensive experiments on multiple large diffusion models show that PermuQuant consistently reduces quantization error and delivers strong performance under extremely low-bit settings. We further validate its practical efficiency on Flux.1-dev with an RTX 5090, where W4A4 NVFP4 quantization achieves a \textbf{1.7}$\times$ single step speedup and reduces the DiT memory footprint by \textbf{3.5}$\times$. Our main contributions are summarized as follows:
\begin{itemize}
    \item We are first to identify channel order as an important factor in per-group quantization.
    We show that channel reordering can be used to reduce low-bit quantization error.

    \item We propose PermuQuant, a simple PTQ framework that makes channel reordering practical for quantized linear layers.
    It jointly considers activation and weight statistics, and uses a calibration-based rule to apply reordering only when it provides reliable gains.

    \item Extensive experiments on multiple large diffusion models demonstrate the effectiveness of PermuQuant. Our method consistently outperforms existing PTQ baselines under low-bit settings. On Flux-1.dev with an RTX 5090, PermuQuant achieves a \textbf{1.7}$\times$ single step speedup and reduces the DiT memory by \textbf{3.5}$\times$ under W4A4 NVFP4 quantization.
\end{itemize}

\vspace{-3mm}
\section{Related Work}
\vspace{-2mm}
\subsection{Diffusion Model}
\vspace{-2mm}

Diffusion models have become a dominant family of generative models for high-quality visual generation.
They generate samples by reversing a gradual noising process, where a neural network iteratively denoises corrupted inputs~\cite{sohl2015deep,ho2020denoising}.
Recent advances in sampling, noise schedules, and conditional generation have enabled powerful text-to-image systems~\cite{song2020denoising,nichol2021improved,song2020score,dhariwal2021diffusion,ho2022classifier,rombach2022high,balaji2022ediff,podell2023sdxl}.
Meanwhile, the backbone architecture has gradually shifted from convolutional UNets~\cite{ronneberger2015u,ho2020denoising} to scalable transformer-based designs~\cite{cai2025z,xie2025sana,flux2024}.
Despite their strong generation quality, diffusion models remain expensive at inference time due to sequential denoising steps and increasingly large backbones.
Existing acceleration methods mainly follow two directions.
One direction reduces the number of denoising steps through fast samplers or distillation~\cite{zhang2022fast,liu2022pseudo,lu2022dpm,salimans2022progressive,meng2023distillation,song2023consistency,luo2023latent,sauer2024adversarial,yin2024one,yin2024improved}.
The other reduces the cost of each step through efficient architectures, sparse inference, distributed inference, and quantization~\cite{unlu2020efficient,li2023snapfusion,lyu2022accelerating,ma2024deepcache,liu2025reusing,li2024distrifusion,fang2024pipefusion,chen2024asyncdiff,shang2023post,li2023q}.
This work focuses on low-bit quantization, which reduces memory and computation without changing the sampling schedule.
\vspace{-2mm}
\subsection{Model Quantization}
\vspace{-2mm}
Model quantization~\cite{nagel2021white} is a widely used technique for compressing model size and accelerating model inference.
By representing weights and activations with low-bit numerical formats, quantization can substantially reduce memory footprint and computational cost.
Existing methods mainly include quantization-aware training (QAT) and post-training quantization (PTQ).
QAT~\cite{chen2025efficientqat,qin2023quantsr,liu2020reactnet,martinez2020training,qin2020forward} simulates low-precision operations during training and performs well in extremely low-bit settings, but requires costly retraining.
PTQ~\cite{liu2021post, liu2025low}, in contrast, directly quantizes pretrained models without end-to-end retraining and is more resource-efficient.
Recent PTQ methods have enabled 4-bit quantization for LLMs and diffusion models with limited quality degradation~\cite{liu2024spinquant,li2024svdquant}.
GPTQ~\cite{frantar2022gptq} uses Hessian-based compensation, while SmoothQuant~\cite{xiao2023smoothquant}, OmniQuant~\cite{OmniQuant}, QuaRot~\cite{ashkboos2024quarot}, and ConvRot~\cite{huang2025convrot} suppress activation outliers through smoothing or orthogonal transformations.
SVDQuant~\cite{li2024svdquant} absorbs outliers into a low-rank branch.
In this work, we focus on per-group quantization and demonstrate channel reordering can further reduce quantization error, even under 3-bit quantization.

\begin{figure}[t]
    \centering
    \captionsetup[subfigure]{skip=0pt}
    \begin{subfigure}[t]{0.24\linewidth}
        \centering
        \includegraphics[width=\linewidth]{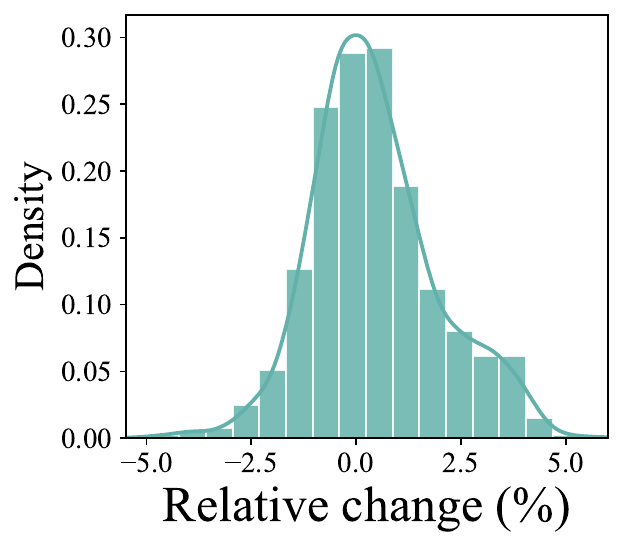}
        \caption{\hspace{1mm}Random permutation}
        \label{fig:random_perm}
    \end{subfigure}
    \begin{subfigure}[t]{0.24\linewidth}
        \centering
        \includegraphics[width=\linewidth]{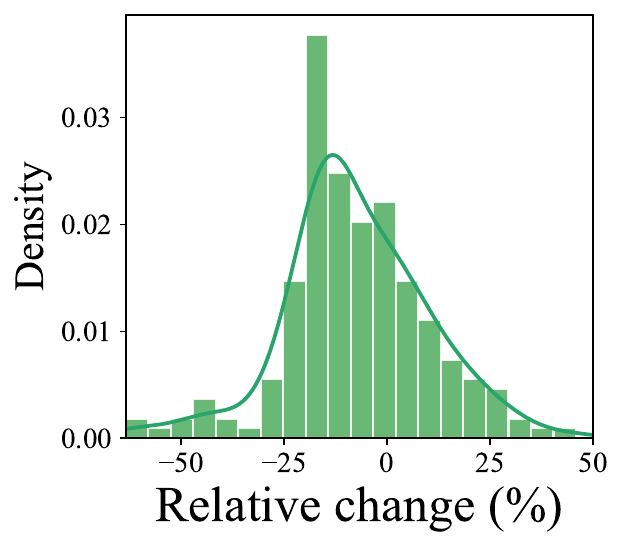}
        \caption{Sorting}
        \label{fig:sm_sort}
    \end{subfigure}
    \hfill
    \raisebox{0.5mm}{
        \begin{tikzpicture}
            \draw[dashed, line width=0.8pt] (0,0) -- (0,2.8cm);
        \end{tikzpicture}
    }
    \hfill
    \begin{subfigure}[t]{0.24\linewidth}
        \centering
        \raisebox{0mm}{%
            \includegraphics[width=\linewidth]{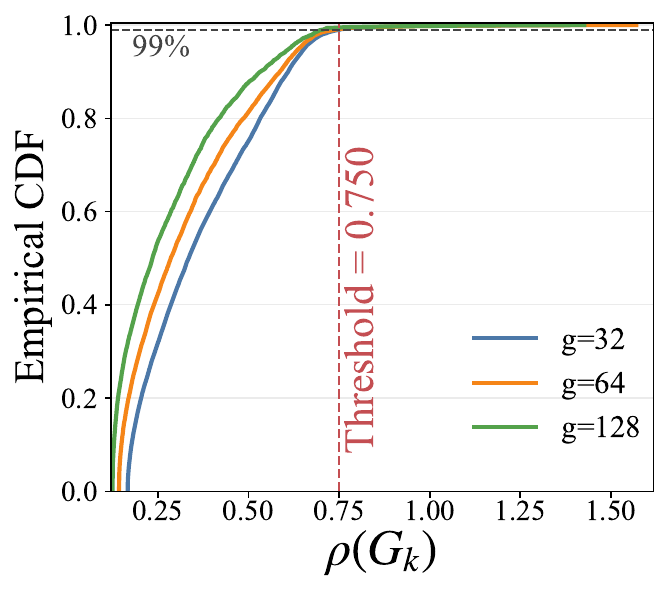}
        }
        \caption{Activation ratio}
        \label{fig:ratio_activation}
    \end{subfigure}
    \hfill
    \begin{subfigure}[t]{0.24\linewidth}
        \centering
        \raisebox{0mm}{
            \includegraphics[width=\linewidth]{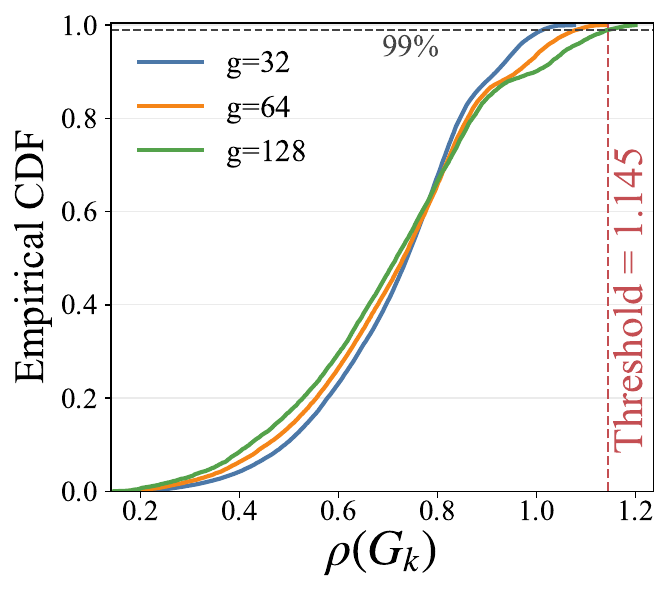}
        }
        \caption{Weight ratio}
        \label{fig:ratio_weight}
    \end{subfigure}
    \hfill

    \vspace{-2mm}
    \caption{
    (a) and (b): Relative change in activation quantization error caused by random channel permutations and sorting channels by their second moments on Z-Image-Turbo.
    (c) and (d): Empirical validation of Assumption~\ref{assump:extremal_control}.
    We plot the CDF of the ratio $\rho(G_k)$, which estimates the constant $C$ required by each group.
    For more than 99\% of groups in activation and weight, $\rho(G_k)<1.2$.
    }
    \vspace{-4.5mm}
    \label{fig:reorder_visualization}
\end{figure}

\vspace{-3mm}
\section{Method}
\vspace{-3mm}

In this section, we first review the basic concept of quantization in Sec.~\ref{sec:background}, with a focus on per-group quantization. 
We then present the channel reordering technique in Sec.~\ref{sec:method-reorder}, which aims to reduce per-group quantization error. We first motivate why channel order matters, then describe how channels are reordered, with theoretical analysis to explain its effectiveness.
Finally, we propose PermuQuant in Sec.~\ref{sec:permuquant}. We extend channel reordering to practical quantized linear layers with joint second-moment channel reordering. We then adopt calibration-based acceptance rule to select effective permutations. We also discuss how the selected permutations can be deployed without explicit runtime operations.

\vspace{-2mm}
\subsection{Background}
\vspace{-2mm}
\label{sec:background}

\textbf{Quantization} reduces the memory cost and inference latency of
large models by representing weights or activations with low-bit integers.
In low-bit settings, a scale factor maps floating-point values to a discrete
integer grid, so the quantization error strongly depends on the scales. A common practice is to use fine-grained scaling, such as per-token
activation quantization or per-channel weight quantization, since different
tokens or channels can have very different value ranges.

In extremely low-bit settings, an even more fine-grained scheme, namely
\emph{per-group quantization}, is often adopted. It further partitions the
values within each token or channel into contiguous groups of equal size.
All scalars within the same group share one quantization scale. Let
$\mathbf{x}\in\mathbb{R}^{d}$ denote a token-wise or channel-wise vector.
We divide its entries into $K=d/g$ contiguous groups
$G_1,\ldots,G_K$, where each group has size $g$. For a group $G_k$, the
symmetric quantization scale is computed as
\begin{equation}
    s_k = \frac{\max_{i\in G_k} |x_i|}{Q},
\end{equation}
where $Q$ denotes the largest representable integer. Each element
$x_i$ is quantized and dequantized as
\begin{equation}
    z_i =
    \mathrm{clip}\!\left(
    \mathrm{round}\!\left(\frac{x_i}{s_k}\right),
    -Q, Q
    \right),
    \qquad
    \hat{x}_i = s_k z_i .
\end{equation}

\vspace{-4mm}
\subsection{Lower Quantization Error with Reordering}
\vspace{-2mm}
\label{sec:method-reorder}
For clarity, we present the following analysis using activation quantization as an example. The same idea also applies to weight quantization by treating weight channels as the grouped vectors. 

\textbf{Motivation.} In per-group quantization, channels are partitioned into contiguous groups, and all channels within the same group share one quantization scale. As a result, the quantization error depends not only on the underlying value distribution, but also on the channel order. Therefore, even when the set of channels remains unchanged, reordering them can lead to substantially different quantization errors. We also empirically observe that different random permutations can produce noticeably different quantization errors in Fig.~\ref{fig:random_perm}. It highlights that the permutation is an important factor in per-group quantization. Therefore, the core issue is how to find a good permutation.
More formally, let $\pi$ denote a permutation over the $d$ channels, and let 
$\{G_k(\pi)\}$ be the contiguous groups induced by $\pi$. 
Our goal is to find a permutation that minimizes the expected quantization error:
\begin{equation}
    \pi^{\star}
    =
    \arg\min_{\pi}
    \mathbb{E}_{\mathbf{x}}
    \left[
    \left\|
    \mathbf{x}
    -
    \hat{\mathbf{x}}_{\pi}
    \right\|_2^2
    \right],
    \label{eq:perm_objective}
\end{equation}
where $\hat{\mathbf{x}}_{\pi}$ denotes the dequantized tensor obtained under the grouping $\{G_k(\pi)\}$.

\begin{figure}[t]
    \centering
    \includegraphics[width=1.0\linewidth]{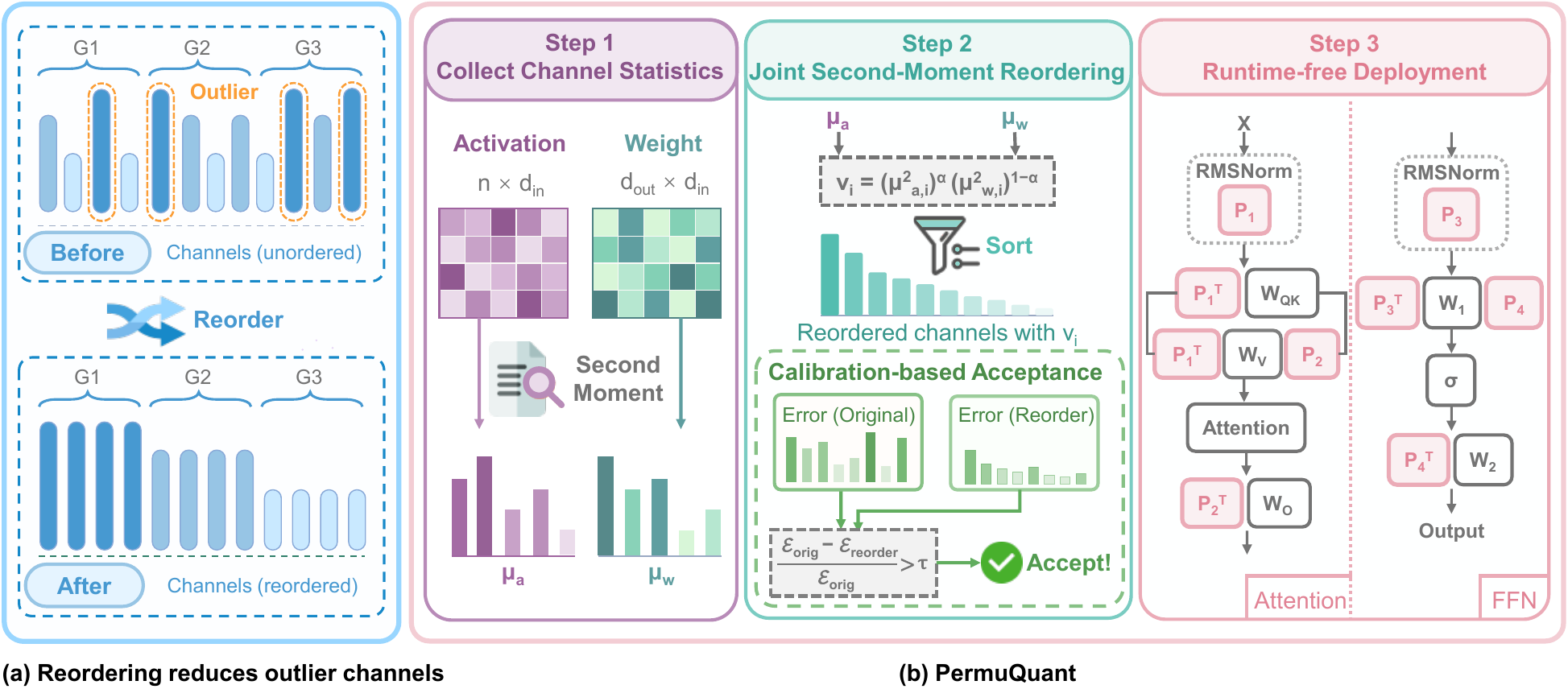}
    \vspace{-5.5mm}
    \caption{
        Overview of PermuQuant.
        (a) Channel reordering places channels with similar statistics into the same group.
        (b) PermuQuant first collects activation and weight statistics from calibration data.
        It then performs joint second-moment reordering with calibration-based acceptance.
        The accepted permutations are folded into adjacent modules offline for runtime-free deployment.
    }
    \label{fig:permuquant}
    \vspace{-4.5mm}
\end{figure} 

\textbf{Intuition.}
The objective in Eq.~\eqref{eq:perm_objective} is difficult to optimize directly, since the permutation space is combinatorial. 
We therefore seek a simple criterion that can identify low-error group partitions. 
It's well known that in low-bit quantization, outliers are a major source of error. 
Under per-group quantization, however, the effect of an outlier is determined by its group. 
A high-magnitude channel may dominate the scale when grouped with low-magnitude channels, but becomes less harmful when grouped with channels of similar magnitudes. 
Therefore, a desirable permutation should avoid mixing channels with very different value ranges in the same group.
Based on this observation, the intuition of our method is simple: \textbf{channels with similar magnitude should be placed into the same group.} 

\textbf{Second-moment based reordering.}
To formalize this intuition, we derive an upper bound on the expected quantization error in Eq.~\eqref{eq:perm_objective}. We then show how this bound can be reduced through an appropriate channel reordering.
Consider a random token vector $\mathbf{x} \in \mathbb{R}^{d}$ drawn from the activation distribution, and let $\hat{\mathbf{x}}$ be its dequantized version. 
Denote by $\mathbf{e} = \mathbf{x} - \hat{\mathbf{x}}$ the quantization error. 
For a fixed grouping $\{G_k\}$ with group size $g$, the scale of each group is determined by its largest-magnitude entry. 
Therefore, the expected quantization error is bounded by
\begin{equation}
    \mathbb{E}_{\mathbf{x}}\|\mathbf{e}\|_2^2
    \le
    \frac{g}{4Q^2}
    \sum_k
    \mathbb{E}_{\mathbf{x}}\left[
    \max_{i \in G_k} |x_i|^2
    \right],
    \label{eq:exp_error_pre}
\end{equation}
where $Q$ is the maximum quantization level. See Appendix~\ref{app:proof_eq6} for the proof.

Equation~\eqref{eq:exp_error_pre} shows that the expected quantization error depends on the expected squared maximum within each group. However, directly optimizing this term is difficult, since it depends on the joint distribution of channels assigned to the same group and changes with every candidate permutation. To relate this quantity to channel-wise statistics, we propose the following assumption. 

\begin{assumption}[Extremal control]
\label{assump:extremal_control}
\vspace{-2mm}
For each channel $i$, let $\mathbb{E}[x_i^2] = \mu_i^2$. We assume that there exists a constant $C > 0$ such that, for any group $G_k$ of size $g$,
\begin{equation}
    \mathbb{E}\left[
    \max_{i \in G_k} |x_i|^2
    \right]
    \le
    C \log_2(2g)\,
    \max_{i \in G_k} \mu_i^2.
    \label{eq:extremal_control}
\end{equation}
\vspace{-2mm}
\end{assumption}

Intuitively, Assumption~\ref{assump:extremal_control} provides a bound on the expected squared maximum within a group. To empirically validate Assumption~\ref{assump:extremal_control}, we compute the group-wise ratio
$\rho(G_k)=
\mathbb{E}[\max_{i\in G_k}|x_i|^2]/
(\log_2(2g)\max_{i\in G_k}\mu_i^2)$,
which estimates the smallest constant $C$ required for each group.
We then plot the empirical CDF of $\rho(G_k)$ over all groups and group sizes.
As shown in Fig.~\ref{fig:reorder_visualization}, more than $99\%$ of groups have $\rho(G_k)<1.2$, indicating that the extremal-control condition holds with a small empirical constant for most groups. Substituting Eq.~\eqref{eq:extremal_control} into Eq.~\eqref{eq:exp_error_pre}, we obtain

\vspace{-2mm}
\begin{equation}
    \mathbb{E}\|\mathbf{e}\|_2^2
    \le
    \frac{C g \log(2g)}{4Q^2}
    \sum_k
    \max_{i \in G_k} \mu_i^2.
    \label{eq:exp_error_var_bound}
\end{equation}

Equation~\eqref{eq:exp_error_var_bound} shows that the expected quantization error is controlled by the group-wise maxima of channel second moments. 
Therefore, the upper bound can be reduced by placing channels with similar second moments into the same group. Actually, we have the following proposition:
\begin{proposition}[Optimality of second-moment sorting]
\label{prop:second_moment_sorting}
For a fixed group size $g$, sorting channels in descending order of $\mu_i^2$ and then partitioning them into contiguous groups minimizes $\sum_k \max_{i \in G_k} \mu_i^2$ among all contiguous group partitions induced by channel permutations.
\end{proposition}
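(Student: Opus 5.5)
We prove a lower bound that holds for every partition of the $d$ channels into $K=d/g$ groups of size $g$, and then check that sorted contiguous grouping attains it. Sort the second moments in descending order, $a_1\ge a_2\ge\cdots\ge a_d$, where $(a_j)$ is a rearrangement of $(\mu_i^2)$. Any permutation $\pi$ together with contiguous grouping yields a partition of the channel index set into $K$ blocks of size $g$. Conversely, every such partition is induced by some permutation. So it suffices to minimize $\sum_k \max_{i\in G_k}\mu_i^2$ over all partitions into size-$g$ blocks. The objective depends only on the multiset of values in each block.

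\textbf{Key claim.} For every partition $\{G_k\}$, relabel the blocks so that their maxima are ordered, $M_1\ge M_2\ge\cdots\ge M_K$, where $M_k=\max_{i\in G_k}\mu_i^2$. We show that $M_k\ge a_{(k-1)g+1}$ for each $k$.

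The argument is a counting one. The blocks $G_k,\dots,G_K$ together contain $(K-k+1)g$ channels. Every one of these channels has second moment at most $M_k$, because the block maxima are ordered. Now suppose $M_k<a_{(k-1)g+1}$. Then all $(K-k+1)g$ of these channels have values strictly below $a_{(k-1)g+1}$. However, at most $d-(k-1)g-1=(K-k+1)g-1$ channels can have value strictly below $a_{(k-1)g+1}$: the channels holding $a_1,\dots,a_{(k-1)g+1}$ are all $\ge a_{(k-1)g+1}$, and these are $(k-1)g+1$ distinct channels. This is a contradiction. Ties are handled because we count channels by index rather than by value.

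Summing the key claim over $k$ gives
\begin{equation*}
\sum_k \max_{i\in G_k}\mu_i^2=\sum_{k=1}^K M_k\ \ge\ \sum_{k=1}^K a_{(k-1)g+1}.
\end{equation*}

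\textbf{Attainment.} Sorting in descending order and cutting into contiguous blocks gives $G_k=\{(k-1)g+1,\dots,kg\}$ in sorted positions. The maximum of block $k$ is exactly $a_{(k-1)g+1}$, so the lower bound is attained and sorted grouping is optimal.

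\textbf{Main difficulty.} The only delicate step is the pigeonhole count in the key claim. Its correctness hinges on first ordering the blocks by their maxima. It also requires ties among the $\mu_i^2$ to be handled by counting indices, not values; I would state this explicitly. The rest of the proof is bookkeeping.

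Optionally, the same conclusion can be reached by an exchange argument. Take an optimal partition; if a block with a smaller maximum holds a channel larger than some non-maximal channel of a block with a larger maximum, swap the two channels. This swap does not increase the objective, and iterating it reaches the sorted grouping. The counting argument is cleaner, so that is the one I would present.
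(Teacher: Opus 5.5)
Your proof is correct and follows the paper's argument: both show by pigeonhole that the $k$-th largest block maximum of any size-$g$ partition is at least $a_{(k-1)g+1}$, then note that sorted contiguous grouping attains exactly these values. Your count over the lower blocks $G_k,\dots,G_K$ is the complementary phrasing of the paper's observation that the top $(k-1)g+1$ channels must occupy at least $k$ distinct groups. Your explicit remarks on ties, and on every size-$g$ partition being induced by some permutation, cover points the paper leaves implicit.
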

We defer the proof to Appendix~\ref{app:proof_optimal_bound}. This motivates our second-moment based channel reordering strategy, where \textbf{channels are sorted by their second moments}.

Beyond minimizing this upper bound, the second-moment based ordering also provides an approximation guarantee for the expected quantization error objective under the uniform-noise approximation~\cite{marco2005validity}. The following theorem states the resulting approximation ratio.

\begin{theorem}[Approximation guarantee]
\label{thm:second_moment_sorting}
Let $\mathcal{E}(P):=\mathbb{E}_{x}\mathbb{E}_{\epsilon\mid x}\|\epsilon\|_2^2$ denote the expected quantization error under the uniform-noise approximation~\cite{marco2005validity}. $\epsilon$ is the additive quantization noise and $\epsilon_i\mid x \sim \mathrm{Unif}(-s_k/2,s_k/2)$ for $i\in G_k$. Let $P_{\mathrm{sort}}$ be the partition obtained by sorting channels in descending order of $\mu_i^2$, where $\mathbb{E}[x_i^2] = \mu_i^2$. Assume that Assumption~\ref{assump:extremal_control} holds. Then
\begin{equation}
    \mathcal{E}(P_{\mathrm{sort}})
    \le
    C \log_2(2g)\cdot \min_P \mathcal{E}(P).
\end{equation}
\end{theorem}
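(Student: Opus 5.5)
Under the uniform-noise approximation the error has a closed form. For $i\in G_k$ we have $\mathbb{E}[\epsilon_i^2\mid x]=s_k^2/12$ with $s_k=\max_{i\in G_k}|x_i|/Q$. Hence for any partition $P=\{G_k\}$
\begin{equation*}
\mathcal{E}(P)=\frac{g}{12Q^2}\sum_k \mathbb{E}_x\Big[\max_{i\in G_k}|x_i|^2\Big].
\end{equation*}
The plan is to sandwich each group term between $\max_{i\in G_k}\mu_i^2$ and $C\log_2(2g)\max_{i\in G_k}\mu_i^2$. Proposition~\ref{prop:second_moment_sorting} then lets us compare $P_{\mathrm{sort}}$ with an arbitrary partition through the common surrogate $S(P):=\sum_k\max_{i\in G_k}\mu_i^2$.

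\textbf{Upper bound for $P_{\mathrm{sort}}$.} Apply Assumption~\ref{assump:extremal_control} to every group of $P_{\mathrm{sort}}$:
\begin{equation*}
\mathcal{E}(P_{\mathrm{sort}})\le \frac{g}{12Q^2}\,C\log_2(2g)\,S(P_{\mathrm{sort}}).
\end{equation*}

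\textbf{Lower bound for any $P$.} This direction needs no assumption. For each group and each $j\in G_k$, pointwise $\max_{i\in G_k}|x_i|^2\ge x_j^2$. Taking expectations gives $\mathbb{E}[\max_{i\in G_k}|x_i|^2]\ge \mu_j^2$, and maximizing over $j$ gives $\ge\max_{i\in G_k}\mu_i^2$. Summing over groups,
\begin{equation*}
\mathcal{E}(P)\ge \frac{g}{12Q^2}\,S(P).
\end{equation*}

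\textbf{Combining.} By Proposition~\ref{prop:second_moment_sorting}, $S(P_{\mathrm{sort}})\le S(P)$ for every contiguous partition $P$ induced by a permutation. Chaining the two bounds,
\begin{equation*}
\mathcal{E}(P_{\mathrm{sort}})\le \frac{g\,C\log_2(2g)}{12Q^2}\,S(P)\le C\log_2(2g)\,\mathcal{E}(P).
\end{equation*}
Taking the minimum over $P$ gives the claim. The minimum exists because there are finitely many partitions.

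\textbf{Expected obstacle.} There is little real difficulty. The one point to check is that the per-element variance $s_k^2/12$ is the right noise model, so that the same constant $g/(12Q^2)$ appears in both the upper and the lower bound and cancels. A second point is that the assumption is used only for the sorted partition, while the lower bound (Jensen-type, $\mathbb{E}\max\ge\max\mathbb{E}$) holds unconditionally.
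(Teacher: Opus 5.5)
Your proposal is correct and follows the paper's proof essentially step for step. Both arguments use the closed form $\mathcal{E}(P)=\frac{g}{12Q^2}\sum_k\mathbb{E}[\max_{i\in G_k}|x_i|^2]$, the assumption-based upper bound, the unconditional lower bound $\mathbb{E}[\max_{i\in G_k}|x_i|^2]\ge\max_{i\in G_k}\mu_i^2$, and a chain through the surrogate minimized by Proposition~\ref{prop:second_moment_sorting}. The only cosmetic difference is that you invoke the assumption only for $P_{\mathrm{sort}}$ and compare against an arbitrary $P$ before minimizing, whereas the paper states the sandwich for every partition and compares directly against a minimizer $P^*$.
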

Therefore, second-moment based sorting is a $C\log_2(2g)$-approximation to the optimal partition with respect to expected quantization error. We defer the full proof to Appendix~\ref{app:proof_permuquant}.

\vspace{-1mm}
\subsection{PermuQuant}
\vspace{-1mm}
\label{sec:permuquant}
\textbf{Joint second-moment channel reordering.} 
The above analysis considers only one side of the linear layer. In practice, reordering must preserve the function of the linear layer. This requires applying the same permutation to both the activations and the corresponding input channels of the weight matrix. 
Given an activation matrix $\mathbf{X}$, a weight matrix $\mathbf{W}$, and a permutation matrix $\mathbf{P}$, we have
\begin{equation}
    \mathbf{Y}=\mathbf{X}\mathbf{W}
    =\mathbf{X}\mathbf{P}\mathbf{P}^{\top}\mathbf{W}
    \approx Q(\mathbf{X}\mathbf{P})Q(\mathbf{P}^{\top}\mathbf{W}),
    \label{eq:reordered_quant_linear}
\end{equation}
where $Q(\cdot)$ denotes the quantize-then-dequantize operator. 
Therefore, the final channel order should take into account the statistics of both activation and weight sides.

For each input channel $i$, let $\mu_{a,i}^2$ denote the activation second moment and $\mu_{w,i}^2$ denote the second moment of the corresponding weight channel. We define a joint second moment criterion as
% \vspace{-2mm}
\begin{equation}
    v_i = (\mu_{a,i}^2)^{\alpha} (\mu_{w,i}^2)^{1-\alpha},
    \label{eq:joint_second_moment_rank}
\end{equation}
where $\alpha \in [0,1]$ balances the relative importance of activation and weight. We then sort channels according to $v_i$. Different layers may be dominated by activation or weight to different extents. To account for this, we uniformly enumerate a set of candidate values for $\alpha$ in the interval $[0,1]$. We then select the value that yields the smallest quantization error on the calibration data.

\textbf{Calibration-based acceptance rule.}
However, reordering is not uniformly beneficial for all layers. 
First, activations and weights may prefer different channel orders. Thus, a permutation that reduces error on one side can increase the error on the other. Second, some layers may not have a clear sortable structure in their channel statistics, or the original channel order already provides an effective grouping. In such cases, the overall benefit of reordering can be marginal or even negative. To prevent such cases, we propose a calibration-based acceptance rule. Let $\mathcal{E}_{\mathrm{orig}}$ and $\mathcal{E}_{\mathrm{reorder}}$ denote the quantization errors on the calibration set before and after reordering. We accept the permutation if
\begin{equation}
    \frac{\mathcal{E}_{\mathrm{orig}} - \mathcal{E}_{\mathrm{reorder}}}{\mathcal{E}_{\mathrm{orig}}} > \tau,
\end{equation}
where $\tau$ is a predefined threshold. Otherwise, we keep the original channel order. In this way, PermuQuant is applied only to layers where reordering yields a sufficiently large empirical improvement. 
This avoids possible increase in quantization error when reordering is not effective.

\textbf{Runtime-free deployment.} Importantly, the permutation introduces negligible additional runtime cost. The permutation on weight matrix can be performed offline. The permutation on activation can be absorbed either into the preceding linear modules or into its preceding LayerNorm/RMSNorm.

\begin{table*}[t]
\centering
\caption{
Quantitative comparison on MJHQ30K and GenEval.
For MJHQ30K, we report FID, KID$\times 10^3$, and ImageReward (IR); for GenEval, we report the standard scores. The best and second-best results among \textbf{W3A3} methods are highlighted in \textbf{bold} and \underline{underline}, respectively.
}
\label{tab:main_results}
\small
\setlength{\tabcolsep}{2pt}
\renewcommand{\arraystretch}{1.05}
\newcommand{\secondbest}[1]{\underline{#1}}

\resizebox{\textwidth}{!}{
\begin{tabular}{c@{\hspace{8pt}}ccrrrrrrrrrr}
\toprule
\multirow{2}{*}{Model}
& \multirow{2}{*}{Method}
& \multirow{2}{*}{Venue}
& \multicolumn{3}{c}{MJHQ30K}
& \multicolumn{7}{c}{GenEval} \\
\cmidrule(lr){4-6} \cmidrule(lr){7-13}
& &
& FID $\downarrow$
& KID $\downarrow$
& IR $\uparrow$
& Single $\uparrow$
& Two $\uparrow$
& Count $\uparrow$
& Color $\uparrow$
& Pos. $\uparrow$
& Attr. $\uparrow$
& Overall $\uparrow$ \\
\midrule

\multirow{8}{1.2cm}{\centering \makecell{Z-Image\\-Turbo\\(8 Steps)}}
& BF16 & --
& 67.6 & 5.7 & 1.001
& 1.00 & 0.97 & 0.71 & 1.00 & 0.63 & 0.83 & 0.86 \\
\cmidrule(l){2-13}
 & RTN & --
& 250.8 & 183.0 & -1.609
& 0.69 & 0.21 & 0.21 & 0.47 & 0.21 & 0.10 & 0.32 \\

& SmoothQuant~\cite{xiao2023smoothquant} & \texttt{ICML'23}
& 221.7 & 136.1 & -1.388
& 0.85 & 0.55 & 0.43 & 0.77 & 0.18 & 0.20 & 0.50 \\

& OmniQuant~\cite{OmniQuant} & \texttt{ICLR'24}
& 87.7 & 18.0 & 0.575
& 0.92 & \secondbest{0.87} & \textbf{0.71} & 0.90 & 0.58 & 0.65 & 0.77 \\

& SVDQuant~\cite{li2024svdquant} & \texttt{ICLR'25}
& 186.7 & 92.9 & -0.940
& 0.85 & 0.61 & 0.32 & 0.83 & 0.50 & 0.43 & 0.59 \\

& ConvRot~\cite{huang2025convrot} & \texttt{arXiv'25}
& \secondbest{86.5} & \secondbest{14.1} & \secondbest{0.587}
& \textbf{1.00} & \secondbest{0.87} & \textbf{0.71} & \secondbest{0.97} & \secondbest{0.61} & \secondbest{0.70} & \secondbest{0.81} \\

\rowcolor{lightblue}
\cellcolor{white}
& PermuQuant & --
& 134.9 & 49.2 & -0.163
& \secondbest{0.96} & \secondbest{0.87} & 0.61 & 0.93 & 0.55 & 0.50 & 0.74 \\

\rowcolor{lightblue}
\cellcolor{white}
& PermuQuant-H & --
& \textbf{80.4} & \textbf{12.3} & \textbf{0.715}
& \secondbest{0.96} & \textbf{0.97} & \secondbest{0.64} & \textbf{1.00} & \textbf{0.63} & \textbf{0.75} & \textbf{0.83} \\

\midrule

\multirow{8}{1.2cm}{\centering \makecell{FLUX.1\\-dev\\(50 Steps)}}
& BF16 & --
& 60.6 & 2.8 & 0.917
& 1.0 & 0.89 & 0.61 & 0.93 & 0.68 & 0.53 & 0.77 \\
\cmidrule(l){2-13}
& RTN & --
& 131.0 & 52.4 & 0.386
& 0.81 & 0.45 & 0.29 & 0.67 & 0.32 & 0.10 & 0.44 \\

& SmoothQuant~\cite{xiao2023smoothquant} & \texttt{ICML'23}
& 142.6 & 59.7 & 0.149
& 0.81 & 0.32 & 0.29 & 0.63 & 0.34 & 0.13 & 0.42 \\

& OmniQuant~\cite{OmniQuant} & \texttt{ICLR'24}
& 85.5 & 16.0 & \textbf{0.798}
& 0.92 & 0.71 & 0.46 & 0.83 & 0.55 & 0.38 & 0.64 \\

& SVDQuant~\cite{li2024svdquant} & \texttt{ICLR'25}
& 103.3 & 24.7 & 0.501
& 0.86 & 0.29 & 0.45 & 0.61 & 0.47 & 0.11 & 0.47 \\

& ConvRot~\cite{huang2025convrot} & \texttt{arXiv'25}
& \secondbest{68.6} & \secondbest{6.2} & 0.709
& 0.92 & \secondbest{0.89} & \textbf{0.71} & 0.90 & \secondbest{0.68} & \textbf{0.50} & \textbf{0.77} \\

\rowcolor{lightblue}
\cellcolor{white}
 & PermuQuant & --
& 74.9 & 9.1 & 0.726
& \secondbest{0.96} & 0.84 & 0.50 & \textbf{0.97} & 0.58 & 0.40 & \secondbest{0.71} \\

\rowcolor{lightblue}
\cellcolor{white}
 & PermuQuant-H & --
& \textbf{64.7} & \textbf{4.6} & \secondbest{0.759}
& \textbf{1.00} & \textbf{0.92} & \secondbest{0.54} & \secondbest{0.93} & \textbf{0.74} & \secondbest{0.48} & \textbf{0.77} \\

\midrule

\multirow{8}{1.2cm}{\centering \makecell{SANA-1.5\\-1.6B\\(20 Steps)}}
& BF16  & --
& 58.1 & 1.2 & 1.075
& 1.00 & 0.97 & 0.79 & 0.97 & 0.66 & 0.55 & 0.82 \\
\cmidrule(l){2-13}
 & RTN & --
& 103.6 & 30.1 & 0.394
& \textbf{0.96} & 0.63 & 0.54 & 0.83 & 0.47 & 0.43 & 0.64 \\

 & SmoothQuant~\cite{xiao2023smoothquant} & \texttt{ICML'23}
& 102.6 & 29.4 & 0.466
& 0.88 & 0.68 & 0.46 & 0.87 & 0.47 & 0.45 & 0.64 \\

& OmniQuant~\cite{OmniQuant} & \texttt{ICLR'24}
& 96.8 & 26.0 & 0.541
& \textbf{0.96} & \secondbest{0.76} & \secondbest{0.57} & 0.83 & 0.47 & 0.53 & 0.69 \\

& SVDQuant~\cite{li2024svdquant} & \texttt{ICLR'25}
& 89.7 & 19.5 & 0.513
& \textbf{0.96} & \secondbest{0.76} & 0.46 & \secondbest{0.93} & 0.45 & 0.55 & 0.69 \\

& ConvRot~\cite{huang2025convrot} & \texttt{arXiv'25}
& 98.1 & 24.8 & 0.440
& \textbf{0.96} & 0.74 & 0.46 & 0.83 & \secondbest{0.55} & 0.55 & 0.68 \\

\rowcolor{lightblue}
\cellcolor{white}
 & PermuQuant & --
& \secondbest{82.8} & \secondbest{14.7} & \secondbest{0.700}
& \secondbest{0.92} & \secondbest{0.76} & \textbf{0.61} & \secondbest{0.93} & 0.45 & \secondbest{0.60} & \secondbest{0.71} \\

\rowcolor{lightblue}
\cellcolor{white}
& PermuQuant-H & --
& \textbf{68.0} & \textbf{5.5} & \textbf{0.919}
& \textbf{0.96} & \textbf{0.89} & \secondbest{0.57} & \textbf{0.97} & \textbf{0.66} & \textbf{0.73} & \textbf{0.80} \\

\bottomrule
\end{tabular}
}
\vspace{-3mm}
\end{table*}

\section{Experiments}
\vspace{-2mm}
\subsection{Experimental Settings}
\vspace{-2mm}

\textbf{Data and evaluation.}
Following prior work~\cite{li2023q}, we sample calibration prompts from COCO Captions~\cite{lin2014microsoft}.
For generation quality, we sample 1K prompts from MJHQ-30K~\cite{li2024playground} and report FID~\cite{heusel2017gans, parmar2022aliased}, KID~\cite{binkowski2018demystifying}, and ImageReward~\cite{xu2023imagereward}.
For instruction alignment, we use GenEval~\cite{ghosh2023geneval}, covering single object, two objects, counting, color, position, and attribute binding.
Please refer to Appendix~\ref{app:implementation_detail} for more implementation details.

\begin{figure*}[t]
\centering
\scriptsize
\setlength{\tabcolsep}{2pt}
\renewcommand{\arraystretch}{0.95}

\newlength{\qualimgsize}
\setlength{\qualimgsize}{0.16\textwidth}

\newcommand{\imgvsep}{%
  \begin{tikzpicture}[baseline={(0,0)}]
    \draw[densely dashed, line width=0.4pt] (0,0) -- (0,\qualimgsize);
  \end{tikzpicture}%
}

\newcommand{\promptline}[1]{%
\multicolumn{7}{c}{%
\parbox{0.7\textwidth}{%
\centering
\setlength{\baselineskip}{0.85\baselineskip}%
\textit{Prompt: #1}%
}%
}%
}

\begin{adjustbox}{width=\textwidth}
\begin{tabular}{c@{\hspace{0.5mm}}c@{\hspace{1.5mm}}ccccc}

\textbf{SANA-1.5-1.6B} &
&
\textbf{OmniQuant} &
\textbf{SVDQuant} &
\textbf{ConvRot} &
\textbf{PermuQuant} &
\textbf{PermuQuant-H} \\

ImageReward: 1.753 &
&
ImageReward: 1.410 &
ImageReward: 1.676 &
ImageReward: 1.494 &
ImageReward: 1.741 &
ImageReward: 1.792 \\[0.2em]

\includegraphics[width=\qualimgsize]{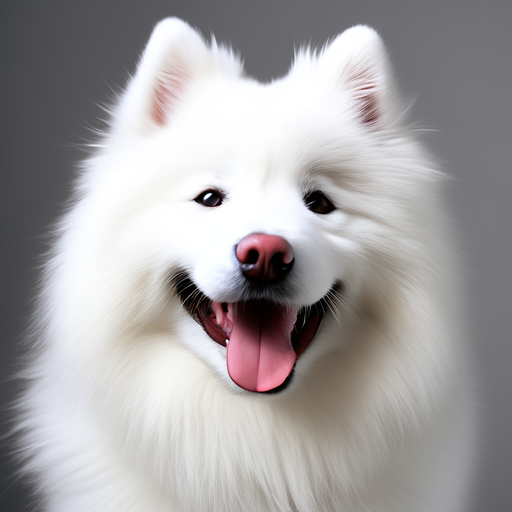} &
\imgvsep &
\includegraphics[width=\qualimgsize]{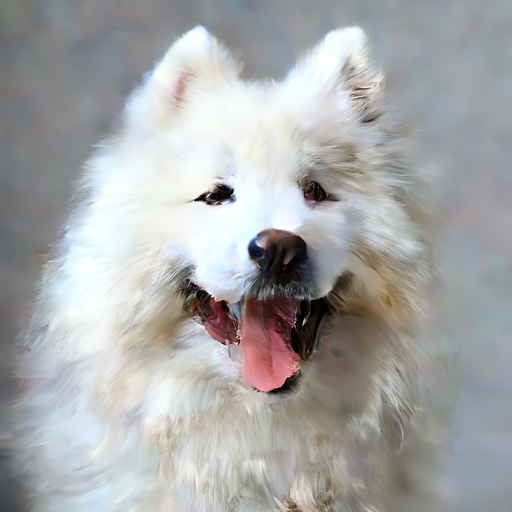} &
\includegraphics[width=\qualimgsize]{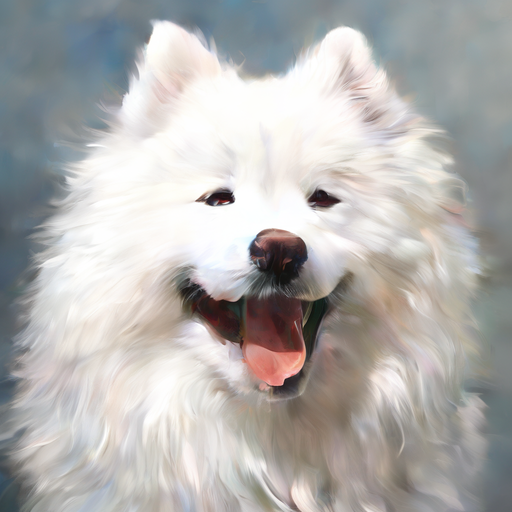} &
\includegraphics[width=\qualimgsize]{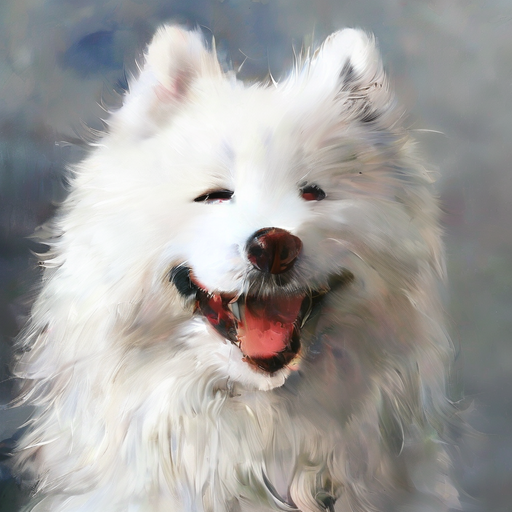} &
\includegraphics[width=\qualimgsize]{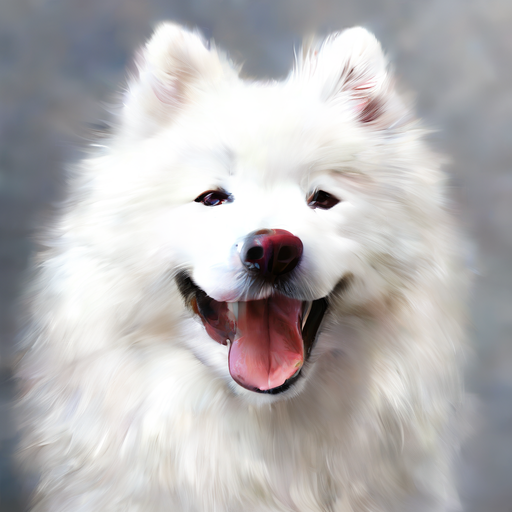} &
\includegraphics[width=\qualimgsize]{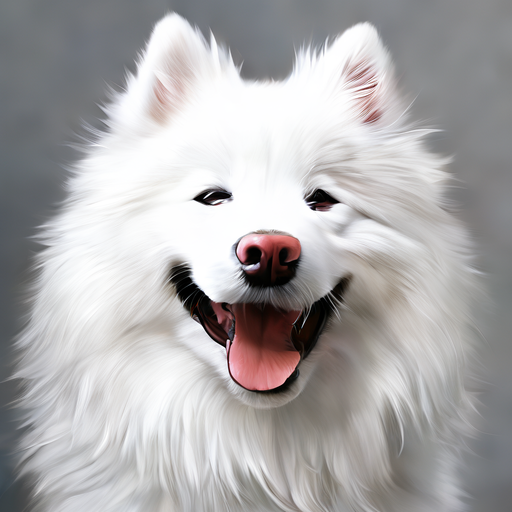} \\[-0.2em]

\promptline{happy white Samoyed}\\[0.0mm]

ImageReward: 1.576 &
&
ImageReward: 1.004 &
ImageReward: 0.908 &
ImageReward: 0.562 &
ImageReward: 1.257 &
ImageReward: 1.538 \\[0.2em]

\includegraphics[width=\qualimgsize]{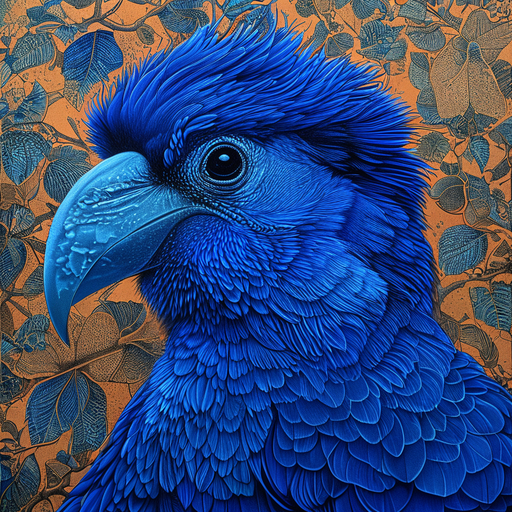} &
\imgvsep &
\includegraphics[width=\qualimgsize]{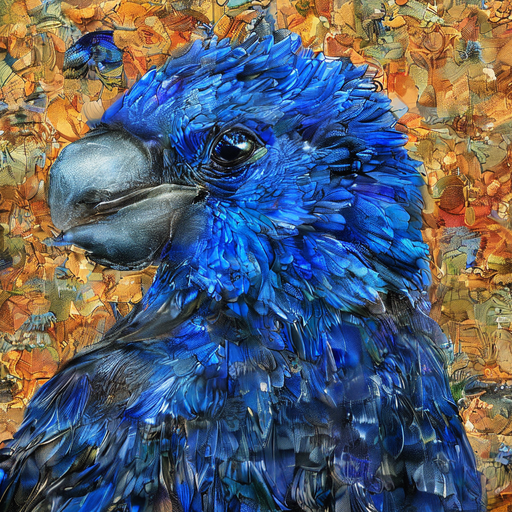} &
\includegraphics[width=\qualimgsize]{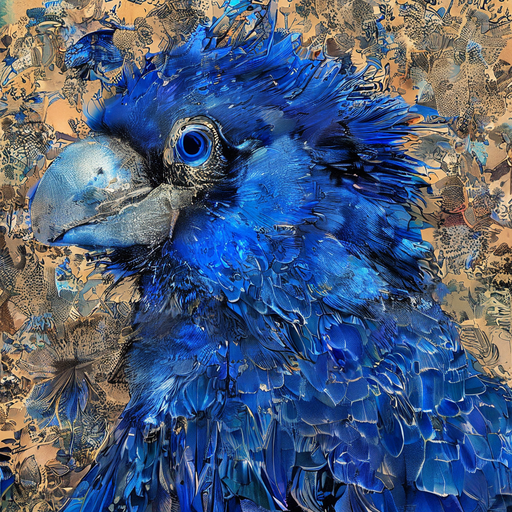} &
\includegraphics[width=\qualimgsize]{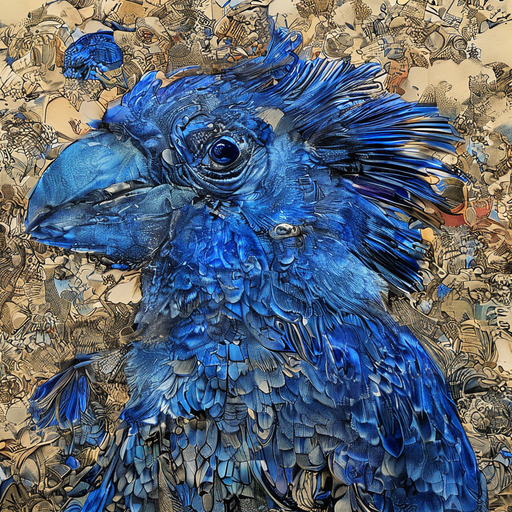} &
\includegraphics[width=\qualimgsize]{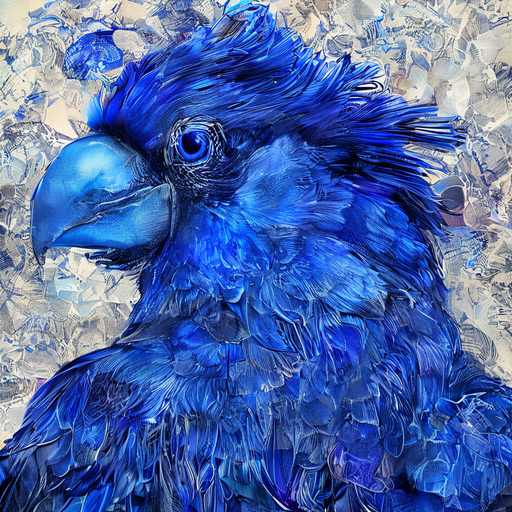} &
\includegraphics[width=\qualimgsize]{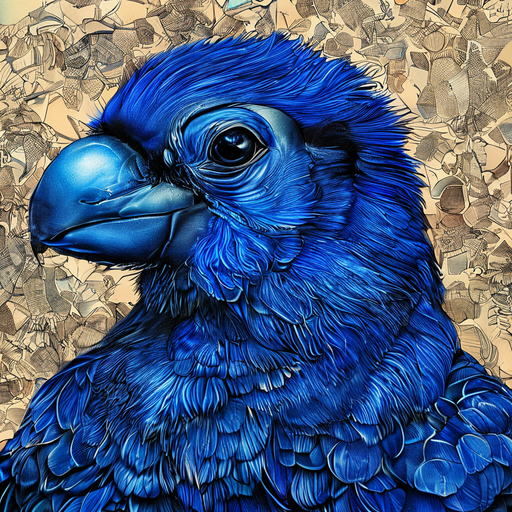} \\[-0.2em]

\promptline{a portrait of a blue coua inspired by Ogawa Kazumasa, insanely detailed and intricate}\\[0.2mm]

\multicolumn{7}{c}{\textbf{(a) SANA-1.5-1.6B}} \\[0.4mm]

\textbf{FLUX.1-dev} &
&
\textbf{OmniQuant} &
\textbf{SVDQuant} &
\textbf{ConvRot} &
\textbf{PermuQuant} &
\textbf{PermuQuant-H} \\

ImageReward: 1.516 &
&
ImageReward: 1.683 &
ImageReward: 0.723 &
ImageReward: 1.167 &
ImageReward: 1.436 &
ImageReward: 1.560 \\[0.2em]

\includegraphics[width=\qualimgsize]{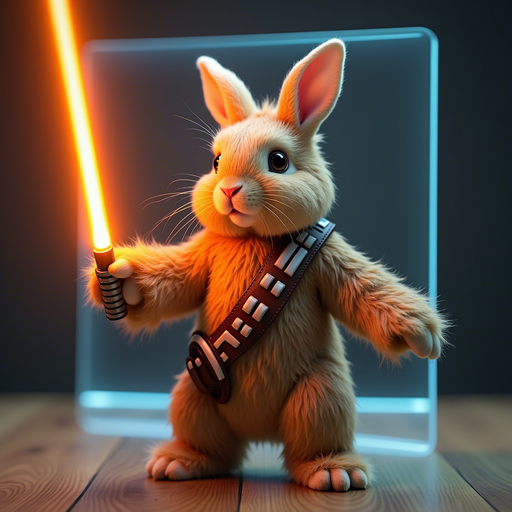} &
\imgvsep &
\includegraphics[width=\qualimgsize]{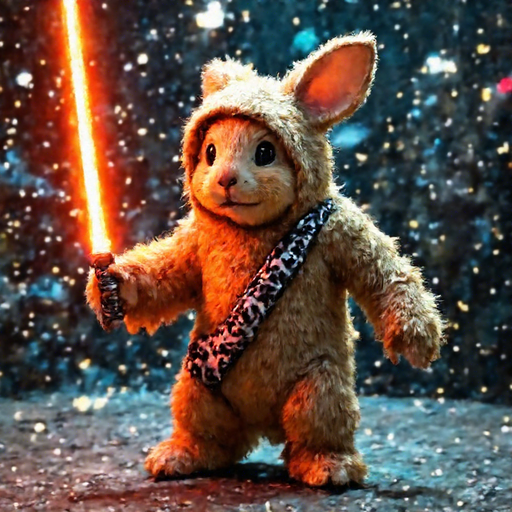} &
\includegraphics[width=\qualimgsize]{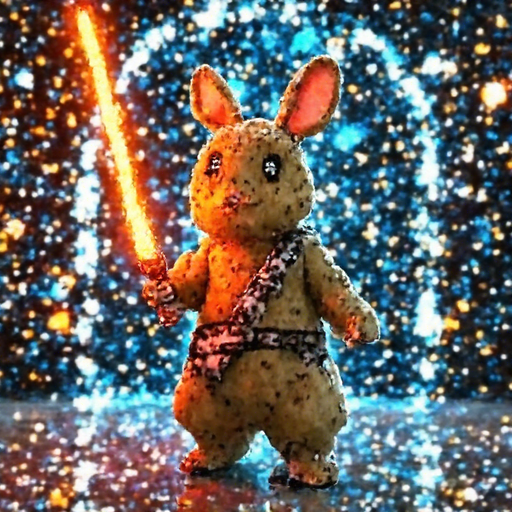} &
\includegraphics[width=\qualimgsize]{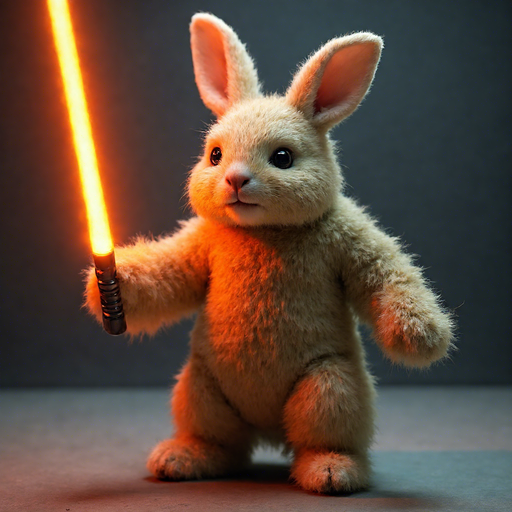} &
\includegraphics[width=\qualimgsize]{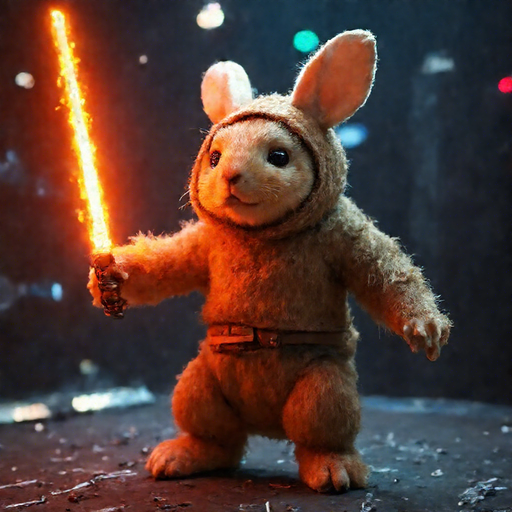} &
\includegraphics[width=\qualimgsize]{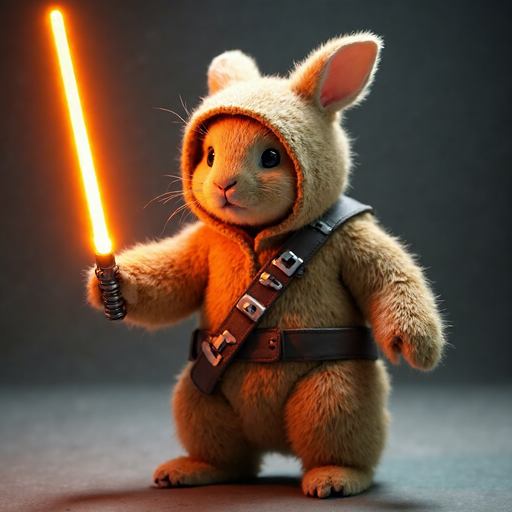} \\[-0.2em]

\promptline{a desktop holographic display showing a cute rabbit in a chewbacca onesie as a jedi with carrot lightsaber in the style of dieter rams}\\[0.5em]

ImageReward: 0.997 &
&
ImageReward: 0.905 &
ImageReward: 0.894 &
ImageReward: 0.755 &
ImageReward: 0.915 &
ImageReward: 1.060 \\[0.2em]

\includegraphics[width=\qualimgsize]{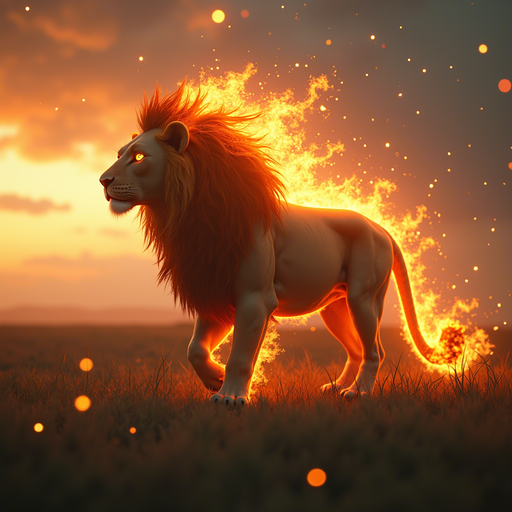} &
\imgvsep &
\includegraphics[width=\qualimgsize]{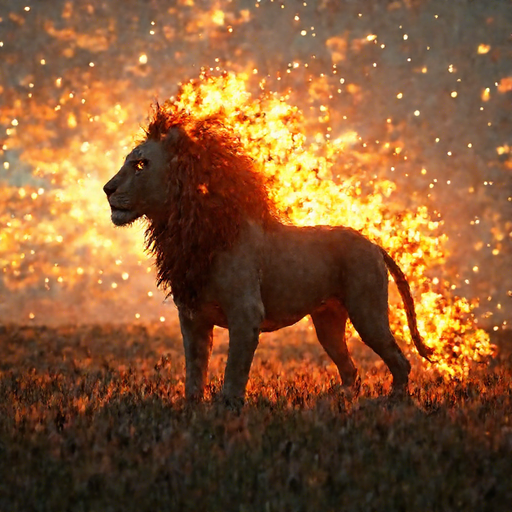} &
\includegraphics[width=\qualimgsize]{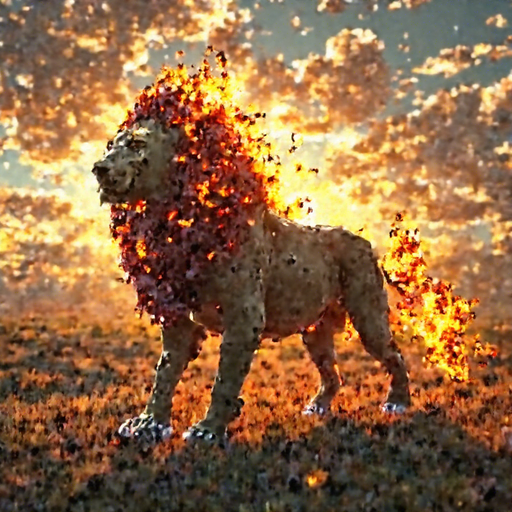} &
\includegraphics[width=\qualimgsize]{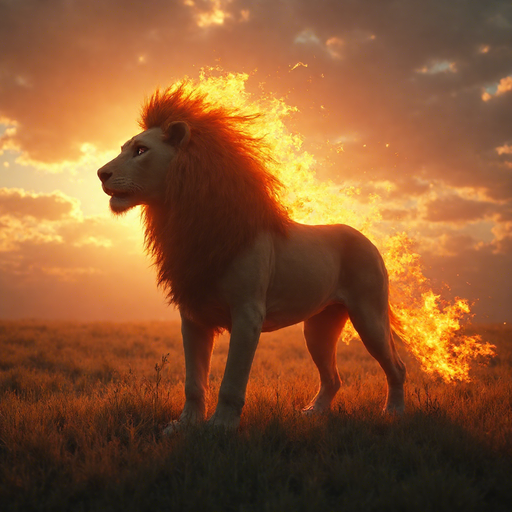} &
\includegraphics[width=\qualimgsize]{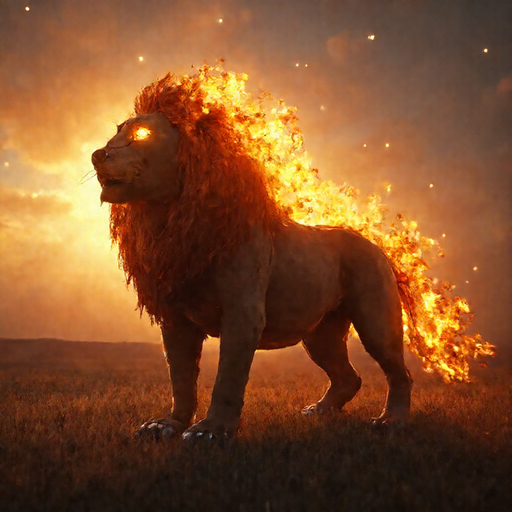} &
\includegraphics[width=\qualimgsize]{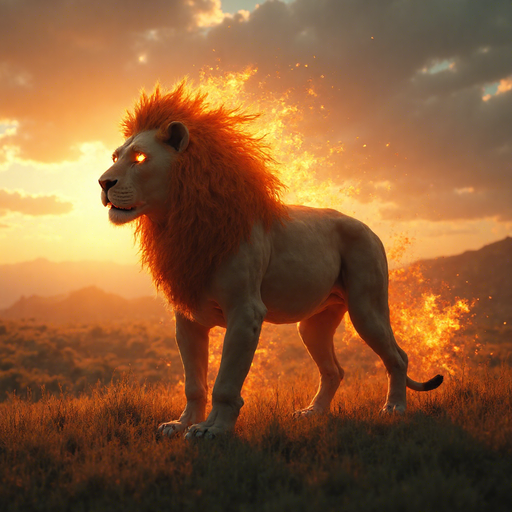} \\[-0.2em]

\promptline{A lion God with eyes and a mane of living fire looking over a vast grasslands imagination, inception style, detailed realistic, Unreal Engine, Cinematic, Color Grading, fullbody Photography...}\\[1.2mm]

\multicolumn{7}{c}{\textbf{(b) FLUX.1-dev}} \\[-0.2mm]

\end{tabular}
\end{adjustbox}

\vspace{-1mm}
\caption{
Qualitative comparison on SANA-1.5-1.6B and FLUX.1-dev.
All methods are evaluated under the W3A3 setting. The Image Reward score of each generated image is shown in the figure.
}
\label{fig:main_visual_results}
\vspace{-5mm}
\end{figure*}

\vspace{-2mm}
\subsection{Comparison with State-of-the-art Methods}
\vspace{-2mm}
We compare our method with representative PTQ methods, including RTN~\cite{nagel2021white}, SmoothQuant~\cite{xiao2023smoothquant}, OmniQuant~\cite{OmniQuant}, SVDQuant~\cite{li2024svdquant} and ConvRot~\cite{huang2025convrot}.
All methods are evaluated under 3-bit symmetric per-group quantization on three diffusion models: Z-Image-Turbo~\cite{cai2025z}, FLUX.1-dev~\cite{flux2024}, and SANA-1.5-1.6B~\cite{xie2025sana}.
We report two variants of our method, \textbf{PermuQuant} and \textbf{PermuQuant-H}, where the latter applies reordering after the Hadamard transformation~\cite{ashkboos2024quarot}.

\textbf{Quantitative results.}
Table~\ref{tab:main_results} reports the quantitative comparison on MJHQ30K and GenEval.
Across the three evaluated diffusion models, direct 3-bit RTN causes severe degradation, especially on FID and KID.
Existing PTQ methods alleviate this issue, but their gains vary.
Our method consistently improves low-bit performance.
On Z-Image-Turbo, \textbf{PermuQuant-H} reduces FID from 86.5 to 80.4 and KID from 14.1 to 12.3 compared with ConvRot, while improving the GenEval overall score from 0.81 to 0.83.
For FLUX.1-dev, \textbf{PermuQuant-H} achieves the best FID and KID among quantized methods and matches the best GenEval overall score.
On SANA-1.5-1.6B, \textbf{PermuQuant} already improves the GenEval overall score from 0.68 to 0.71 over ConvRot.
With Hadamard transformation, \textbf{PermuQuant-H} further reduces FID from 98.1 to 68.0 and KID from 24.8 to 5.50, while improving ImageReward from 0.440 to 0.919.
These results show that our reordering strategy is effective across different diffusion backbones and can be further strengthened by Hadamard transformation.

\textbf{Qualitative results.}
We show qualitative comparisons in Fig.~\ref{fig:main_visual_results}.
Under W3A3 quantization, existing PTQ methods can produce noticeable generation artifacts, such as noisy backgrounds, unstable textures, and degraded color consistency.
They may also weaken the semantic alignment with the prompt, especially for fine-grained details.
In contrast, our method generates images that are visually closer to the BF16 outputs.
It better preserves object structures, colors, and styles across different prompts and diffusion backbones.
More visual results are provided in Appendix~\ref{app:visual_quality_results}.

\vspace{-2mm}
\subsection{Ablation Study}
\vspace{-2mm}
We conduct ablation studies on Z-Image-Turbo to analyze the main design choices of our method.
The first three studies are performed with Hadamard transformation, while the group-size study is performed without Hadamard transformation.
The results are summarized in Tab.~\ref{tab:ablation}.

\textbf{Reordering criterion.}
We first compare different criteria for channel reordering.
Sorting by the second moment achieves the best FID and ImageReward.
Compared with variance-based sorting, it improves ImageReward from 0.629 to 0.715.
Random, absmax, and round-robin ordering are less effective.
This supports our choice of second moment as the reordering criterion.

\begin{table}[t]
\centering
\caption{Ablation studies. We report FID, ImageReward (IR), and GenEval overall score. B and R denote the baseline and reordered variants, respectively.}
\label{tab:ablation}
\renewcommand{\arraystretch}{1.12}

\begin{minipage}[t]{0.64\linewidth}
\centering
\scalebox{0.75}{
\begin{tabular}{cccccc}
\toprule
\rowcolor{gray!18}
Metric & Random & Absmax & Round-Robin & Variance & Second Moment \\
\midrule
FID $\downarrow$     & 89.0 & 85.3 & 83.5 & 83.3 & 80.4 \\
IR $\uparrow$        & 0.608 & 0.638 & 0.630 & 0.629 & 0.715 \\
GenEval $\uparrow$   & 0.83 & 0.84 & 0.77 & 0.82 & 0.83 \\
\bottomrule
\end{tabular}
}\\[0.5mm]
{\footnotesize (a) Reordering criterion.}
\end{minipage}
\hfill
\begin{minipage}[t]{0.35\linewidth}
\centering
\scalebox{0.75}{
\begin{tabular}{cccc}
\toprule
\rowcolor{gray!18}
Metric & Weight & Activation & Joint \\
\midrule
FID $\downarrow$     & 87.7 & 83.2 & 80.4 \\
IR $\uparrow$        & 0.621 & 0.647 & 0.715 \\
GenEval $\uparrow$   & 0.82 & 0.84 & 0.83 \\
\bottomrule
\end{tabular}
}\\[0.5mm]
{\footnotesize (b) Source of statistics.}
\end{minipage}

\vspace{1.5mm}

\begin{minipage}[t]{0.36\linewidth}
\centering
\scalebox{0.72}{
\begin{tabular}{ccccc}
\toprule
\rowcolor{gray!18}
Metric &~ No $\tau$ ~&~ $\tau$=0 ~&~~ $\tau$=2.5 ~~&~ $\tau$=5 ~\\
\midrule
FID $\downarrow$     & 85.4 & 80.4 & 80.3 & 82.8 \\
IR $\uparrow$        & 0.624 & 0.715 & 0.679 & 0.635 \\
GenEval $\uparrow$   & 0.83 & 0.83 & 0.83 & 0.81 \\
\bottomrule
\end{tabular}
}
\\[0.5mm]
{\footnotesize (c) Threshold choices.}
\end{minipage}
\hfill
\begin{minipage}[t]{0.60\linewidth}
\centering
\scalebox{0.72}{
\begin{tabular}{c|cc|cc|cc}
\toprule
\rowcolor{gray!18}
Metric 
& $g$=32 B & $g$=32 R
& $g$=128 B & $g$=128 R
& $g$=256 B & $g$=256 R \\
\midrule
FID $\downarrow$     & 251 & 135 & 77.8 & 74.3 & 93.9 & 85.8 \\
IR $\uparrow$        & -1.61 & -0.163 & 0.814 & 0.846 & 0.533 & 0.563 \\
GenEval $\uparrow$   & 0.32 & 0.74 & 0.81 & 0.83 & 0.82 & 0.84 \\
\bottomrule
\end{tabular}
}
\\[0.5mm]
{\footnotesize (d) Group size.}
\end{minipage}

\end{table}

\begin{figure}[t]
    \centering
    \vspace{-2mm}
    \includegraphics[width=1.0\linewidth]{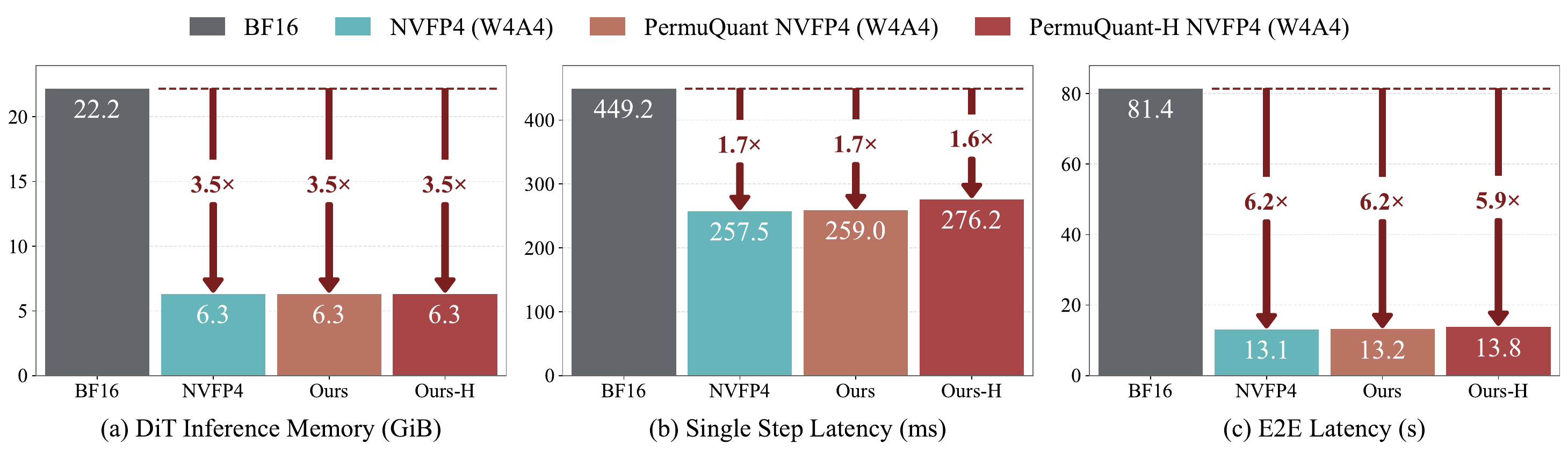}
    \caption{
        Efficiency comparison on FLUX.1-dev using an RTX 5090 Desktop 32GB GPU.
        Reordering introduces almost no extra overhead compared with NVFP4. By eliminating CPU offloading, our NVFP4 variants achieve up to a 6.3$\times$ E2E latency speedup over the BF16 baseline.
    }
    \label{fig:speedup}
    \vspace{-4mm}
\end{figure} 

\textbf{Source of statistics.}
We next study whether the reordering statistics should be computed from weights, activations, or both according to our joint sorting strategy.
Activation statistics outperform weight statistics, and joint second-moment reordering further improves FID and ImageReward.
This shows the benefit of accounting for both activation and weight quantization difficulty.

\textbf{Threshold choices.}
We also evaluate different thresholds for accepting a reordered permutation.
Without the acceptance threshold, reordering can hurt performance, leading to worse FID and ImageReward.
Using $\tau=0$ gives the best ImageReward and achieves strong FID and GenEval scores.
A moderate threshold, $\tau=2.5$, gives a similar FID and the same GenEval score.
However, a larger threshold degrades the results.
We therefore use $\tau=0$ as the default setting.

\textbf{Group size.}
Finally, we evaluate the effect of reordering under different group sizes.
For $g=32$, we use the 3-bit setting.
For $g=128$ and $g=256$, we use the 4-bit setting.
Reordering consistently improves the baseline under all group sizes.
The gain is especially large for $g=32$, where FID decreases from 251 to 135 and GenEval improves from 0.32 to 0.74.
These results show that our method is effective across different quantization granularities.

\vspace{-2mm}
\subsection{Efficiency Comparison}
\vspace{-2mm}
In Fig.~\ref{fig:speedup}, we report the measured DiT inference memory, single-step latency, and end-to-end (E2E) latency for FLUX.1-dev on an RTX 5090 Desktop GPU with 32GB memory.
Compared with the BF16 baseline, NVFP4 reduces the DiT inference memory from 22.2 GiB to 6.3 GiB, corresponding to a 3.5$\times$ reduction.
Our method retains the same memory footprint and introduces almost no extra overhead over NVFP4.
It also achieves nearly identical single-step latency, with 257.5 ms for NVFP4, 259.0 ms for PermuQuant, and 276.2 ms for PermuQuant-H.
For E2E latency, our NVFP4 variants reduce the latency from 81.4 s to less than 14 s, achieving up to a 6.3$\times$ speedup over the BF16 baseline. Here, E2E latency includes text encoding, DiT denoising, and VAE decoding.
Notably, this large E2E gain is partly because quantization eliminates the CPU offloading required by BF16 inference, allowing the model to remain entirely in GPU memory.

\vspace{-2mm}
\section{Conclusion}
\vspace{-2mm}
\label{sec:conclusion}

In this paper, we propose PermuQuant, a simple and effective low-bit quantization framework for diffusion models based on channel reordering. Our key observation is that, under per-group quantization, the channel order determines the induced group partitions and therefore has a substantial impact on quantization error. Based on this observation, PermuQuant groups channels with similar activation and weight statistics together to reduce within-group range mismatch. We then use a calibration-based rule to apply reordering only when it reliably lowers quantization error. Extensive experiments across multiple large diffusion models show that PermuQuant consistently preserves generation quality even under extremely low-bit quantization.
On FLUX.1-dev with an RTX 5090, PermuQuant achieves up to a \textbf{1.7}$\times$ single step speedup and reduces the DiT memory footprint by \textbf{3.5}$\times$ under W4A4 NVFP4 quantization. These results highlight channel reordering as a simple and practical direction for low-bit diffusion model compression.

\bibliographystyle{plainnat}
\bibliography{references} 

%%%%%%%%%%%%%%%%%%%%%%%%%%%%%%%%%%%%%%%%%%%%%%%%%%%%%%%%%%%%
\clearpage
\appendix
\section{Table of Contents}
\label{app:contents}
In the supplementary material, we provide complete proofs, implementation details, analysis, and results, including:
\begin{itemize}
    \item Sec.~\ref{app:proofs}: Proofs of the expected quantization error bound, the optimality of second-moment sorting, and the approximation guarantee.
    \item Sec.~\ref{app:detail_deploy}: Additional details for runtime-free deployment.
    \item Sec.~\ref{app:reordering_hadamard_basis}: Discussion of how to combine reordering with the Hadamard transform.
    \item Sec.~\ref{app:implementation_detail}: Implementation details.
    \item Sec.~\ref{app:limitation}: Limitations of PermuQuant.
    \item Sec.~\ref{app:broader_impact}: Broader impact.
    \item Sec.~\ref{app:additional_results}: Additional results, including the ablation of Hadamard transformation and additional visual quality comparisons.
\end{itemize}

\section{Proofs}
\label{app:proofs}

\subsection{Proof of Eq.~\eqref{eq:exp_error_pre}}
\label{app:proof_eq6}

We prove that
\begin{equation}
    \mathbb{E}_{\mathbf{x}}\|\mathbf{e}\|_2^2
    \le
    \frac{g}{4Q^2}
    \sum_k
    \mathbb{E}_{\mathbf{x}}\left[
    \max_{i\in G_k}|x_i|^2
    \right].
\end{equation}

\noindent\textit{Proof.}
Let $e_i=x_i-\hat{x}_i$ denote the quantization error of the $i$-th element.
For each group $G_k$, the quantization scale is
\begin{equation}
    s_k = \frac{\max_{i\in G_k}|x_i|}{Q}.
\end{equation}
Under round-to-nearest quantization, $|e_i|\le s_k/2$ for $i\in G_k$.
Thus,
\begin{equation}
\begin{aligned}
    \|\mathbf{e}\|_2^2
    &= \sum_k \sum_{i\in G_k} e_i^2 \\
    &\le \sum_k \sum_{i\in G_k} \frac{s_k^2}{4} \\
    &= \sum_k \frac{g}{4}
    \left(\frac{\max_{i\in G_k}|x_i|}{Q}\right)^2 \\
    &= \frac{g}{4Q^2}
    \sum_k \max_{i\in G_k}|x_i|^2 .
\end{aligned}
\end{equation}
Taking expectation over $\mathbf{x}$ gives
\begin{equation}
    \mathbb{E}_{\mathbf{x}}\|\mathbf{e}\|_2^2
    \le
    \frac{g}{4Q^2}
    \sum_k
    \mathbb{E}_{\mathbf{x}}\left[
    \max_{i\in G_k}|x_i|^2
    \right],
\end{equation}
which proves Eq.~\eqref{eq:exp_error_pre}.

\subsection{Proof of Prop.~\ref{prop:second_moment_sorting}}
\label{app:proof_optimal_bound}

We prove that sorting channels by $\mu_i^2$ and then partitioning them into contiguous groups minimizes
\begin{equation}
    \sum_k \max_{i\in G_k}\mu_i^2 .
\end{equation}

\noindent\textit{Proof.}
Let $a_i=\mu_i^2$ and assume, without loss of generality, that
\begin{equation}
    a_1 \ge a_2 \ge \cdots \ge a_d .
\end{equation}
Let the group size be $g$ and $K=d/g$.
Under the sorted partition, the group maxima are
\begin{equation}
    a_1,\, a_{g+1},\, a_{2g+1},\,\ldots,\,a_{(K-1)g+1}.
\end{equation}
Thus, its objective value is
\begin{equation}
    S_{\mathrm{sort}}
    =
    \sum_{k=1}^{K} a_{(k-1)g+1}.
\end{equation}

For any partition into $K$ groups of size $g$, consider the first $(k-1)g+1$ largest elements
\begin{equation}
    a_1,\ldots,a_{(k-1)g+1}.
\end{equation}
Since each group contains at most $g$ elements, these elements must occupy at least $k$ different groups.
Therefore, the $k$-th largest group maximum of any partition is at least
\begin{equation}
    a_{(k-1)g+1}.
\end{equation}
Let $m_1\ge m_2\ge\cdots\ge m_K$ be the group maxima of any partition sorted in descending order.
Then
\begin{equation}
    m_k \ge a_{(k-1)g+1}, \quad k=1,\ldots,K.
\end{equation}
It follows that
\begin{equation}
    \sum_{k=1}^{K} m_k
    \ge
    \sum_{k=1}^{K} a_{(k-1)g+1}
    =
    S_{\mathrm{sort}}.
\end{equation}
Thus, the sorted partition minimizes $\sum_k \max_{i\in G_k}\mu_i^2$.

\subsection{Proof of Theorem~\ref{thm:second_moment_sorting}}
\label{app:proof_permuquant}

\noindent\textbf{Theorem~\ref{thm:second_moment_sorting}
(Approximation guarantee).}
\textit{Let $\mathcal{E}(P):=\mathbb{E}_{x}\mathbb{E}_{\epsilon\mid x}\|\epsilon\|_2^2$ denote the expected quantization error under the uniform-noise approximation~\cite{marco2005validity}. $\epsilon$ is the additive quantization noise and $\epsilon_i\mid x \sim \mathrm{Unif}(-s_k/2,s_k/2)$ for $i\in G_k$, with $s_k=\max_{j\in G_k}|x_j|/Q$. Let $P_{\mathrm{sort}}$ be the partition obtained by sorting channels in descending order of $\mu_i^2$, where $\mathbb{E}[x_i^2] = \mu_i^2$. Assume that Assumption~\ref{assump:extremal_control} holds. Then}
\begin{equation}
    \mathcal{E}(P_{\mathrm{sort}})
    \le
    C \log(2g)\cdot \min_P \mathcal{E}(P).
\end{equation}

\noindent\textit{Proof.}
Let $P=\{G_1,\ldots,G_K\}$ be a partition of the $d$ channels into
$K=d/g$ groups, where each group has size $g$.
For each group $G_k$, define
\begin{equation}
    M_k := \max_{i\in G_k} |x_i|.
\end{equation}
Under online dynamic per-group symmetric quantization, the quantization
step size of group $G_k$ is
\begin{equation}
    s_k = \frac{M_k}{Q},
\end{equation}
where $Q$ is the largest representable integer magnitude determined by
the bit-width.

Under the uniform-noise approximation, the quantization error is modeled
as additive noise $\epsilon$. Conditioned on the input activation $x$, for
each $i\in G_k$, we have
\begin{equation}
    \epsilon_i \mid x \sim \mathrm{Unif}\left(-\frac{s_k}{2}, \frac{s_k}{2}\right).
\end{equation}
Therefore,
\begin{equation}
    \mathbb{E}_{\epsilon\mid x}\left[\epsilon_i^2\right]
    =
    \frac{s_k^2}{12},
    \qquad i\in G_k .
\end{equation}
It follows that the conditional expected noise energy of group $G_k$ is
\begin{equation}
    \mathbb{E}_{\epsilon\mid x}
    \left[
        \sum_{i\in G_k}\epsilon_i^2
    \right]
    =
    \sum_{i\in G_k}
    \mathbb{E}_{\epsilon\mid x}
    \left[\epsilon_i^2\right]
    =
    g\cdot \frac{s_k^2}{12}
    =
    \frac{g}{12Q^2}M_k^2 .
\end{equation}
Taking expectation over the activation distribution and summing over all
groups gives
\begin{equation}
    \mathcal{E}(P)
    =
    \mathbb{E}_{x}\mathbb{E}_{\epsilon\mid x}\|\epsilon\|_2^2
    =
    \frac{g}{12Q^2}
    \sum_{k=1}^{K}
    \mathbb{E}_{x}\left[M_k^2\right].
    \label{eq:app_error_group_max}
\end{equation}

Next, by Assumption~\ref{assump:extremal_control}, for every group
$G_k$,
\begin{equation}
    \mathbb{E}_{x}\left[M_k^2\right]
    =
    \mathbb{E}_{x}\left[\max_{i\in G_k}|x_i|^2\right]
    \le
    C\log(2g)\cdot \max_{i\in G_k}\mu_i^2 ,
\end{equation}
where $\mu_i^2=\mathbb{E}_{x}[x_i^2]$ denotes the second moment of
channel $i$.
Substituting this bound into Eq.~\eqref{eq:app_error_group_max}, we get
\begin{equation}
    \mathcal{E}(P)
    \le
    \frac{Cg\log(2g)}{12Q^2}
    \sum_{k=1}^{K}
    \max_{i\in G_k}\mu_i^2 .
    \label{eq:app_upper_bound}
\end{equation}

We also need a matching lower bound. For any group $G_k$, let
\begin{equation}
    i_k^* \in \arg\max_{i\in G_k}\mu_i^2 .
\end{equation}
Since
\begin{equation}
    M_k^2
    =
    \max_{i\in G_k}|x_i|^2
    \ge
    |x_{i_k^*}|^2 ,
\end{equation}
taking expectation over $x$ gives
\begin{equation}
    \mathbb{E}_{x}\left[M_k^2\right]
    \ge
    \mathbb{E}_{x}\left[|x_{i_k^*}|^2\right]
    =
    \mu_{i_k^*}^2
    =
    \max_{i\in G_k}\mu_i^2 .
\end{equation}
Substituting this into Eq.~\eqref{eq:app_error_group_max}, we obtain
\begin{equation}
    \mathcal{E}(P)
    \ge
    \frac{g}{12Q^2}
    \sum_{k=1}^{K}
    \max_{i\in G_k}\mu_i^2 .
    \label{eq:app_lower_bound}
\end{equation}

Define the proxy objective
\begin{equation}
    \Phi(P)
    :=
    \sum_{k=1}^{K}
    \max_{i\in G_k}\mu_i^2 .
\end{equation}
Equations~\eqref{eq:app_upper_bound} and~\eqref{eq:app_lower_bound}
imply the sandwich bound
\begin{equation}
    \frac{g}{12Q^2}\Phi(P)
    \le
    \mathcal{E}(P)
    \le
    \frac{Cg\log(2g)}{12Q^2}\Phi(P).
    \label{eq:app_sandwich}
\end{equation}

By the optimality of second-moment sorting shown in
Appendix~\ref{app:proof_optimal_bound}, $P_{\mathrm{sort}}$ minimizes
the proxy objective $\Phi(P)$. Therefore, for any valid partition $P$,
\begin{equation}
    \Phi(P_{\mathrm{sort}}) \le \Phi(P).
\end{equation}

Let
\begin{equation}
    P^* \in \arg\min_P \mathcal{E}(P)
\end{equation}
be the optimal partition with respect to $\mathcal{E}(P)$.
Since $P_{\mathrm{sort}}$ minimizes $\Phi(P)$, we have
\begin{equation}
    \Phi(P_{\mathrm{sort}})
    \le
    \Phi(P^*) .
\end{equation}
Using the upper bound in Eq.~\eqref{eq:app_sandwich}, we get
\begin{equation}
    \mathcal{E}(P_{\mathrm{sort}})
    \le
    \frac{Cg\log(2g)}{12Q^2}
    \Phi(P_{\mathrm{sort}})
    \le
    \frac{Cg\log(2g)}{12Q^2}
    \Phi(P^*) .
\end{equation}
Using the lower bound in Eq.~\eqref{eq:app_sandwich} on $P^*$ gives
\begin{equation}
    \Phi(P^*)
    \le
    \frac{12Q^2}{g}\mathcal{E}(P^*) .
\end{equation}
Combining the two inequalities yields
\begin{equation}
    \mathcal{E}(P_{\mathrm{sort}})
    \le
    C\log(2g)\cdot \mathcal{E}(P^*)
    =
    C\log(2g)\cdot \min_P \mathcal{E}(P).
\end{equation}
This proves that second-moment based sorting is a
$C\log(2g)$-approximation to the optimal partition with respect to the
expected quantization error under the uniform-noise approximation.

\section{Additional Detail for Run-time Free Deployment.}
\label{app:detail_deploy}

As illustrated in Fig.~\ref{fig:permuquant}(b), the activation-side permutation before each linear layer can be absorbed either into its preceding linear layer or into its preceding normalization layer.
The weight-side permutation can also be pre-applied offline.
Therefore, PermuQuant does not require an explicit permutation operator during inference.
Here, we provide the technical details for the normalization case.
In particular, we show that channel reordering can be fused into RMSNorm or LayerNorm with negligible additional runtime cost.
This is possible because both RMSNorm and LayerNorm are permutation-equivariant along the normalized channel dimension.
Let $P \in \{0,1\}^{d \times d}$ denote a channel permutation matrix acting on a token vector $x \in \mathbb{R}^d$, where $d$ is the channel dimension.

\textbf{RMSNorm.}
RMSNorm computes
\begin{equation}
\mathrm{RMSNorm}(x; \gamma)
=
\frac{x}{\sqrt{\frac{1}{d}\sum_{i=1}^{d} x_i^2 + \varepsilon}}
\odot \gamma ,
\end{equation}
where $\gamma \in \mathbb{R}^d$ is the channel-wise scale.
Since a permutation does not change the sum of squares,
\begin{equation}
\sum_{i=1}^{d} (Px)_i^2
=
\sum_{i=1}^{d} x_i^2 ,
\end{equation}
the normalization factor is invariant to channel reordering.
Therefore,
\begin{equation}
P\,\mathrm{RMSNorm}(x; \gamma)
=
\mathrm{RMSNorm}(Px; P\gamma).
\end{equation}

\textbf{LayerNorm.}
LayerNorm computes
\begin{equation}
\mathrm{LN}(x)
=
\frac{x-\mu(x)}{\sqrt{\sigma^2(x)+\varepsilon}},
\qquad
\mu(x)
=
\frac{1}{d}\sum_{i=1}^{d} x_i,
\qquad
\sigma^2(x)
=
\frac{1}{d}\sum_{i=1}^{d}(x_i-\mu(x))^2 .
\end{equation}
Both the mean and variance are invariant to channel permutation:
\begin{equation}
\mu(Px)=\mu(x),
\qquad
\sigma^2(Px)=\sigma^2(x).
\end{equation}
Thus,
\begin{equation}
P\,\mathrm{LN}(x)
=
\mathrm{LN}(Px).
\end{equation}
If LayerNorm is followed by channel-wise modulation,
\begin{equation}
y = (1+s)\odot \mathrm{LN}(x) + b ,
\end{equation}
then the same permutation can be applied to the modulation vectors:
\begin{equation}
P y
=
(1+Ps)\odot \mathrm{LN}(Px) + Pb .
\end{equation}
Therefore, for LayerNorm with adaptive scale and shift, channel reordering can be moved before normalization as long as the modulation vectors are reordered accordingly.

\textbf{Why the fusion is nearly zero-cost.}
A naive implementation first computes normalization and then launches a separate gather kernel:
\begin{equation}
x
\xrightarrow{\mathrm{Norm}}
y
\xrightarrow{\mathrm{Reorder}}
y_P .
\end{equation}
This introduces an additional activation read, an additional activation write, and an extra kernel launch.
In contrast, our fused implementation performs
\begin{equation}
x
\xrightarrow{\text{indexed load}}
x_P
\xrightarrow{\mathrm{Norm\;in\;registers}}
y_P
\end{equation}
within a single normalization kernel.

The key observation is that normalization already needs to read the full input row.
We therefore replace the original sequential load with an indexed load according to the permutation, while keeping the remaining computation unchanged.
Specifically, the fused kernel:
\begin{enumerate}
    \item loads one activation row as $x[\pi(1)],\ldots,x[\pi(K)]$ from global memory;
    \item computes the RMS statistic or the mean and variance on the loaded values;
    \item applies the corresponding channel-wise scale or modulation;
    \item writes the reordered normalized output contiguously.
\end{enumerate}
In this way, the reordering is absorbed into the mandatory input-read stage of normalization.
The fused kernel avoids a standalone reorder pass over the activation tensor.
The only additional work is reading the permutation indices and generating indexed memory addresses.
This is much cheaper than materializing an intermediate tensor and performing an extra global-memory gather and writeback.
Thus, the activation-side permutation is nearly zero-cost at runtime.
For adaptive LayerNorm, which is commonly used in DiT Models~\cite{peebles2023scalable}, the same strategy applies to the dynamic modulation parameters. 
During the kernel load stage, the per-channel scale and shift are gathered on the fly using the same permutation. 
This avoids materializing reordered modulation tensors through a separate pass.

\textbf{Implementation detail.}
In our Triton implementation, each program processes one row of the normalized tensor.
When reordering is enabled, the kernel first loads the permutation indices $\pi(j)$.
It then directly issues indexed global-memory loads, rather than materializing a reordered tensor:
\begin{equation}
x_j \leftarrow x[\pi(j)] .
\end{equation}
The subsequent normalization and channel-wise affine or modulation operations are computed on these loaded values.
The final output is stored contiguously.
As a result, channel reordering does not appear as a separate operator.
It is absorbed into the row load of RMSNorm or LayerNorm.

\section{Reordering in the Hadamard-Transformed Basis}
\label{app:reordering_hadamard_basis}

In PermuQuant-H, reordering is performed after the Hadamard transformation.
This is because per-group quantization is applied to the Hadamard-transformed representation.
Let $\mathbf{H}$ denote the Hadamard transform and $\mathbf{P}$ denote the permutation matrix.
If reordering is applied before the Hadamard transformation, the representation becomes $\mathbf{x}\mathbf{P}\mathbf{H}$.
The Hadamard transform mixes the permuted channels again.
Thus, the pre-Hadamard order does not directly determine the contiguous groups that are quantized.
In contrast, applying reordering after the Hadamard transformation gives $\mathbf{x}\mathbf{H}\mathbf{P}$.
The permutation then directly determines which transformed channels share a per-group quantization scale.
Therefore, the reordering is aligned with the actual representation being quantized.

This design does not require explicitly storing the matrix $\mathbf{H}\mathbf{P}$.
In practice, the Hadamard transform is implemented by a fast Hadamard transform rather than by materializing $\mathbf{H}$.
The post-Hadamard permutation can be fused into the output layout of the Hadamard kernel.
Equivalently, the Hadamard computation is kept unchanged, and only the order in which transformed channels are written or consumed is modified.
The additional cost is therefore limited to lightweight index mapping, rather than an extra matrix multiplication.
When implemented as part of the Hadamard, reordering avoids an extra global-memory pass and preserves the runtime-free property up to negligible indexing overhead.

\section{Implementation Details}
\label{app:implementation_detail}
We conduct experiments under the 3-bit setting.
We adopt per-group symmetric quantization for both activations and weights.
For FLUX.1-dev, we use a group size of 64.
For Z-Image-Turbo and SANA-1.5-1.6B, we use a group size of 32.
For joint second-moment reordering, we search $\alpha$ from $\{0,0.2,0.4,0.6,0.8,1.0\}$ and select the value with the smallest calibration quantization error.
The acceptance threshold $\tau$ is set to 0 by default.
For ConvRot, the group size of grouped rotations is set to 64. For SmoothQuant, we compute the smoothing scales following its original formulation.
For each model, the smoothing strength $\alpha$ is selected by grid search on the calibration set.
For SVDQuant, we set the rank of the low-rank branch to 32, following the original SVDQuant setting.

\section{Limitations}
\label{app:limitation}
PermuQuant still has several limitations.
First, the current reordering strategy is heuristic.
Although our analysis provides an approximation guarantee under reasonable assumptions, the resulting permutation is not guaranteed to be globally optimal.
Future work may formulate channel reordering as an explicit optimization problem, enabling better permutation search or even optimal permutation design under practical constraints.
Second, PermuQuant relies on calibration data to estimate channel statistics and decide whether to apply reordering.
As in other PTQ methods, the quality of these estimates may depend on the representativeness of the calibration set.
Finally, our current method mainly focuses on linear layers in diffusion transformers.
Applying reordering to attention quantization remains unexplored.
Extending the reordering to these attention-specific quantization steps is an interesting and practical direction for future work.

\section{Broader Impact}
\label{app:broader_impact}
PermuQuant improves the efficiency of large diffusion models by reducing memory usage and inference latency under low-bit quantization.
This may lower the hardware barrier for deploying text-to-image generation models and reduce the energy cost of inference.
Therefore, our method can benefit researchers and practitioners with limited computational resources.

However, more efficient image generation may also lower the cost of misuse, such as generating misleading, harmful, or copyrighted visual content.
Our work does not directly address these risks.
Practical deployment should therefore be accompanied by appropriate safeguards, including content filtering, provenance tracking, watermarking, and responsible dataset governance.

\clearpage

\section{Additional Results}
\label{app:additional_results}

\subsection{Ablation of Hadamard Transform.}
\begin{table*}[t]
\centering
\caption{
Ablation study of reordering and Hadamard transformation on MJHQ30K and GenEval.
For MJHQ30K, we report FID, KID$\times 10^3$, and ImageReward (IR); for GenEval, we report the standard scores.
The best results among INT W3A3 variants are highlighted in \textbf{bold}.
}
\label{tab:ablation_hadamard}
\small
\setlength{\tabcolsep}{3pt}
\renewcommand{\arraystretch}{1.05}

\newcommand{\cmark}{\textcolor{green!60!black}{\ding{51}}}
\newcommand{\xmark}{\textcolor{red}{\ding{55}}}

\resizebox{\textwidth}{!}{
\begin{tabular}{c@{\hspace{8pt}}c@{\hspace{6pt}}cc@{\hspace{6pt}}cccccccccc}
\toprule
\multirow{2}{*}{Model}
& \multirow{2}{*}{Precision}
& \multirow{2}{*}{Reordering}
& \multirow{2}{*}{Hadamard}
& \multicolumn{3}{c}{MJHQ30K}
& \multicolumn{7}{c}{GenEval} \\
\cmidrule(lr){5-7} \cmidrule(lr){8-14}
& & & 
& FID $\downarrow$
& KID $\downarrow$
& IR $\uparrow$
& Single $\uparrow$
& Two $\uparrow$
& Count $\uparrow$
& Color $\uparrow$
& Pos. $\uparrow$
& Attr. $\uparrow$
& Overall $\uparrow$ \\
\midrule

\multirow{5}{1.2cm}{\centering \makecell{Z-Image\\-Turbo\\(8 Steps)}}
& BF16
& -- & --
& 67.6 & 5.7 & 1.001
& 1.00 & 0.97 & 0.71 & 1.00 & 0.63 & 0.83 & 0.86 \\
\cmidrule(l){2-14}

& \multirow{4}{*}{\makecell{INT\\W3A3}}
& \xmark & \xmark
& 250.8 & 183.0 & -1.609
& 0.69 & 0.21 & 0.21 & 0.47 & 0.21 & 0.10 & 0.32 \\

&
& \cmark & \xmark
& 134.9 & 49.2 & -0.163
& \textbf{0.96} & 0.87 & 0.61 & 0.93 & 0.55 & 0.50 & 0.74 \\

&
& \xmark & \cmark
& 84.2 & \textbf{12.0} & 0.608
& \textbf{0.96} & 0.95 & \textbf{0.71} & \textbf{1.00} & 0.61 & \textbf{0.78} & \textbf{0.83} \\

\rowcolor{lightblue}
\cellcolor{white}
& \cellcolor{white}
& \cmark & \cmark
& \textbf{80.4} & 12.3 & \textbf{0.715}
& \textbf{0.96} & \textbf{0.97} & 0.64 & \textbf{1.00} & \textbf{0.63} & 0.75 & \textbf{0.83} \\

\midrule

\multirow{5}{1.2cm}{\centering \makecell{FLUX.1\\-dev\\(50 Steps)}}
& BF16
& -- & --
& 60.6 & 2.8 & 0.917
& 1.00 & 0.89 & 0.61 & 0.93 & 0.68 & 0.53 & 0.77 \\
\cmidrule(l){2-14}

& \multirow{4}{*}{\makecell{INT\\W3A3}}
& \xmark & \xmark
& 131.0 & 52.4 & 0.386
& 0.81 & 0.45 & 0.29 & 0.67 & 0.32 & 0.10 & 0.44 \\

&
& \cmark & \xmark
& 74.9 & 9.1 & 0.726
& 0.96 & 0.84 & 0.50 & \textbf{0.97} & 0.58 & 0.40 & 0.71 \\

&
& \xmark & \cmark
& 68.4 & 5.0 & 0.730
& 0.96 & 0.87 & \textbf{0.64} & 0.90 & 0.68 & \textbf{0.63} & \textbf{0.78} \\

\rowcolor{lightblue}
\cellcolor{white}
& \cellcolor{white}
& \cmark & \cmark
& \textbf{64.7} & \textbf{4.6} & \textbf{0.759}
& \textbf{1.00} & \textbf{0.92} & 0.54 & 0.93 & \textbf{0.74} & 0.48 & 0.77 \\

\midrule

\multirow{5}{1.2cm}{\centering \makecell{SANA-1.5\\-1.6B\\(20 Steps)}}
& BF16
& -- & --
& 58.1 & 1.2 & 1.075
& 1.00 & 0.97 & 0.79 & 0.97 & 0.66 & 0.55 & 0.82 \\
\cmidrule(l){2-14}

& \multirow{4}{*}{\makecell{INT\\W3A3}}
& \xmark & \xmark
& 103.6 & 30.1 & 0.394
& 0.96 & 0.63 & 0.54 & 0.83 & 0.47 & 0.43 & 0.64 \\

&
& \cmark & \xmark
& 82.8 & 14.7 & 0.700
& 0.92 & 0.76 & \textbf{0.61} & 0.93 & 0.45 & 0.60 & 0.71 \\

&
& \xmark & \cmark
& 72.6 & 6.0 & 0.902
& \textbf{1.00} & 0.82 & 0.57 & 0.93 & 0.50 & \textbf{0.75} & 0.76 \\

\rowcolor{lightblue}
\cellcolor{white}
& \cellcolor{white}
& \cmark & \cmark
& \textbf{68.0} & \textbf{5.5} & \textbf{0.919}
& 0.96 & \textbf{0.89} & 0.57 & \textbf{0.97} & \textbf{0.66} & 0.73 & \textbf{0.80} \\

\bottomrule
\end{tabular}
}
\vspace{-3mm}
\end{table*}

Table~\ref{tab:ablation_hadamard} studies the effect of channel reordering and the Hadamard transform under INT W3A3 quantization. 
We compare four variants: the original W3A3 baseline, reordering only, Hadamard only, and their combination. 
Across all three models, both reordering and Hadamard transformation consistently improve over the direct W3A3 baseline. 
For example, on Z-Image-Turbo, reordering alone reduces FID from 250.8 to 134.9, while Hadamard alone further reduces it to 84.2. 
This shows that both operations effectively reduce low-bit quantization error.

More importantly, combining reordering with the Hadamard transform gives the best overall performance in most cases. 
On Z-Image-Turbo, the combined variant achieves the best FID, ImageReward, and GenEval overall score among INT W3A3 variants. 
Similar trends are observed on FLUX.1-dev and SANA-1.5-1.6B, where the combined variant obtains the lowest FID and KID, and also gives the strongest ImageReward. 
These results indicate that reordering is complementary to Hadamard-based rotation. 
While Hadamard transformation redistributes channel information globally, our reordering further improves the induced per-group partitions after rotation.

The two components also differ in efficiency cost. 
Reordering introduces almost no additional inference overhead, since the permutations can be folded into adjacent modules or pre-applied to the weights offline. 
By contrast, Hadamard transformation cannot always be fully fused in modern diffusion architectures. 
This is because these models commonly use AdaLN-based modulation~\cite{peebles2023scalable}, where part of the transformation interacts with input-dependent scaling and shifting operations. 
As a result, some Hadamard matrices must be applied during inference, which increases the end-to-end latency by about 8\% in our implementation. 
Thus, Hadamard transformation introduces an efficiency--quality trade-off in practice. 
Reordering alone is preferable when latency is critical, while the combined variant provides better quality when a small latency increase is acceptable.

\subsection{Visual Quality Results}
\label{app:visual_quality_results}

We provide additional visual comparisons in
Figs.~\ref{fig:sana_additional_vis}, \ref{fig:flux_additional_vis}, and
\ref{fig:zimage_additional_vis}.
These results complement the qualitative examples in the main paper and cover
three representative diffusion backbones: SANA-1.5-1.6B, FLUX.1-dev, and
Z-Image-Turbo.
For each model, the dashed line separates the BF16 reference from the quantized
methods.
All quantized methods are evaluated under the W3A3 setting.

\begin{figure*}[t]
\centering
\scriptsize
\setlength{\tabcolsep}{2pt}
\renewcommand{\arraystretch}{0.95}
\setlength{\qualimgsize}{0.16\textwidth}

\newcommand{\imgvsep}{
  \begin{tikzpicture}[baseline={(0,0)}]
    \draw[densely dashed, line width=0.4pt] (0,0) -- (0,\qualimgsize);
  \end{tikzpicture}
}

\newcommand{\promptline}[1]{%
\multicolumn{7}{c}{%
\parbox{0.92\textwidth}{%
\centering
\setlength{\baselineskip}{0.85\baselineskip}%
\textit{Prompt: #1}%
}%
}%
}

\begin{adjustbox}{width=\textwidth}
\begin{tabular}{c@{\hspace{0.5mm}}c c c c c c}

\textbf{BF16} & &
\textbf{OmniQuant} &
\textbf{SVDQuant} &
\textbf{ConvRot} &
\textbf{PermuQuant} &
\textbf{PermuQuant-H} \\

\includegraphics[width=\qualimgsize]{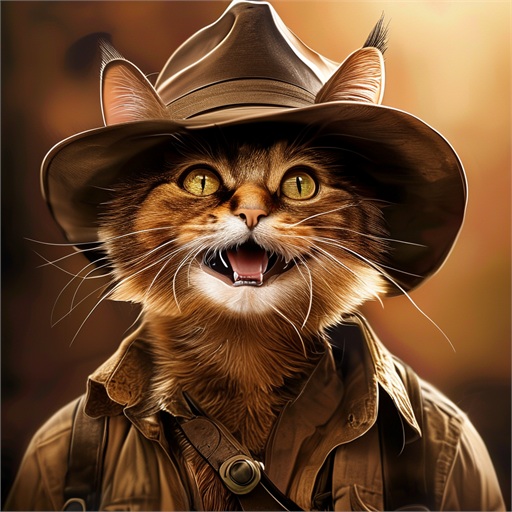} &
\imgvsep &
\includegraphics[width=\qualimgsize]{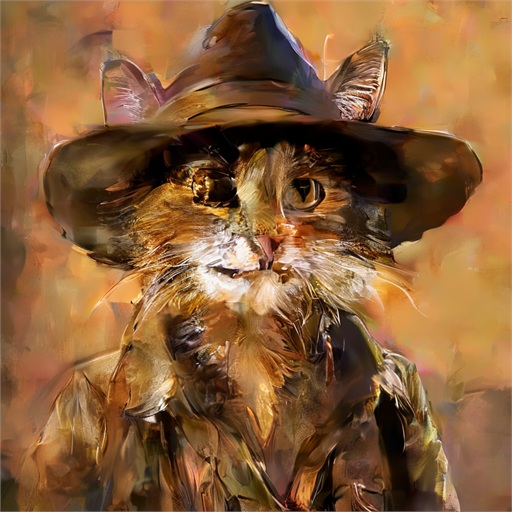} &
\includegraphics[width=\qualimgsize]{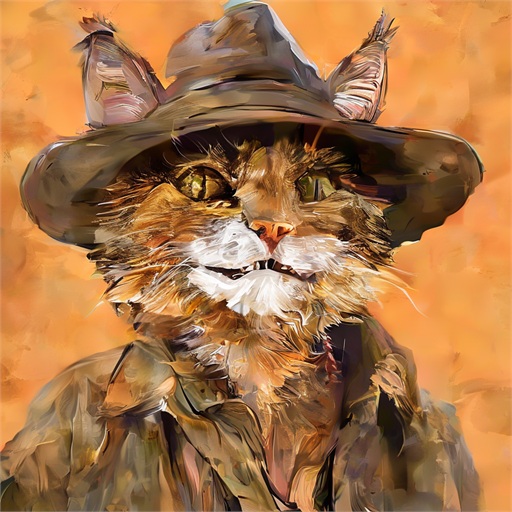} &
\includegraphics[width=\qualimgsize]{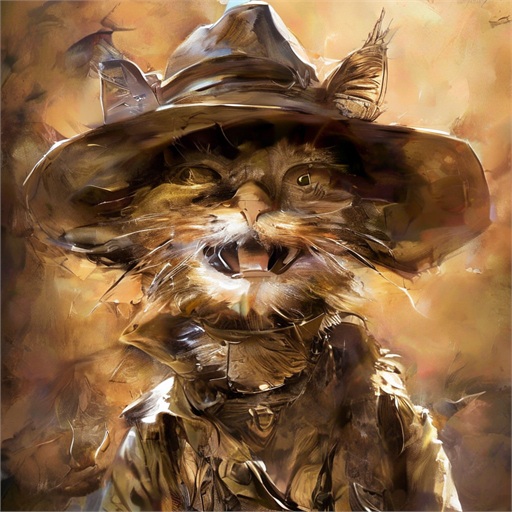} &
\includegraphics[width=\qualimgsize]{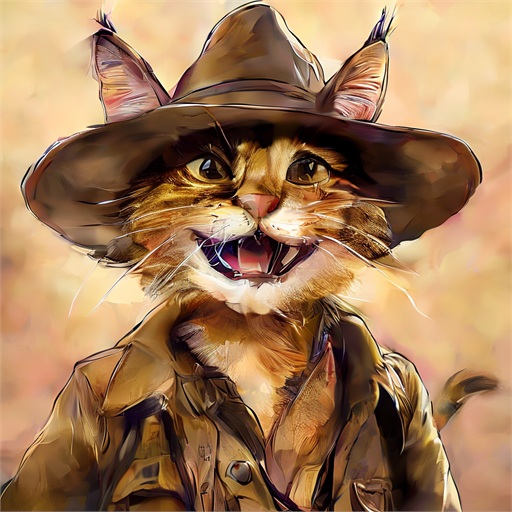} &
\includegraphics[width=\qualimgsize]{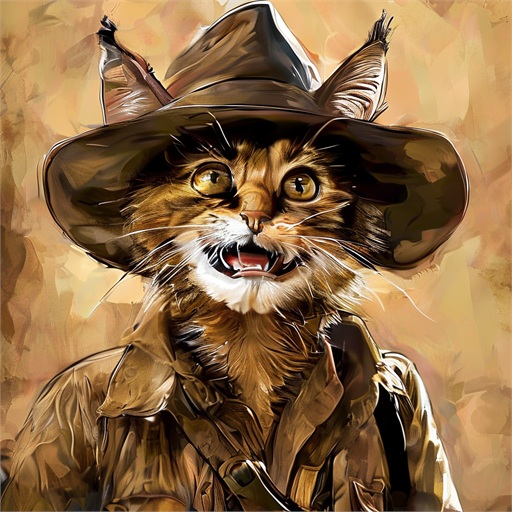} \\[-0.2em]

\promptline{indiana jones themed cat, wearing hat, happy and adventurous}\\[0.4em]

\includegraphics[width=\qualimgsize]{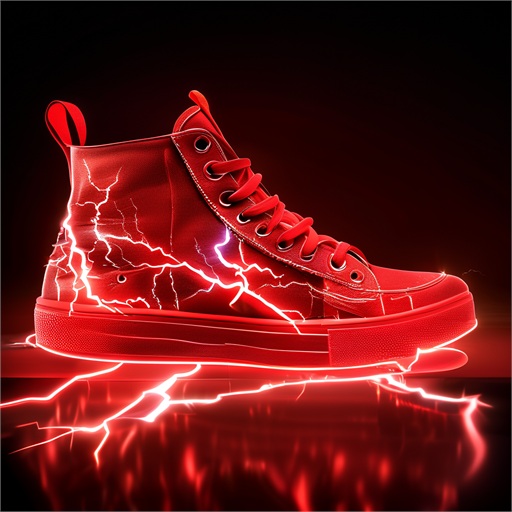} &
\imgvsep &
\includegraphics[width=\qualimgsize]{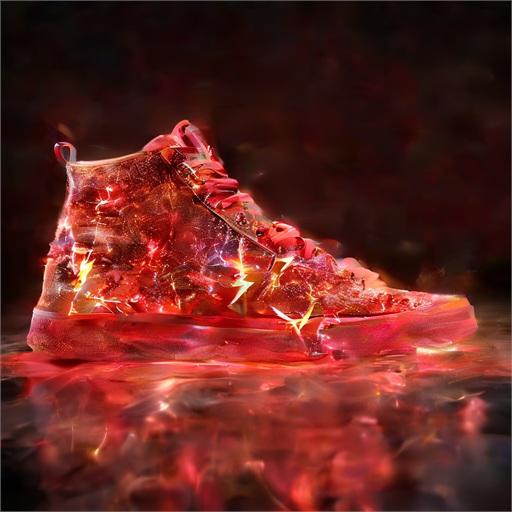} &
\includegraphics[width=\qualimgsize]{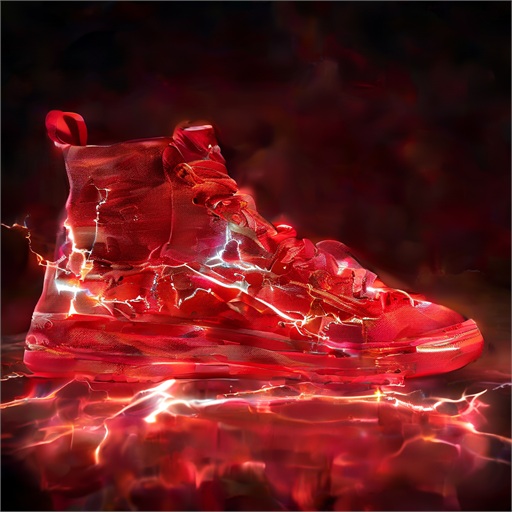} &
\includegraphics[width=\qualimgsize]{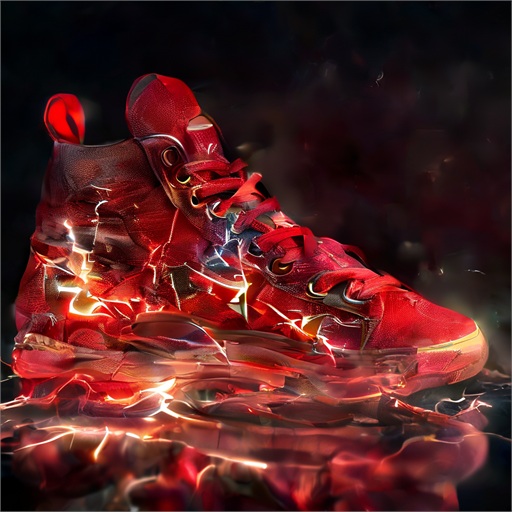} &
\includegraphics[width=\qualimgsize]{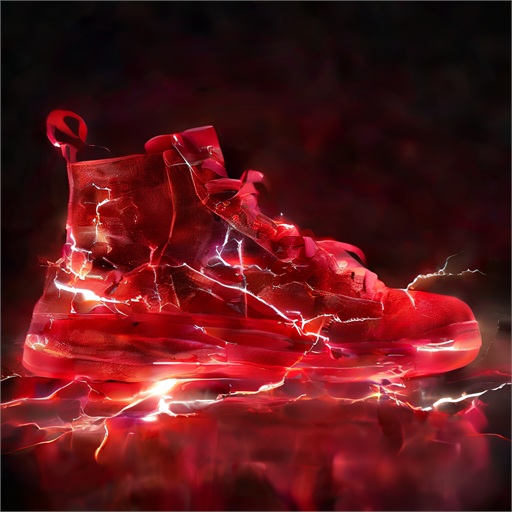} &
\includegraphics[width=\qualimgsize]{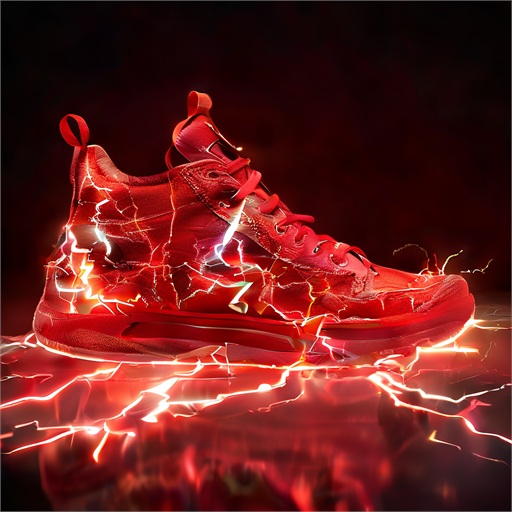} \\[-0.2em]

\promptline{red sneakers with lightning bolts on side Photorealistic, 8k }\\[0.4em]

\includegraphics[width=\qualimgsize]{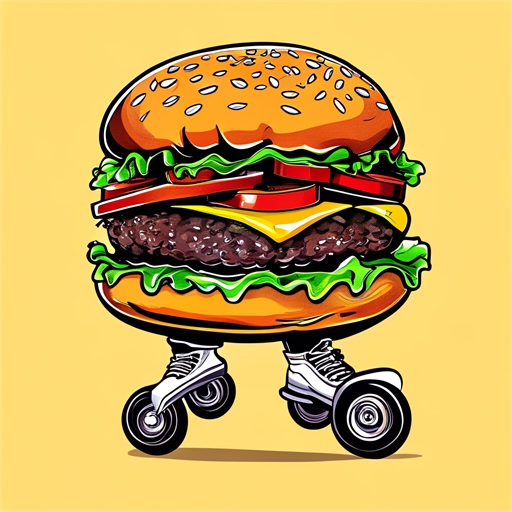} &
\imgvsep &
\includegraphics[width=\qualimgsize]{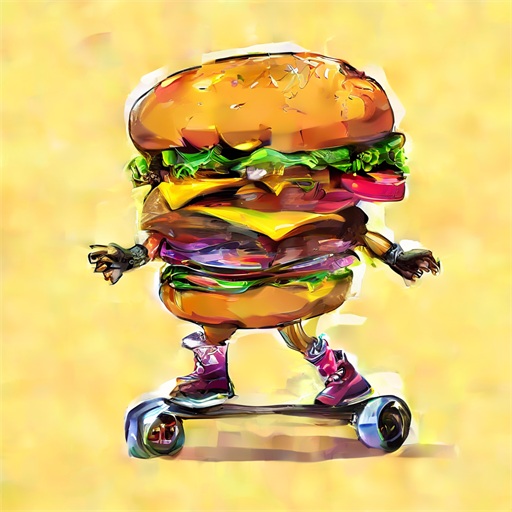} &
\includegraphics[width=\qualimgsize]{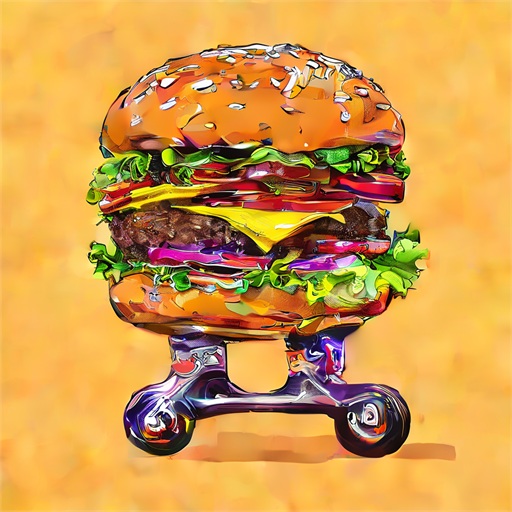} &
\includegraphics[width=\qualimgsize]{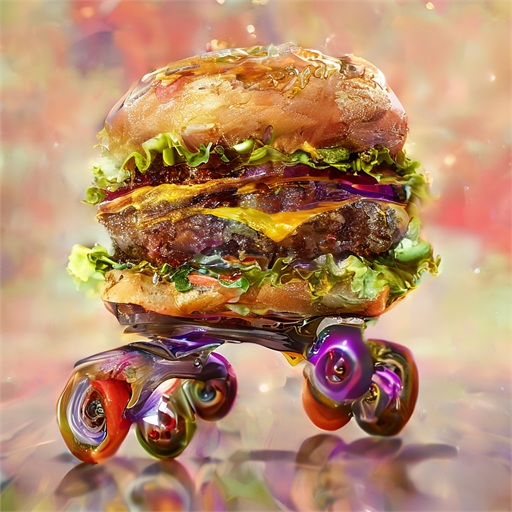} &
\includegraphics[width=\qualimgsize]{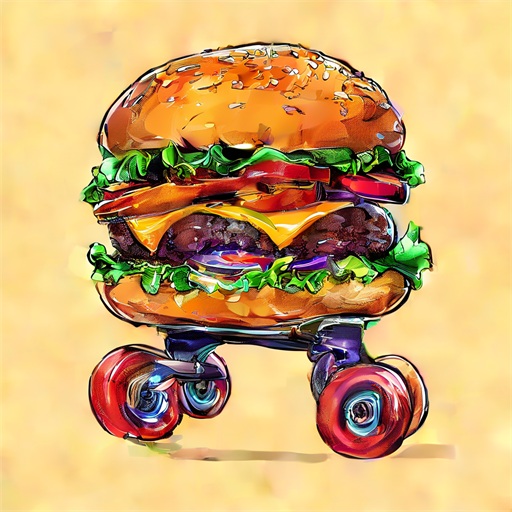} &
\includegraphics[width=\qualimgsize]{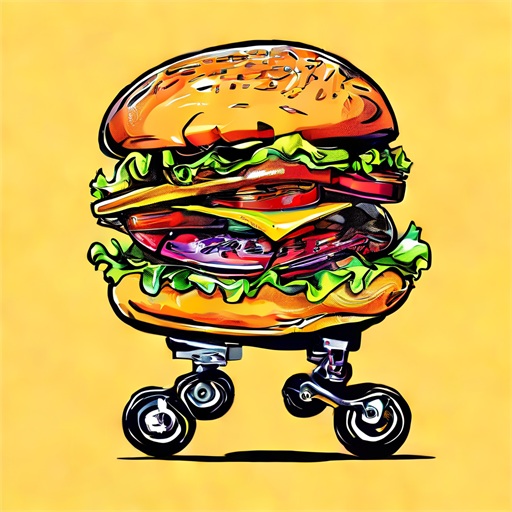} \\[-0.2em]

\promptline{hamburger roller skating}\\[0.4em]

\includegraphics[width=\qualimgsize]{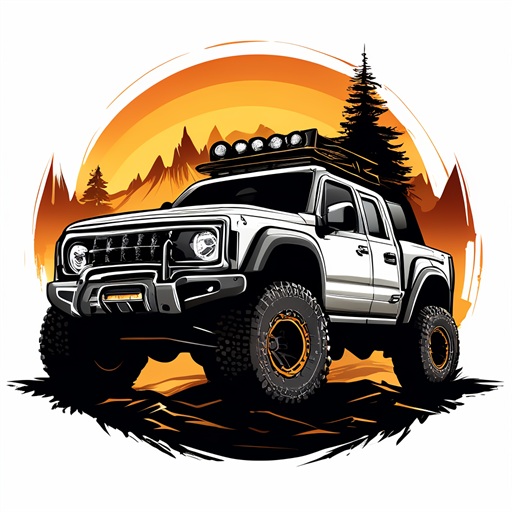} &
\imgvsep &
\includegraphics[width=\qualimgsize]{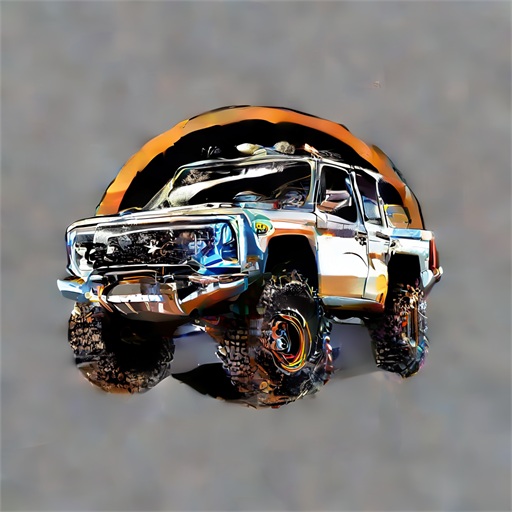} &
\includegraphics[width=\qualimgsize]{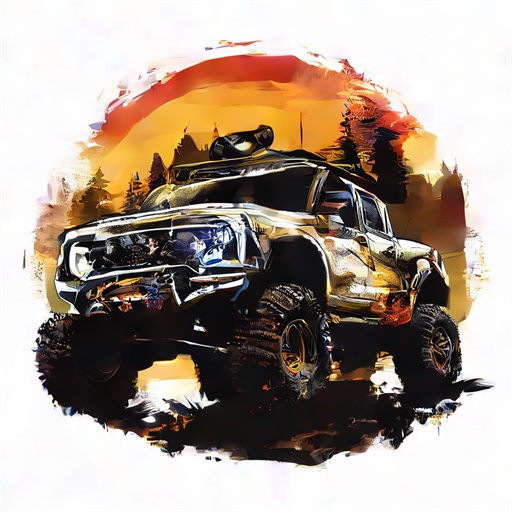} &
\includegraphics[width=\qualimgsize]{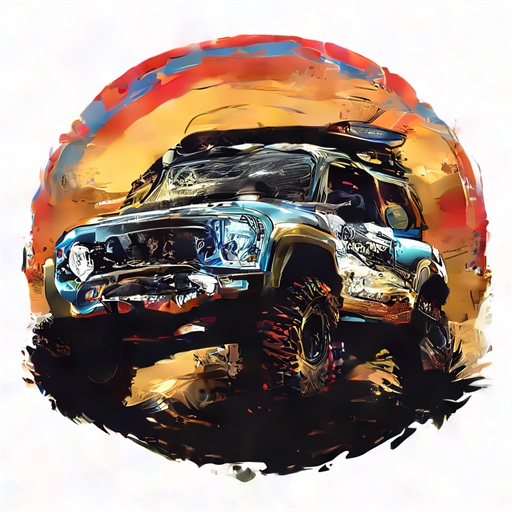} &
\includegraphics[width=\qualimgsize]{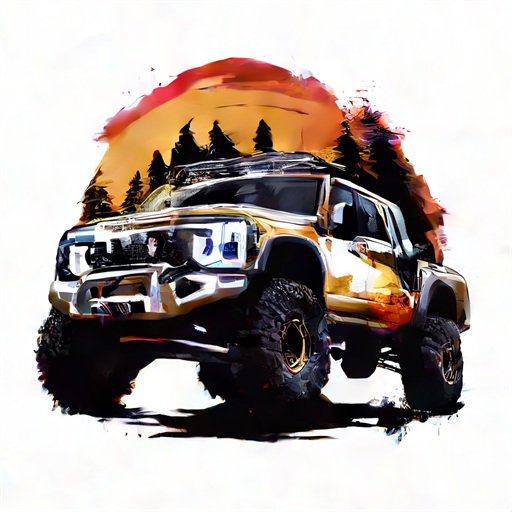} &
\includegraphics[width=\qualimgsize]{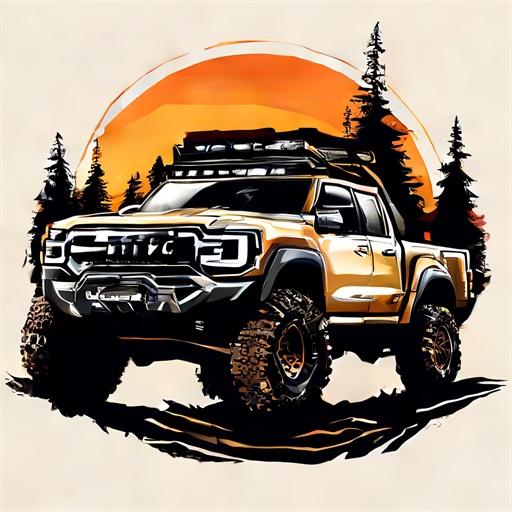} \\[-0.2em]

\promptline{pickup offroad logo}\\[-0.2em]

\end{tabular}
\end{adjustbox}

\vspace{-1mm}
\captionof{figure}{
Qualitative comparison on SANA-1.5-1.6B. The dashed line separates BF16 from the quantized methods. All quantized methods are evaluated under the W3A3 setting.
}
\label{fig:sana_additional_vis}

\vspace{1.5mm}

\begin{adjustbox}{width=\textwidth}
\begin{tabular}{c@{\hspace{0.5mm}}c c c c c c}

\textbf{BF16} & &
\textbf{OmniQuant} &
\textbf{SVDQuant} &
\textbf{ConvRot} &
\textbf{PermuQuant} &
\textbf{PermuQuant-H} \\

\includegraphics[width=\qualimgsize]{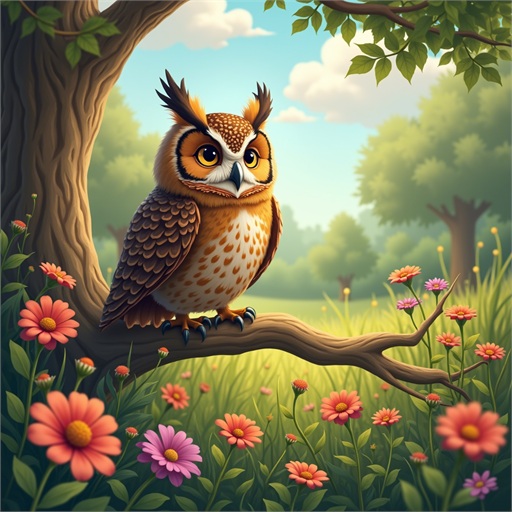} &
\imgvsep &
\includegraphics[width=\qualimgsize]{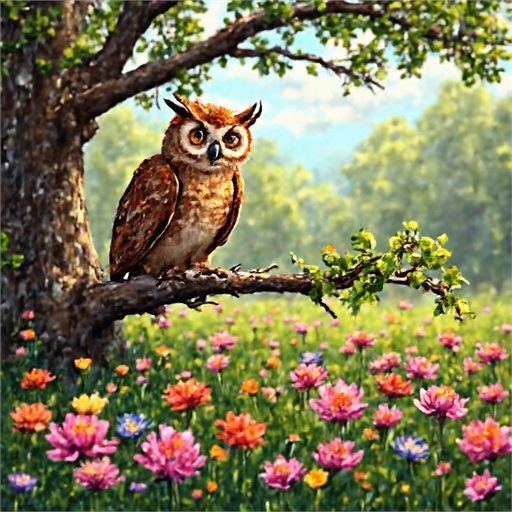} &
\includegraphics[width=\qualimgsize]{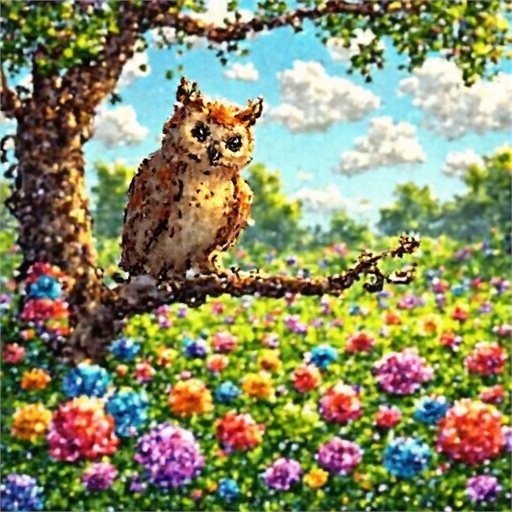} &
\includegraphics[width=\qualimgsize]{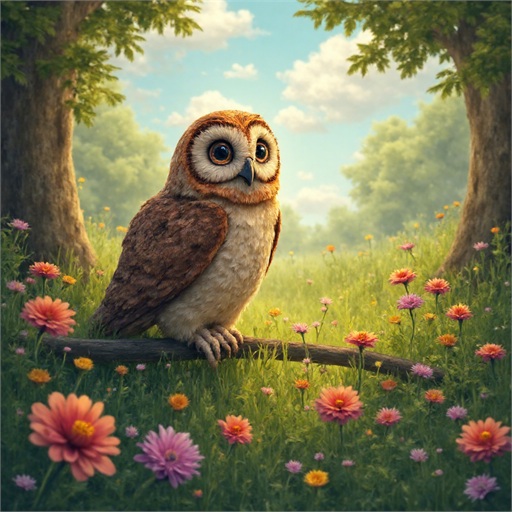} &
\includegraphics[width=\qualimgsize]{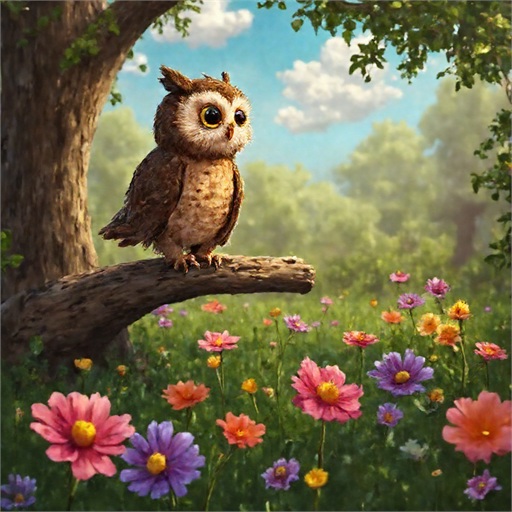} &
\includegraphics[width=\qualimgsize]{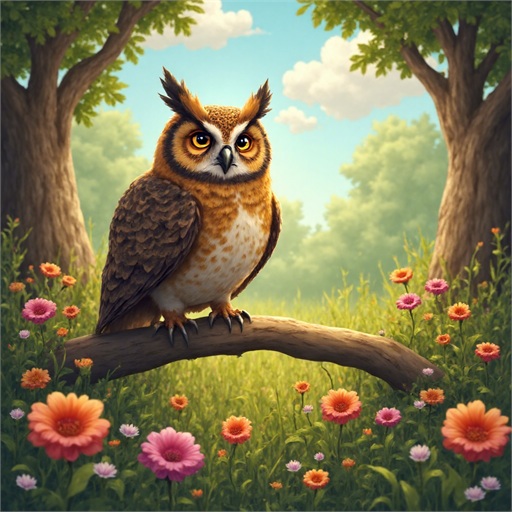} \\[-0.2em]

\promptline{realistic, an owl on a tree branch in a meadow surrounded by colorful flowers}\\[0.4em]

\includegraphics[width=\qualimgsize]{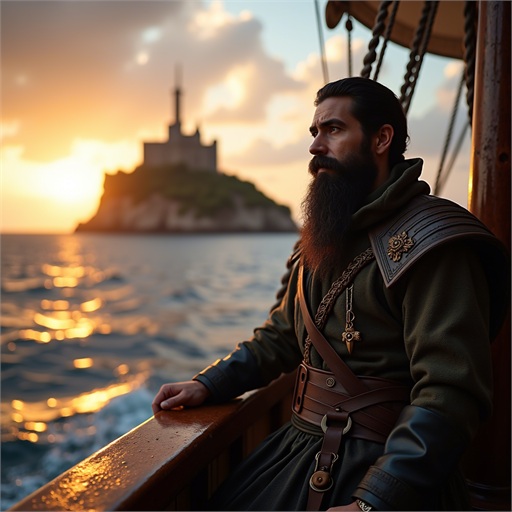} &
\imgvsep &
\includegraphics[width=\qualimgsize]{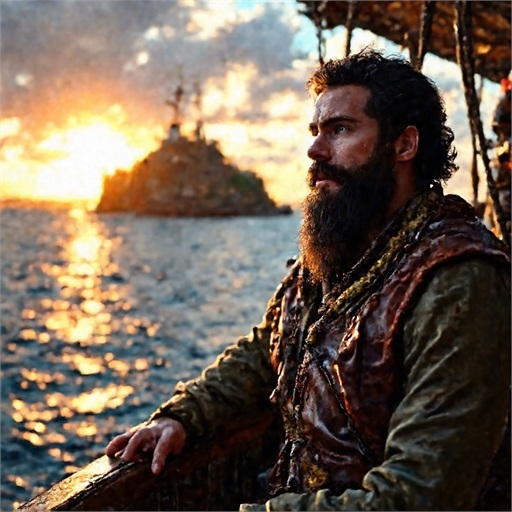} &
\includegraphics[width=\qualimgsize]{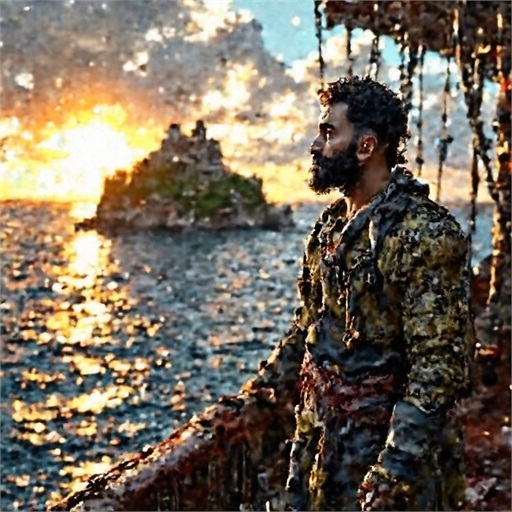} &
\includegraphics[width=\qualimgsize]{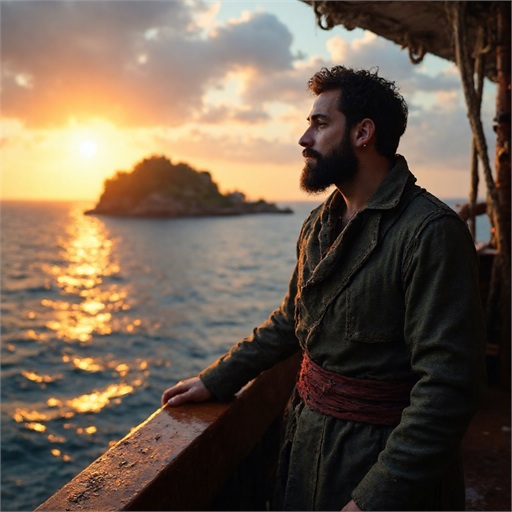} &
\includegraphics[width=\qualimgsize]{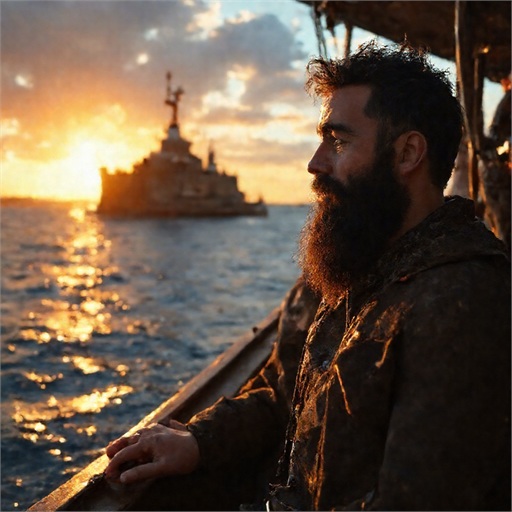} &
\includegraphics[width=\qualimgsize]{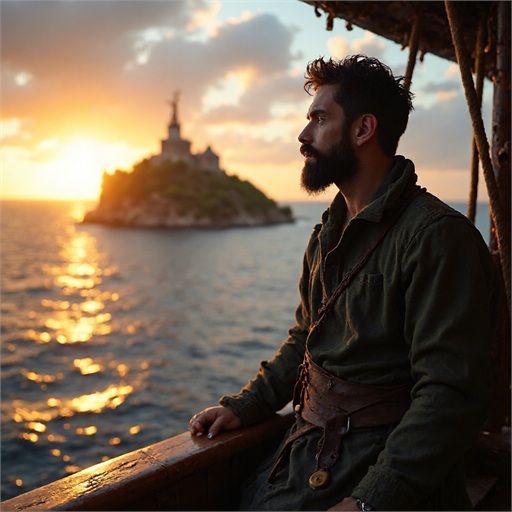} \\[-0.2em]

\promptline{strong man black hair beard on ship looks at an island in the distance sunbeams romantic}\\[0.4em]

\includegraphics[width=\qualimgsize]{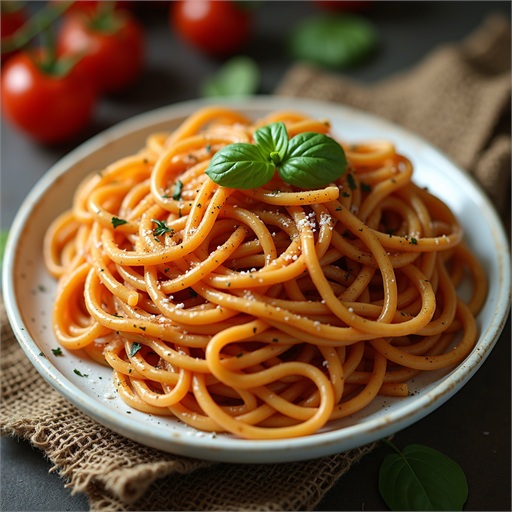} &
\imgvsep &
\includegraphics[width=\qualimgsize]{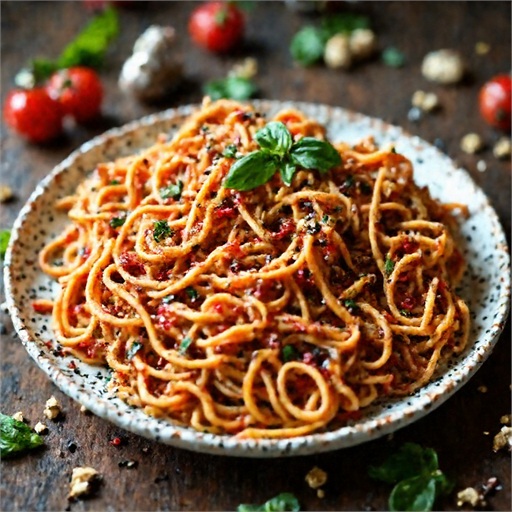} &
\includegraphics[width=\qualimgsize]{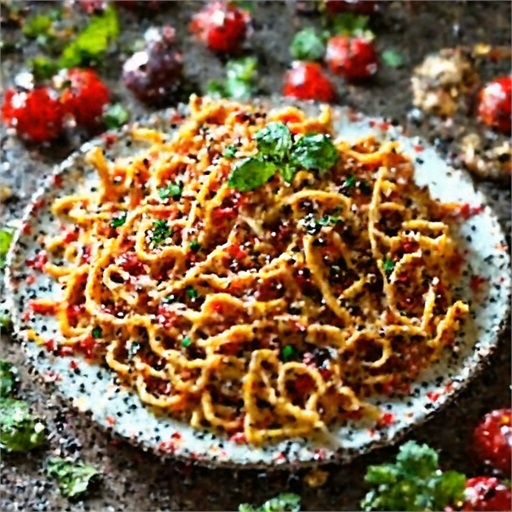} &
\includegraphics[width=\qualimgsize]{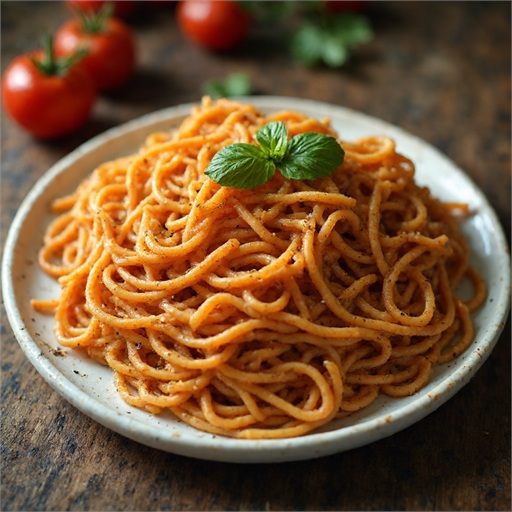} &
\includegraphics[width=\qualimgsize]{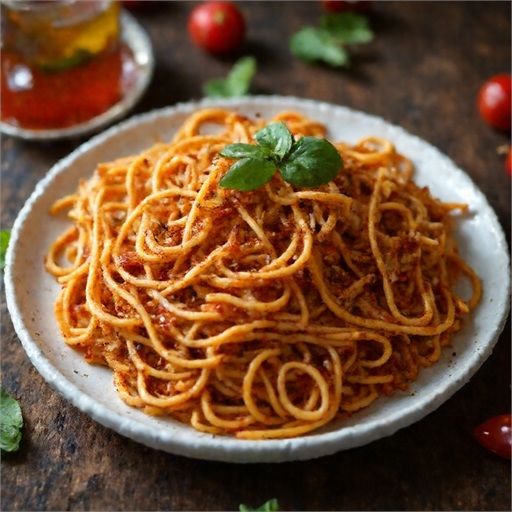} &
\includegraphics[width=\qualimgsize]{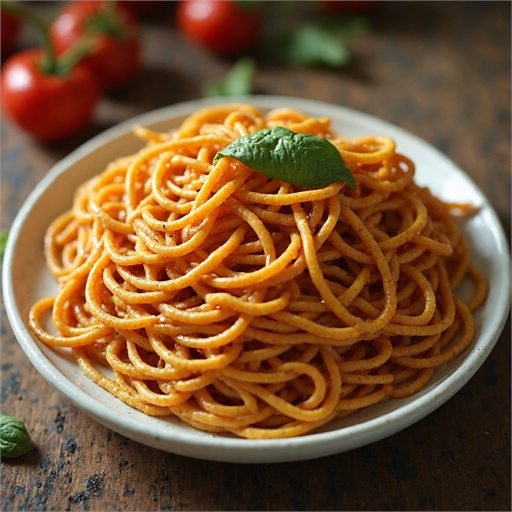} \\[-0.2em]

\promptline{spaghetti, photorealistic, very detailed }\\[0.4em]

\includegraphics[width=\qualimgsize]{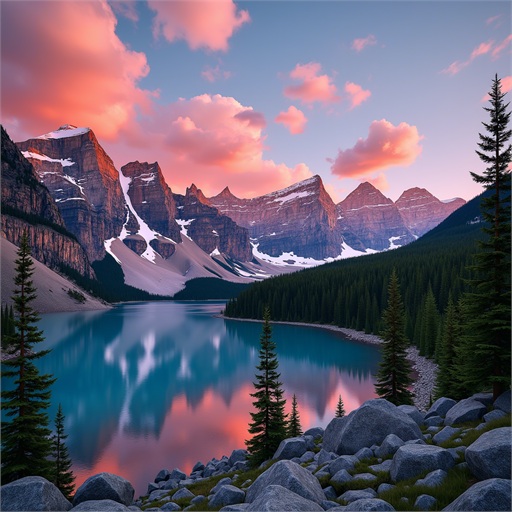} &
\imgvsep &
\includegraphics[width=\qualimgsize]{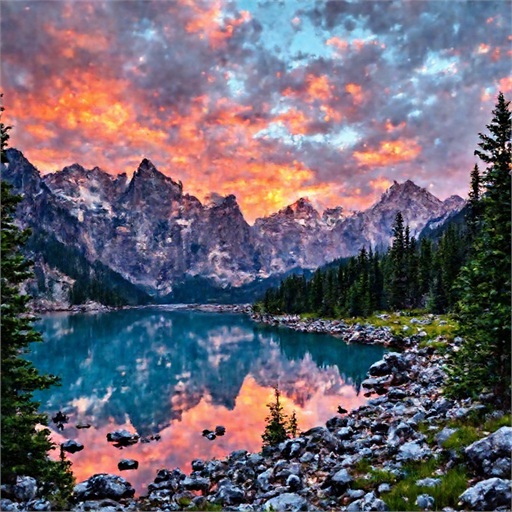} &
\includegraphics[width=\qualimgsize]{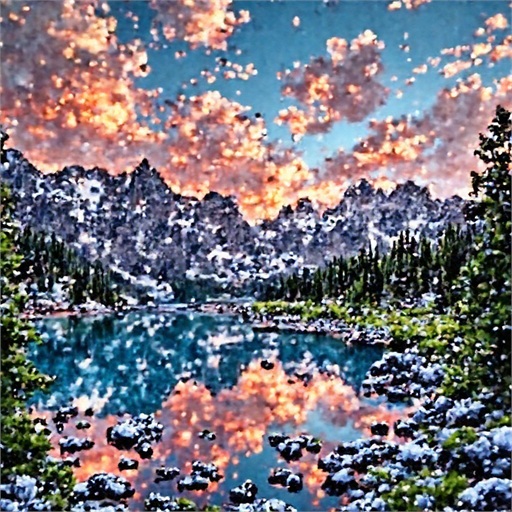} &
\includegraphics[width=\qualimgsize]{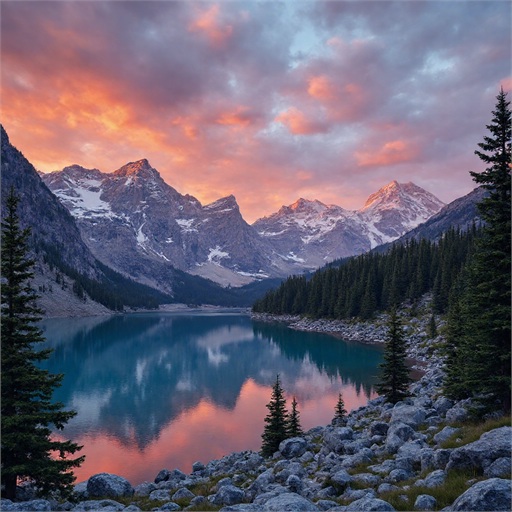} &
\includegraphics[width=\qualimgsize]{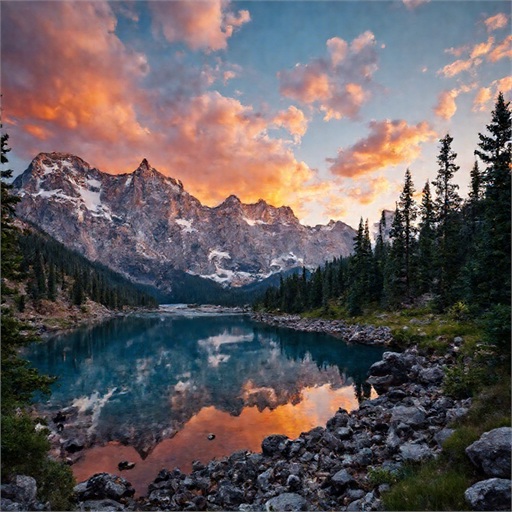} &
\includegraphics[width=\qualimgsize]{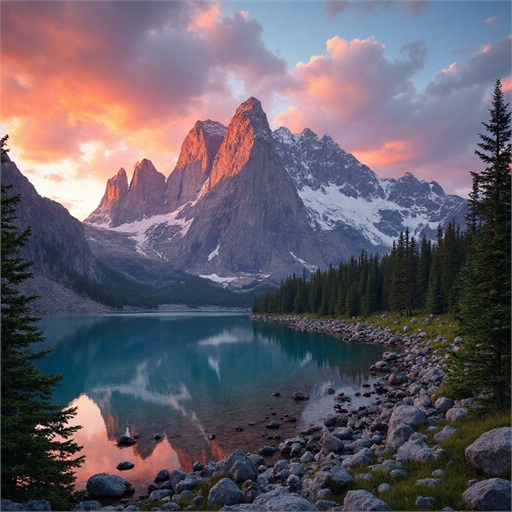} \\[-0.2em]

\promptline{canadian rockies Jasper, colorful, colorful sky, 8k, photorealistic, hyper  detailed, intricate details, Shot on DIGITAL CINEMA V  RAPTOR XL 8K VV Cinema Camera, Shutter Speed 1 800, raw, super resolution, tone mapping, ray tracing, Megapixels}\\[-0.2em]

\end{tabular}
\end{adjustbox}

\vspace{-1mm}
\captionof{figure}{
Qualitative comparison on FLUX.1-dev. The dashed line separates BF16 from the quantized methods. All quantized methods are evaluated under the W3A3 setting.
}
\label{fig:flux_additional_vis}

\vspace{-5mm}
\end{figure*}

\begin{figure*}[t]
\centering
\scriptsize
\setlength{\tabcolsep}{2pt}
\renewcommand{\arraystretch}{0.95}

\setlength{\qualimgsize}{0.16\textwidth}

\newcommand{\imgvsep}{%
  \begin{tikzpicture}[baseline={(0,0)}]
    \draw[densely dashed, line width=0.4pt] (0,0) -- (0,\qualimgsize);
  \end{tikzpicture}%
}

\newcommand{\promptline}[1]{%
\multicolumn{7}{c}{%
\parbox{0.92\textwidth}{%
\centering
\setlength{\baselineskip}{0.85\baselineskip}%
\textit{Prompt: #1}%
}%
}%
}

\begin{adjustbox}{width=\textwidth}
\begin{tabular}{c@{\hspace{0.5mm}}c c c c c c}

\textbf{BF16} & &
\textbf{OmniQuant} &
\textbf{SVDQuant} &
\textbf{ConvRot} &
\textbf{PermuQuant} &
\textbf{PermuQuant-H} \\

\includegraphics[width=\qualimgsize]{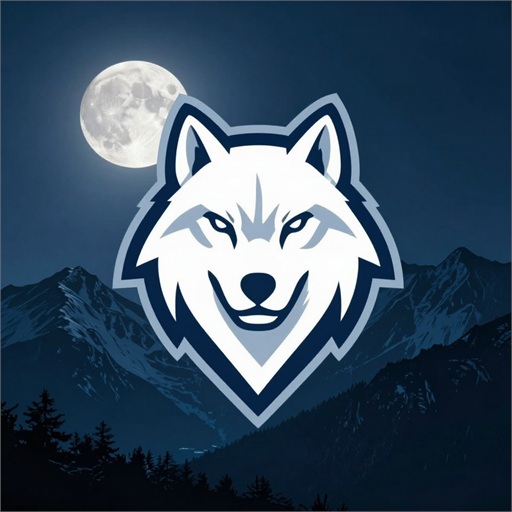} &
\imgvsep &
\includegraphics[width=\qualimgsize]{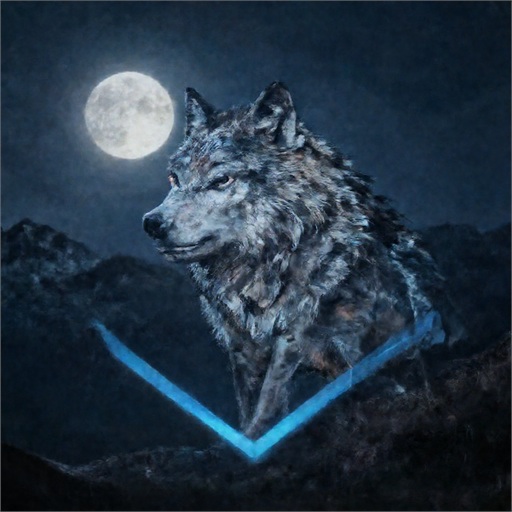} &
\includegraphics[width=\qualimgsize]{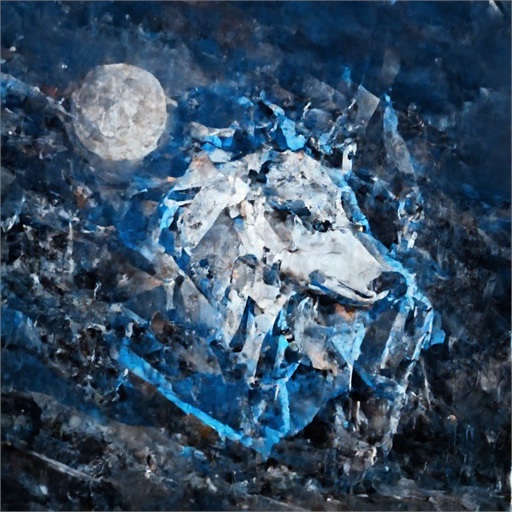} &
\includegraphics[width=\qualimgsize]{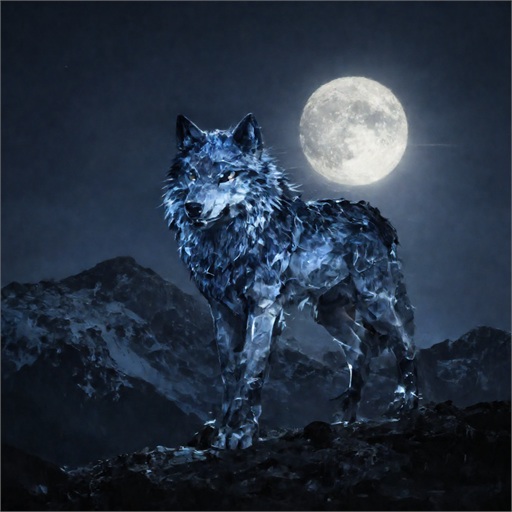} &
\includegraphics[width=\qualimgsize]{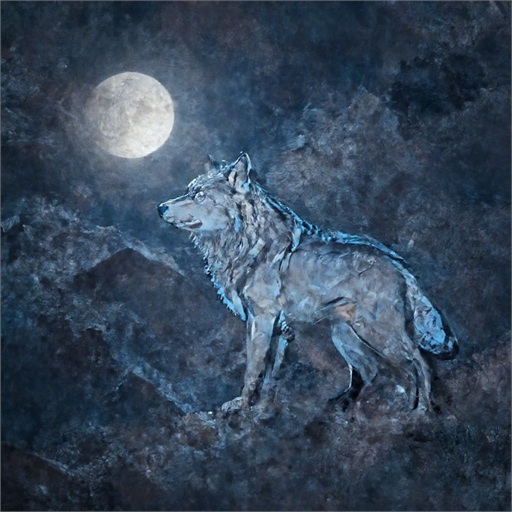} &
\includegraphics[width=\qualimgsize]{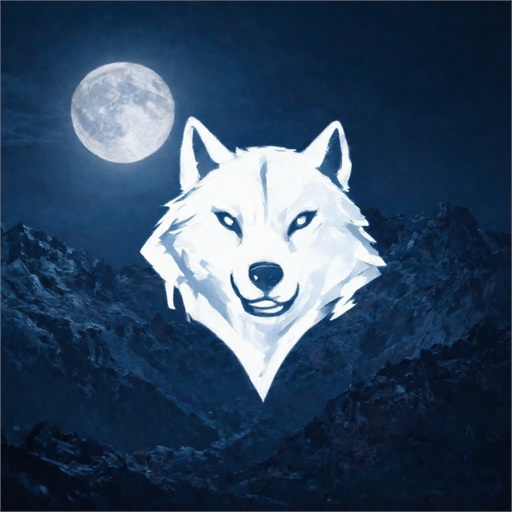} \\[-0.2em]

\promptline{a logo of wolf, blue light shadow, ultra realistic, 4k hd, full moon, mountains}\\[0.4em]

\includegraphics[width=\qualimgsize]{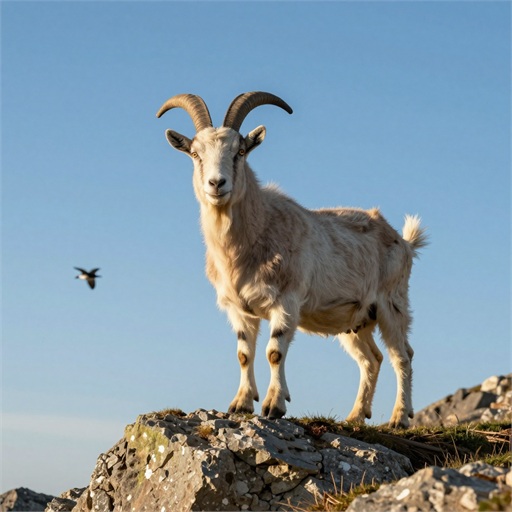} &
\imgvsep &
\includegraphics[width=\qualimgsize]{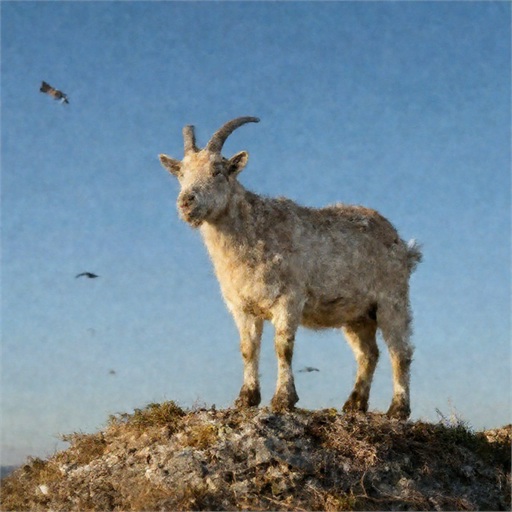} &
\includegraphics[width=\qualimgsize]{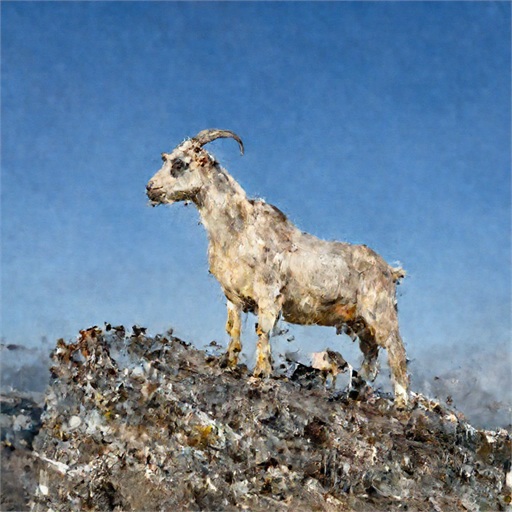} &
\includegraphics[width=\qualimgsize]{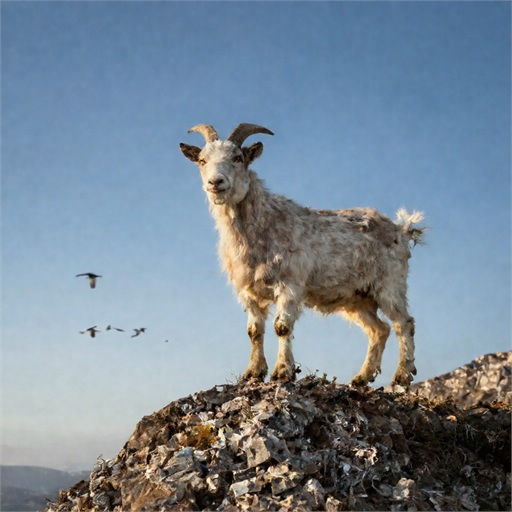} &
\includegraphics[width=\qualimgsize]{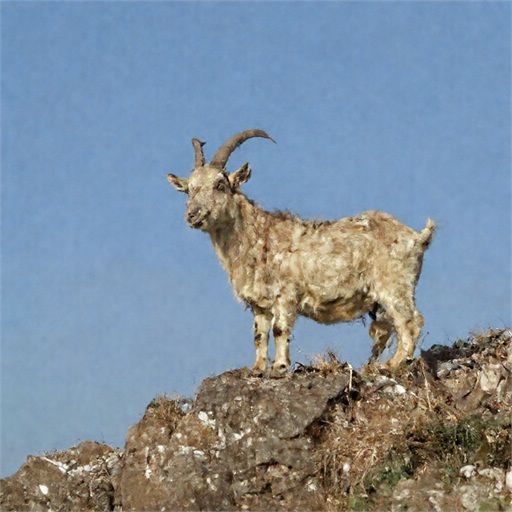} &
\includegraphics[width=\qualimgsize]{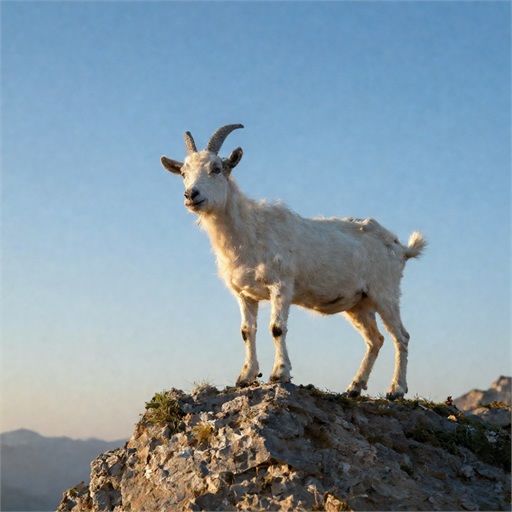} \\[-0.2em]

\promptline{a big strong goat on the top of the mountain, afternoon time, birds in the background, sun and the blue sky}\\[0.4em]

\includegraphics[width=\qualimgsize]{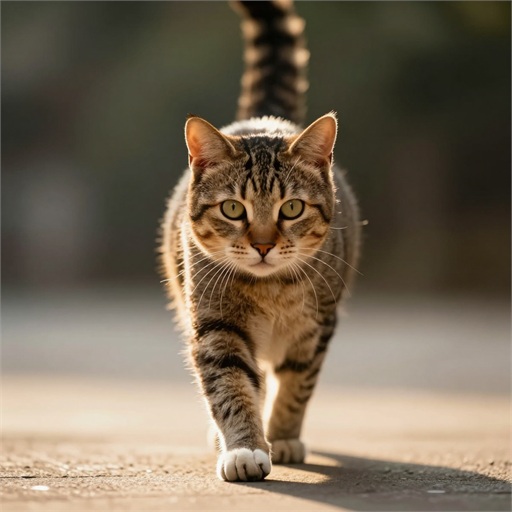} &
\imgvsep &
\includegraphics[width=\qualimgsize]{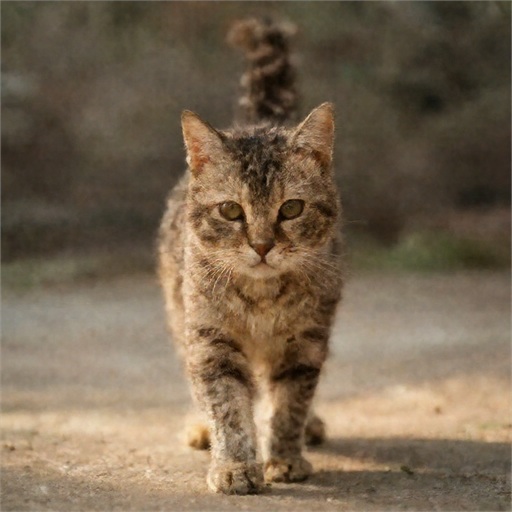} &
\includegraphics[width=\qualimgsize]{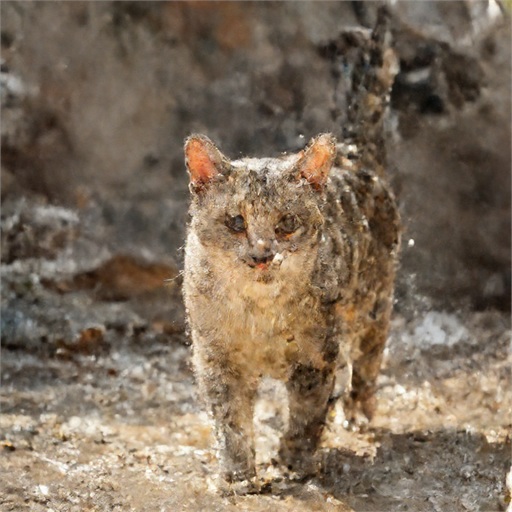} &
\includegraphics[width=\qualimgsize]{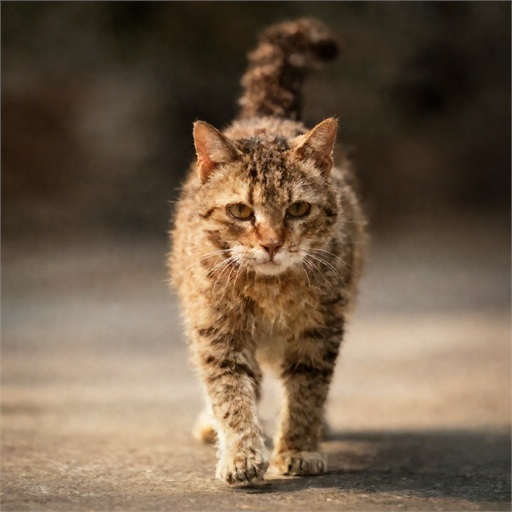} &
\includegraphics[width=\qualimgsize]{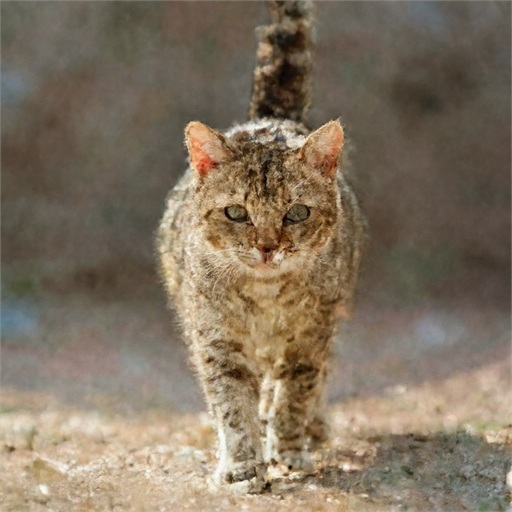} &
\includegraphics[width=\qualimgsize]{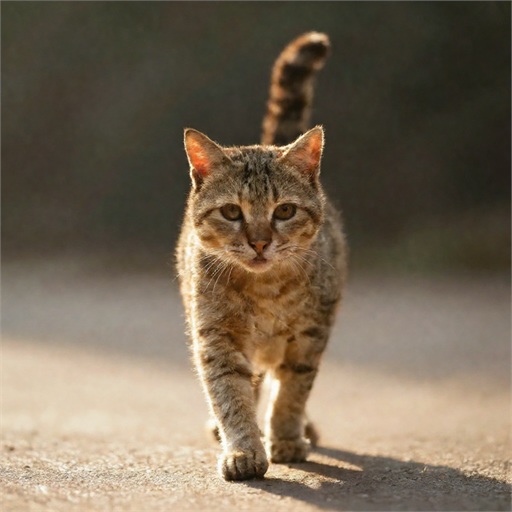} \\[-0.2em]

\promptline{the most amazing image of a cat walking toward you, backlit warm colors}\\[0.4em]

\includegraphics[width=\qualimgsize]{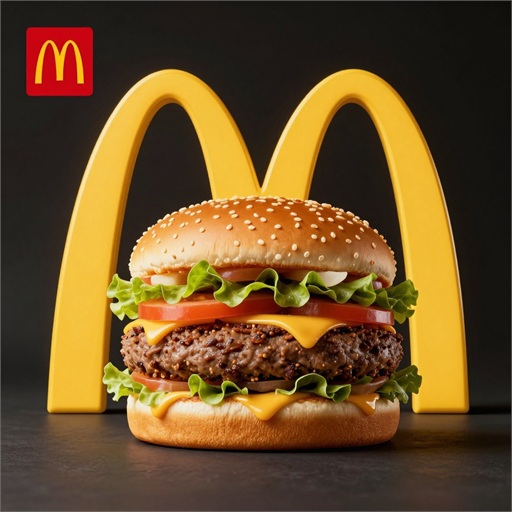} &
\imgvsep &
\includegraphics[width=\qualimgsize]{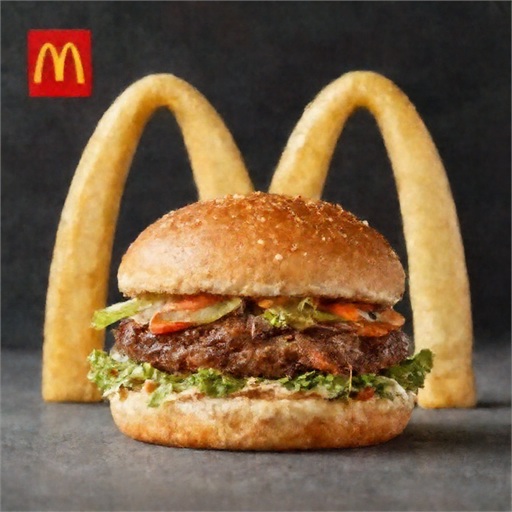} &
\includegraphics[width=\qualimgsize]{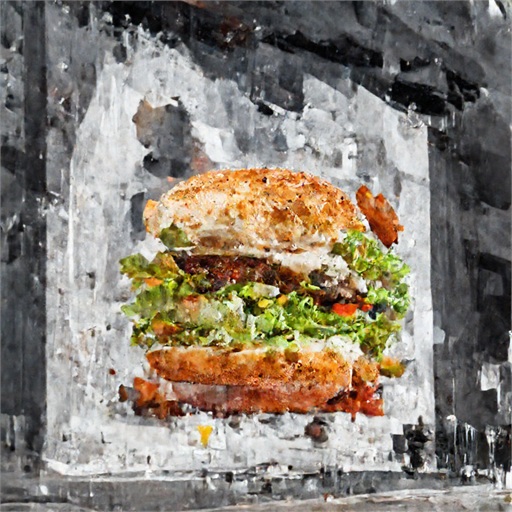} &
\includegraphics[width=\qualimgsize]{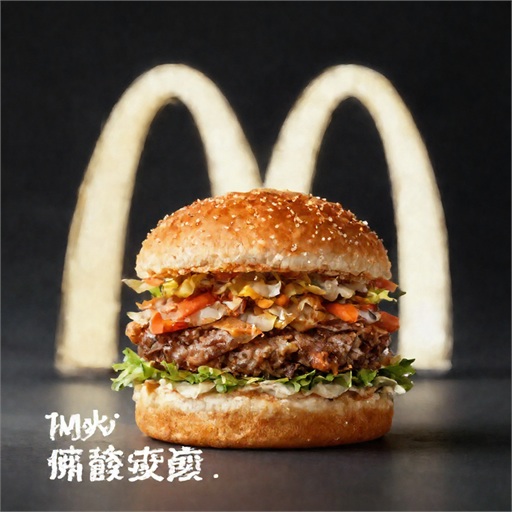} &
\includegraphics[width=\qualimgsize]{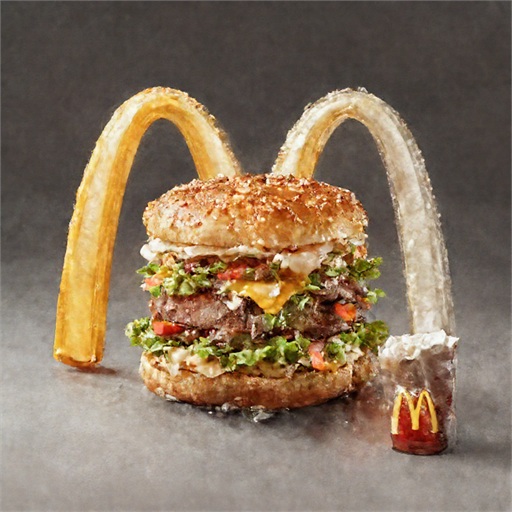} &
\includegraphics[width=\qualimgsize]{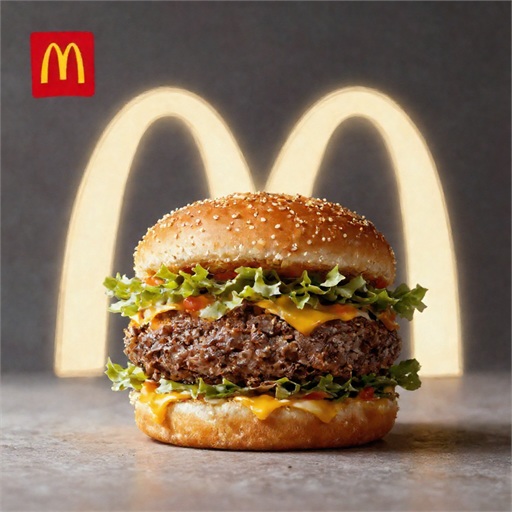} \\[-0.2em]

\promptline{a modern digital advertisement for a brand new never before seen hamburger made from an unknown material at McDonalds, clean modern design, advertisement, branding and marketing, modern photography}\\[-0.2em]

\end{tabular}
\end{adjustbox}

\vspace{-1mm}
\caption{
Qualitative comparison on Z-Image-Turbo. The dashed line separates BF16 from the quantized methods. All quantized methods are evaluated under the W3A3 setting.
}
\label{fig:zimage_additional_vis}
\vspace{-5mm}
\end{figure*}

\end{document}